\def\eqref#1{equation~\ref{#1}}
\def\1{\bm{1}}
\DeclareMathAlphabet{\mathsfit}{\encodingdefault}{\sfdefault}{m}{sl}
\SetMathAlphabet{\mathsfit}{bold}{\encodingdefault}{\sfdefault}{bx}{n}
\newcommand{\E}{\mathbb{E}}
\newcommand{\R}{\mathbb{R}}
\DeclareMathOperator*{\argmax}{arg\,max}
\DeclareMathOperator*{\argmin}{arg\,min}
\def\argmax{\mathop{\rm arg\,max}}
\def\argmin{\mathop{\rm arg\,min}}
\def\R{\mathbb{R}}
\theoremstyle{plain}
\newtheorem{theorem}{Theorem}[section]
\newtheorem{lemma}[theorem]{Lemma}
\newtheorem{corollary}[theorem]{Corollary}
\theoremstyle{definition}
\newtheorem{definition}[theorem]{Definition}
\newtheorem{assumption}[theorem]{Assumption}
\theoremstyle{remark}
\newtheorem{remark}[theorem]{Remark}
\icmltitlerunning{Sharpness-aware Adaptive Second-order Optimization
with Stable Hessian Approximation}
\def\sassha{\textsc{Sassha}\xspace}
\def\msassha{\textsc{M-Sassha}\xspace}
\def\gsassha{\textsc{G-Sassha}\xspace}
\NewDocumentCommand{\alignednum}{m o}{%
  \makebox[6em][c]{%
    \makebox[0pt][r]{#1}%
    \makebox[0pt][l]{%
        \IfValueTF{#2}
          {\textcolor{black!60}{$_{\pm#2}$}}
          {\phantom{$_{\pm0.00}$}}
    }
  }%
}
\DeclareRobustCommand\onedot{\futurelet\@let@token\@onedot}
\def\@onedot{\ifx\@let@token.\else.\null\fi\xspace}
\def\eg{\emph{e.g}\onedot} 
\def\ie{\emph{i.e}\onedot}
\definecolor{lgray}{rgb}{.906, .902, .902}
\definecolor{pred}{RGB/cmyk}{200,1,80/.1,1,.30,.10}
\begin{document}

\twocolumn[
\icmltitle{\sassha: Sharpness-aware Adaptive Second-order Optimization\\with Stable Hessian Approximation}



\icmlsetsymbol{equal}{*}

\begin{icmlauthorlist}
\icmlauthor{Dahun Shin}{equal,yyy}
\icmlauthor{Dongyeop Lee}{equal,yyy}
\icmlauthor{Jinseok Chung}{yyy}
\icmlauthor{Namhoon Lee}{yyy}
\end{icmlauthorlist}

\icmlaffiliation{yyy}{POSTECH}

\icmlcorrespondingauthor{Dahun Shin}{dahunshin@postech.ac.kr}
\icmlcorrespondingauthor{Dongyeop Lee}{dylee23@postech.ac.kr}

\icmlkeywords{Machine Learning, ICML}

\vskip 0.3in
]



\printAffiliationsAndNotice{\icmlEqualContribution} 

\begin{abstract}

Approximate second-order optimization methods often exhibit poorer generalization compared to first-order approaches. 
In this work, we look into this issue through the lens of the loss landscape and find that existing second-order methods tend to converge to sharper minima compared to SGD.
In response, we propose \sassha, a novel second-order method designed to enhance generalization by explicitly reducing sharpness of the solution, while stabilizing the computation of approximate Hessians along the optimization trajectory.
In fact, this sharpness minimization scheme is crafted also to accommodate lazy Hessian updates, so as to secure efficiency besides flatness.
To validate its effectiveness, we conduct a wide range of standard deep learning experiments where \sassha demonstrates its outstanding generalization performance that is comparable to, and mostly better than, other methods.
We provide a comprehensive set of analyses including convergence, robustness, stability, efficiency, and cost.
\end{abstract}

\section{Introduction}\label{sec:intro}

Approximate second-order methods have recently gained a surge of interest due to their potential to accelerate the large-scale training process with minimal computational and memory overhead \citep{adahessian, sophia, gupta2018shampoo}.
However, studies also suggest that these methods may undermine generalization, trying to identify underlying factors behind this loss \citep{wilson2017marginal, zhou2020towards, zou2022understanding}.
For instance, \citet{amari2021when} shows that preconditioning hinders achieving the optimal bias for population risk, and \citet{wadia2021whitening} points to negative effect of whitening data.

While the precise understanding is still under investigation, many studies have suggested a strong correlation between the flatness of minima and their generalization capabilities \citep{keskar2016large}, spurring the development of optimization techniques aimed at inducing flat minima \citep{chaudhari2017entropy, izmailov2018averaging, sam, antipgd_orvieto22a}.
Inspired by this, we raise an important question in this work:
what type of minima do second-order methods converge to, and is there any potential for improving their generalization performance based on that?

\begin{figure}
    \centering
    \resizebox{0.6\linewidth}{!}{
        \includegraphics[width=1.3\linewidth,trim={0 6em 0 0},clip]{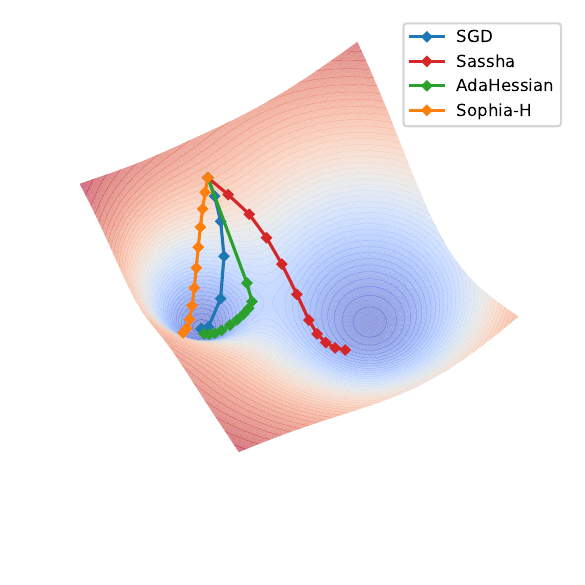}
    }
    \caption{
        Motivating toy example (a mixture of bivariate Gaussian densities).
        \sassha converges to a flat minimum unlike others.
    }
    \label{fig:motivate}    
\end{figure}

To answer these questions, we first measure the sharpness of different second-order methods using diverse metrics, suggesting that they converge to significantly sharper minima compared to stochastic gradient descent (SGD).
Then, we propose \sassha---\underline{\textbf{S}}harpness-aware \underline{\textbf{A}}daptive \underline{\textbf{S}}econd-order optimization with \underline{\textbf{S}}table \underline{\textbf{H}}essian \underline{\textbf{A}}pproximation---designed to enhance the generalization of approximate second-order methods by explicitly reducing sharpness (see \cref{fig:motivate} for the basic results).

\sassha incorporates a sharpness minimization scheme similar to SAM \citep{sam} into the second-order optimization framework, in which the Hessian diagonal is estimated.
Such estimates, however, can become numerically unstable when enforcing the sharpness reduction process.
To increase stability while preserving the benefits of reduced sharpness, we make a series of well-engineered design choices based on principles studied in the literature.
This not only smoothly adjusts underestimated curvature, but also enables efficient reuse of previously computed Hessians, resulting in a stable and efficient algorithm.

We extensively evaluate the effectiveness of \sassha across diverse vision and natural language tasks.
Our results reveal that \sassha consistently achieves flatter minima and attains stronger generalization performance, all compared to existing practical second-order methods, and interestingly, to first-order methods including SGD, AdamW, and SAM.
Furthermore, we provide an array of additional analyses to comprehensively study \sassha including convergence, robustness, stability, efficiency, and cost.

\bgroup
\begin{table*}[!t]
    \centering
    \caption{
    Sharpness measurements of the solutions found by different optimizers and their generalization for ResNet-32 on CIFAR-100.
    Approximate second-order methods tend to yield highly sharp solutions and poor generalization compared to \sassha.
    We provide more results for other workloads in \cref{app:sharp} where the same trend holds.}
    \vskip 0.1in
    \resizebox{0.95\linewidth}{!}{
        \begin{tabular}{lcccccccc}
            \toprule
             &  \multicolumn{4}{c}{Sharpness} & \multicolumn{2}{c}{Generalization} \\
             \cmidrule(l{3pt}r{3pt}){2-5} \cmidrule(l{3pt}r{3pt}){6-7}
             & {$\lambda_{max}(H)$}              & {$\operatorname{tr}(H)_{\times 10^3}$} & $\delta L_\text{grad}$
             &  $\delta L_{\text{avg}\times 10^{-3}}$ & $L_{\text{val}}$ & $\text{Acc}_\text{val}$ (\%) \\ \midrule
             SGD        &
             \alignednum{265}[25.00]          &
             \alignednum{7.290}[0.300]        &
             \alignednum{0.703}[0.132]        &
             \alignednum{1.310}[1.030]        &
             \alignednum{1.260}[0.001]        &
             \alignednum{69.32}[0.19]         \\
             
             AdaHessian  &
             \alignednum{11992}[5779]         & 
             \alignednum{46.94}[17.60]        & 
             \alignednum{4.119}[1.136]        & 
             \alignednum{12.50}[6.080]        & 
             \alignednum{1.377}[0.070]        & 
             \alignednum{68.06}[0.22]         \\

             Sophia-H    & 
             \alignednum{22797}[10857]        &
             \alignednum{68.15}[20.19]        & 
             \alignednum{8.130}[3.082]        &
             \alignednum{19.19}[6.380]        & 
             \alignednum{1.463}[0.022]        & 
             \alignednum{67.76}[0.37]         \\

             Shampoo &
             \alignednum{436374}[9017]        &
             \alignednum{6823}[664.7]         &
             \alignednum{73.27}[12.51]        & 
             \alignednum{49307489}[56979794]  & 
             \alignednum{1.386}[0.010]        & 
             \alignednum{64.08}[0.46]         \\
             
             \midrule
             
            \rowcolor{green!20} \sassha       &
            \alignednum{107}[40.00]           &
            \alignednum{1.870}[0.650]         &
            \alignednum{0.238}[0.088]         &
            \alignednum{0.650}[0.860]         &
            \alignednum{0.961}[0.005]         & 
            \alignednum{72.14}[0.16]          \\
            \bottomrule
        \end{tabular}
    }
    \vskip 0.1in
    \label{tab:sharp}
\end{table*}
\egroup

\begin{figure*}[t!]
    \centering
        \begin{subfigure}[b]{0.245\textwidth}
            \centering
            \includegraphics[width=\textwidth, trim={0 2em 4em 4em}, clip]{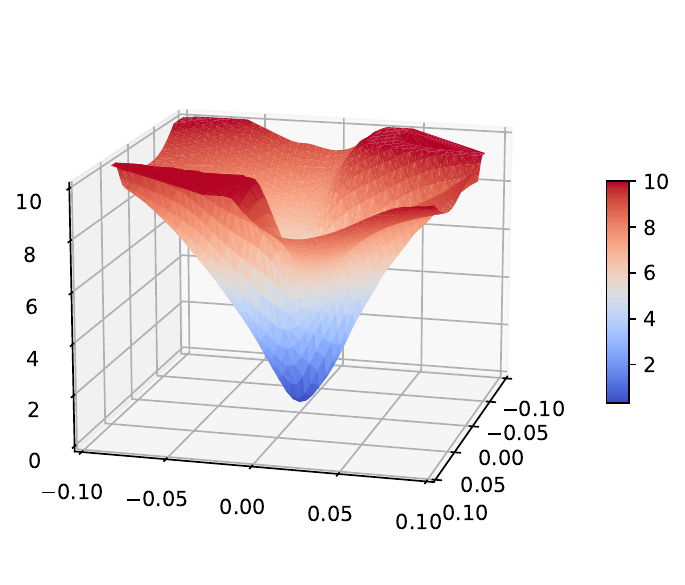}
            \caption{AdaHessian}
        \end{subfigure}
        \begin{subfigure}[b]{0.245\textwidth}
            \centering
            \includegraphics[width=\textwidth, trim={0 2em 4em 4em}, clip]{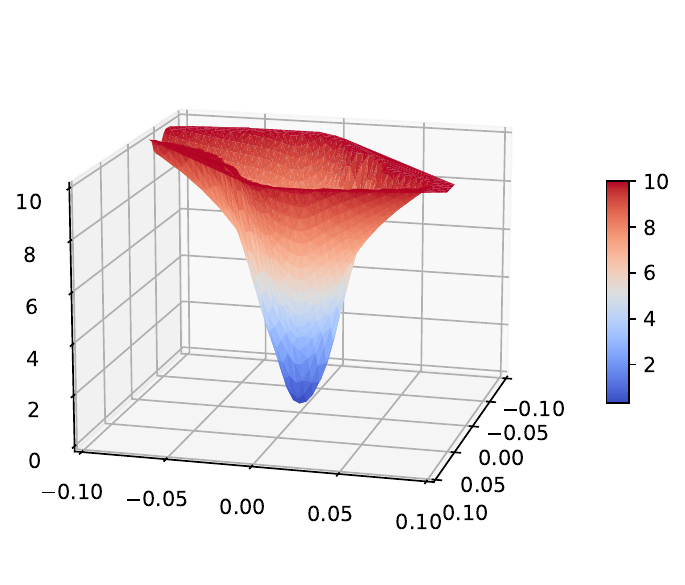}
            \caption{Sophia-H}
        \end{subfigure}
     \begin{subfigure}[b]{0.245\textwidth}
        \centering
        \includegraphics[width=\textwidth, trim={0 2em 4.6em 4em}, clip]{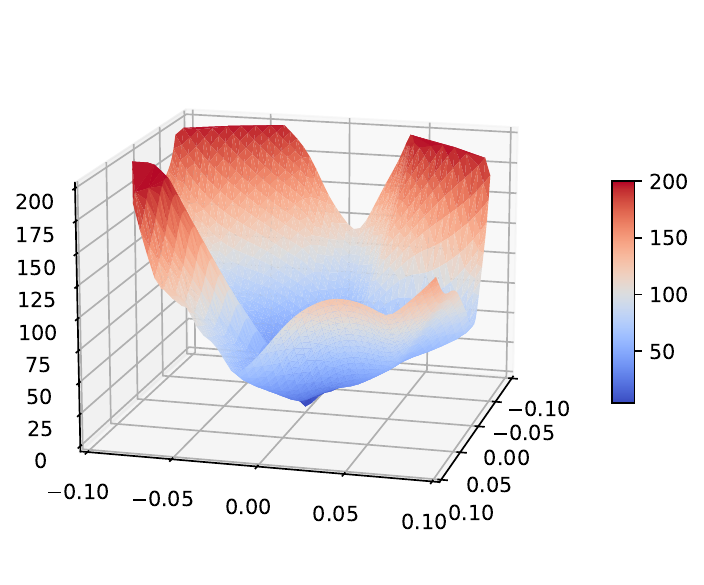}
        \caption{Shampoo}
     \end{subfigure}
    \begin{subfigure}[b]{0.245\textwidth}
        \centering
        \includegraphics[width=\textwidth, trim={0 2em 4em 4em}, clip]{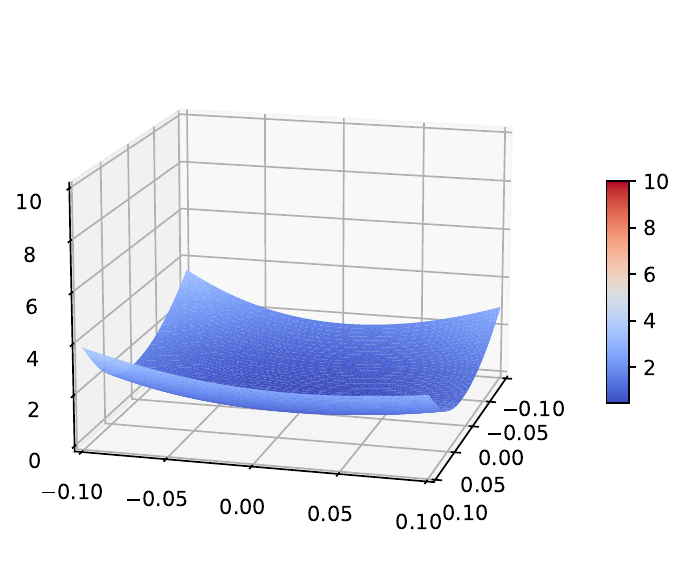}
        \caption{\sassha}
    \end{subfigure}
    \caption{Visualization of the found solutions along the directions of the dominant eigenvectors.}
    \label{fig:landscape}
    
\end{figure*}

\section{Related Works} \label{sec:related_works}
\paragraph{Second-order optimization for deep learning.}
First-order methods such as SGD are popular optimization methods for deep learning due to their low per-iteration cost and good generalization performance \citep{hardt2016train}.
However, these methods have two major drawbacks: slow convergence under ill-conditioned landscapes and high sensitivity to hyperparameter choices such as learning rate \citep{doi:10.1137/1.9781611976236}.
Adaptive methods \citep{duchi2011adaptive, hinton2012neural, kingma2014adam} propose using empirical Fisher-type preconditioning to alleviate these issues, though recent studies suggest their insufficiency to do so \citep{kunstner2019limitations}.
This has led to recent interest in developing approximate second-order methods such as Hessian-Free Inexact Newton methods \citep{martens2010deep, kiros2013training}, stochastic quasi-Newton methods \citep{byrd2016stochastic, gower2016stochastic}, Gauss-Newton methods \citep{schraudolph2002fast, botev2017practical}, natural gradient methods \citep{amari2000adaptive}, and Kronecker-factored approximations \citep{martens2015optimizing, goldfarb2020practical}.
However, these approaches still incur non-trivial memory and computational costs, or are difficult to parallelize, limiting their applicability to large-scale problems such as deep learning.
This has driven growing interest in developing more scalable and efficient second-order approaches, particularly through diagonal scaling methods \citep{bottou, adahessian, sophia}, to better accommodate large-scale deep learning scenarios.

\paragraph{Sharpness minimization for generalization.}
The relationship between the geometry of the loss landscape and the generalization ability of neural networks was first discussed in the work of \citet{NIPS1994_Hochreiter}, and the interest in this subject has persisted over time.
Expanding on this foundation, subsequent studies have explored the impact of flat regions on generalization both empirically and theoretically \citep{Hochreiter1997, keskar2016large, DR17, NIPS2017_Neyshabur, dinh17b, 2020Fantastic}. 
Motivated by this, various approaches have been proposed to achieve flat minima such as regularizing local entropy \citep{chaudhari2017entropy}, averaging model weights \citep{izmailov2018averaging}, explicitly regularizing sharpness by solving a min-max problem \citep{sam}, and injecting anti-correlated noise \citep{antipgd_orvieto22a}, to name a few.
In particular, the sharpness-aware minimization (SAM) \citep{sam} has attracted significant attention for its strong generalization performance across various domains \citep{chenvision, bahri2022sharpness, qu2022generalized} and its robustness to label noise \citep{baek2024why}.
Nevertheless, to our knowledge, the sharpness minimization scheme has not been studied to enable second-order methods to find flat minima as of yet.

\section{Practical Second-order Optimizers Converge to Sharp Minima} \label{sec:3_sharpness_measurement}

In this section, we investigate the sharpness of minima obtained by approximate second-order methods and their generalization properties.
We posit that poor generalization of second-order methods reported in the literature \citep{amari2021when, wadia2021whitening} can potentially be attributed to sharpness of their solutions.

We employ four metrics frequently used in the literature: maximum eigenvalue of the Hessian, the trace of Hessian, gradient-direction sharpness, and average sharpness \citep{Hochreiter1997, jastrzkebski2018relation, xie2020diffusion, saf, chenvision}.
The first two, denoted as $\lambda_{\max}(H)$ and $\operatorname{tr}(H)$, are often used as standard mathematical measures for the worst-case and the average curvature computed using the power iteration method and the Hutchinson trace estimation, respectively.
The other two measures, $\delta L_\text{grad}$ and $\delta L_\text{avg}$, assess sharpness based on the loss difference under perturbations.
$\delta L_\text{grad}$ evaluates sharpness in the gradient direction and is computed as $L(x^\star+\rho\nabla L(x^\star)/\|\nabla L(x^\star)\|)-L(x^\star)$.
$\delta L_\text{avg}$ computes the average loss difference over Gaussian random perturbations, expressed as
$\E_{z\sim \mathcal{N}(0, 1)}[L(x^\star+\rho z/\|z\|)-L(x^\star)]$.
Here we choose $\rho=0.1$ for the scale of the perturbation.

With these, we measure the sharpness of the minima found by three approximate second-order methods designed for deep learning; Sophia-H \citep{sophia}, AdaHessian \citep{adahessian}, and Shampoo \citep{gupta2018shampoo}, and compare them with \sassha as well as SGD for reference.
We also compute the validation loss and accuracy to see any correlation between sharpness and generalization of these solutions.
The results are presented in \cref{tab:sharp}.

We observe that existing second-order methods produce solutions with significantly higher sharpness compared to \sassha in all sharpness metrics, which also correlates well with their generalization.
We also provide a visualization of the loss landscape for the found solutions, where we find that the solutions obtained by second-order methods are indeed much sharper than that of \sassha (\cref{fig:landscape}).

\section{Method}

In the previous section, we observe that the generalization performance of approximate second-order algorithms anti-correlates with the sharpness of their solutions.
Based on this, we introduce \sassha---a novel adaptive second-order method designed to improve generalization by reducing sharpness without adversely impacting the Hessian.

\subsection{Sharpness-aware Second-order Optimization}\label{sec:psol}

We consider a min-max problem, similar to \citet{keskar2016large, sam}, to minimize sharpness. 
This is defined as minimizing the objective $f$ within the entire $\rho$-ball neighborhood:
\begin{equation} \label{eq:sam_optimization}
    \min_{x\in \R^d} \max_{\|\epsilon\|_2\leq \rho} f(x+\epsilon),
\end{equation}

Based on this, we construct our sharpness minimization technique for second-order optimization as follows.
We first follow a similar procedure as \citet{sam} by solving for $\epsilon$ on the first-order approximation of the objective, which exactly solves the dual norm problem as follows:
\begin{align}
    \epsilon_t^\star = \argmax_{\|\epsilon\|_2\leq \rho} f(x_t) + \epsilon^\top\nabla f(x_t) 
    &= \rho\frac{\nabla f(x_t)}{\|\nabla f(x_t)\|_2}.
\end{align}
We plug this back to yield the following perturbed objective function:
\begin{equation}
    \Tilde{f}_{t}(x) \coloneqq f\left(x+\rho\frac{\nabla f(x_t)}{\|\nabla f(x_t)\|_2}\right),
\end{equation}
which shifts the point of the approximately highest function value within the neighborhood to the current iterate.

With this sharpness-penalized objective, we proceed to make a second-order Taylor approximation: 
\begin{align*}
    x_{t+1} = \argmin_x~ & \Tilde{f}_{t}\left(x_t\right) + \nabla \Tilde{f}_{t}\left(x_t\right)^\top (x-x_t) \nonumber \\ 
    &+ (x-x_t)^\top \Tilde{H}_{t}\left(x_t\right) (x-x_t),
\end{align*}
where $\Tilde{H}_{t}$ denotes the Hessian of $\Tilde{f}_{t}$.
Using the first-order optimality condition, we derive the basis update rule for our sharpness-aware second-order optimization: 
\begin{align}
    x_{t+1} &= x_t - \Tilde{H}_{t}\left(x_t\right)^{-1} \nabla \Tilde{f}_{t}\left(x_t\right) \nonumber \\
            &= x_t - H \left(x_t+ \epsilon_t^\star \right)^{-1} \nabla f \left(x_t+ \epsilon_t^\star \right),
\label{eq: update}
\end{align}
where $H$ denotes the Hessian of the original objective $f$.

Practical second-order methods must rely on approximately estimated Hessians (\ie, $H \rightarrow \widehat{H}$) since the exact computation is prohibitively expensive for large-scale problems.
We choose to employ the diagonal approximation via Hutchinson's method.
However, as we will show in our analysis (\cref{sec:sqrt_ablation}), we find that these estimates can become numerically unstable during the sharpness reduction process, as it penalizes Hessian entries close to zero.
This can lead to fatal underestimation of the diagonal Hessian compared to scenarios without sharpness minimization, significantly disrupting training.
We propose a stable Hessian approximation to address these issues in the following sections.

\subsection{Improving Stability} \label{sec:method_stability}

\paragraph{Alleviating divergence.} \label{sec: alleviating divergence}
Approximate second-order methods can yield overly large steps when their diagonal Hessian estimations underestimate the curvature \citep{dauphin2015equilibrated}.
However, this instability seems to be more present under sharpness minimization, presumably due to smaller top Hessian eigenvalue $\lambda_1$ \citep{agarwala2023sam, shin2024critical} yielding smaller estimated diagonal entries on average: 
\begin{equation*}
    \E\left[\frac{1}{d}\sum_{i=1}^d{\widehat{H}_{ii}}\right] 
    = \frac{1}{d}\sum_{i=1}^d{\E[\widehat{H}]_{ii}} 
    = \frac{\operatorname{tr}(H)}{d} 
    = \frac{1}{d}\sum_{i=1}^d{\lambda_i} 
    \leq \lambda_1.
\end{equation*}
This tendency toward zero intensifies numerical instability during Hessian inversion, increasing the risk of training failures.

A conventional approach to mitigating this involves damping or clipping, which, while stabilizing it reasonably well, requires carefully tuning their additional hyperparameters.
Instead, we propose square rooting the Hessian (\ie, $|\widehat{H}|^{1/2}$), which effectively mitigates instability and allows improved generalization performance over other alternatives without additional hyperparameters. 
We present empirical validation of this in \cref{sec:sqrt_ablation} and \cref{app:sqrt_alternatives}.

Its benefits can be understood from two perspectives.
First, the square root preserves the relative scale between each element of the Hessian while smoothly increasing the magnitude of the near-zero diagonal Hessian entries in the denominator (\ie, $h < \sqrt{h} $ if $0 < h < 1$). 
This property is particularly valuable when the sharpness minimization is underway, where the overall Hessian values tend to be small.
In such cases, even small differences between Hessian elements may carry nontrivial curvature information.
Square-rooting can help retain this relative structure while also mitigating numerical instability caused by underestimated curvature.
In contrast, both damping and clipping operate by entirely shifting or abruptly replacing Hessian values based on a predefined and fixed threshold criterion.
As a result, when the Hessian is generally small due to sharpness minimization, informative dimensions may fall below the threshold, removing potentially critical variations and hence deteriorating the quality of preconditioning.
This behavior can also make the method more sensitive to the choice of the threshold hyperparameter.

Second, it can further be interpreted as a geometric interpolation between the identity matrix $ I$ and the preconditioning matrix $|\widehat{H}|^{\alpha}$, which, as theoretically analyzed in \citep{amari2021when}, provide a natural mechanism for balancing between bias and variance of the population risk, thereby improving generalization.
We specifically adopt $\alpha = 1 / 2$ (i.e., square root), as it has consistently demonstrated robust performance across various scenarios \citep{amari2021when, kingma2014adam}.

\paragraph{Avoiding critical points.}

A well-known limitation of applying second-order optimization to deep learning objectives is the risk of convergence to saddle points or local maxima.
To mitigate this, we attend to the prior works of \citet{becker1988improving, adahessian} and employ the absolute function to conservatively adjust the negative entries of the diagonal Hessian to be positive, \ie
\begin{equation}
    \lvert \widehat{H} \rvert := \sum_{i=1}^d \lvert \widehat{H}_{ii} \rvert \mathbf{e}_i\mathbf{e}_i^\top
\end{equation}
where $\widehat{H}_{ii}$ and $\mathbf{e}_i$ are the $i^{\text{th}}$ diagonal entry of the approximate diagonal Hessian and the $i^{\text{th}}$ standard basis vector, respectively.
Here, the basic idea is to maintain the directionality of the gradient by flipping the sign of the negative entries in the Hessian, preserving the original magnitude of its values.
This allows our method not only to take descent steps along directions of originally negative curvature, but also to faithfully preserve the relative scale specifically among the negative elements of the Hessian.
As a result, it mitigates the risk of convergence to the undesired critical points.
We empirically validate the effectiveness of this approach in \cref{sec:abs_ablation}.

\subsection{Improving Efficiency via Lazy Hessian Update} \label{sec:lazy_hessian}

While the diagonal Hessian approximation can significantly reduce computations, it still requires at least twice as much backpropagation compared to first-order methods.
Here we attempt to further alleviate this by lazily computing the Hessian every $k$ steps:

{\small
\begin{equation*}\label{eq:lazy_update}
    D_t = \begin{cases}
    \beta_2 D_{t-1} + (1-\beta_2) \lvert \widehat{H} (x_t+\epsilon^\star_t) \rvert & \texttt{if } t \  \operatorname{mod} \  k = 1 \\ 
    D_{t-1} & \texttt{otherwise} 
    \end{cases},
\end{equation*}
}

where $D_t$ and $\beta_2$ are the moving average of the Hessian and its hyperparameter, respectively.
This reduces the overhead from additional Hessian computation by $1/k$.
We set $k=10$ for all experiments in this work unless stated otherwise.

However, extensive Hessian reusing will lead to significant performance degradation since it would no longer accurately reflect the current curvature \citep{lazyhessian}. 
Interestingly, \sassha is quite resilient against prolonged reusing, keeping its performance relatively high over longer Hessian reusing compared to other approximate second-order methods.
Our investigation reveals that along the trajectory of \sassha, the Hessian tends to change less frequently than existing alternatives.
We hypothesize that the introduction of sharpness minimization plays an integral role in this phenomenon by biasing the optimization path toward regions with lower curvature change, allowing the prior Hessian to remain relevant over more extended steps. 
We provide a detailed analysis of the lazy Hessian updates in \cref{sec:emp_lazy_hess}.

\subsection{Algorithm}

The exact steps of \sassha is outlined in \cref{algo:sassha}.
We also compare \sassha with other adaptive and second-order methods in detail in \cref{app:comparison}, where one can see the exact differences between these sophisticated methods.

\subsection{Convergence Analysis} \label{sec:main_conv}

In this section, we present a standard convergence analysis of \sassha under the following assumptions.
\begin{assumption} \label{method_assumption:lowbound}
    The function $ f $ is bounded from below, i.e., \( f^* := \inf_x f(x) > -\infty \).
\end{assumption}
\begin{assumption} \label{method_assumption:smooth}
    The function $ f $ is twice differentiable, convex, and $\beta$-smooth. That is, $0\preceq \nabla^2f \preceq \beta.$
\end{assumption}
\begin{assumption}    
    \label{method_assumption:bounded_steps}
    \textit{The gradient \( \nabla f(x_t) \) is nonzero for a finite number of iterations, i.e., \( \nabla f(x_t) \neq 0 \) for all \( t \in \{1, 2, \dots, n\} \).}
\end{assumption}
Under these assumptions, we derive a descent inequality for $f(x_t)$ by leveraging Adam-like proof techniques from \citet{li2023convergence} to handle the diagonal Hessian and employing smoothness-based bounds to account for the perturbation step based on analyses of \citet{khanh2024fundamental}.
Now we give the convergence results as follows:
\begin{theorem}
    \label{thm:method_theorem}
    Under Assumptions \ref{method_assumption:lowbound}-\ref{method_assumption:bounded_steps}, given any initial point $x_0 \in \mathbb{R}^d$, let $\{x_t\}$ be generated by the update rule \sassha \cref{remark:sassha_iteration_adamlike} with step sizes \( \eta_t \) and perturbation radii \( \rho_t \) satisfying  
    $
        \sum_{t=1}^{\infty} \eta_t = \infty, \quad \sum_{t=1}^{\infty} \eta_t^2 < \infty, \quad \sum_{t=1}^{\infty} \rho_t^2 \eta_t < \infty
    $. 
    Then, we have $\liminf_{t \to \infty} \|\nabla f(x_t)\| = 0$.
\end{theorem}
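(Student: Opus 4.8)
The plan is to reduce the \sassha iteration to its essential form $x_{t+1} = x_t - \eta_t D_t^{-1}\tilde g_t$, where $\tilde g_t := \nabla f(x_t+\epsilon_t^\star)$ is the gradient at the perturbed point, $\epsilon_t^\star = \rho_t \nabla f(x_t)/\norm{\nabla f(x_t)}$, and $D_t$ is the stabilized diagonal preconditioner (the square-rooted, sign-corrected, moving-averaged Hessian estimate). Before anything else I would record two structural facts. First, \Cref{method_assumption:smooth} forces the diagonal Hessian entries into $[0,\beta]$, so after the absolute value and square root each coordinate of $D_t$ lies in $[0,\sqrt{\beta}]$; combined with the convex-combination form of the lazy moving average and the stabilizing regularization of the algorithm, this yields uniform spectral bounds $\mu I \preceq D_t \preceq L I$ for constants $0 < \mu \le L$. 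Second, \Cref{method_assumption:bounded_steps} guarantees $\nabla f(x_t) \neq 0$, so the normalized perturbation is well defined (and if it ever vanishes, convexity of $f$ makes $x_t$ a global minimizer and the claim is immediate), while $\beta$-smoothness gives the perturbation bound $\norm{\tilde g_t - g_t} \le \beta\norm{\epsilon_t^\star} = \beta\rho_t$, writing $g_t := \nabla f(x_t)$.

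With these in hand, the core step is a descent inequality. Applying the $\beta$-smoothness descent lemma to consecutive iterates and substituting the update gives
\begin{equation*}
 f(x_{t+1}) \le f(x_t) - \eta_t\, g_t^\top D_t^{-1}\tilde g_t + \frac{\beta}{2}\eta_t^2\norm{D_t^{-1}\tilde g_t}^2 .
\end{equation*}
I would then split $g_t^\top D_t^{-1}\tilde g_t = g_t^\top D_t^{-1}g_t + g_t^\top D_t^{-1}(\tilde g_t - g_t)$ and use the spectral bounds together with $\norm{\tilde g_t - g_t} \le \beta\rho_t$: the first piece is at least $L^{-1}\norm{g_t}^2$, the second is controlled by $\mu^{-1}\beta\rho_t\norm{g_t}$, and the final quadratic term by $\mu^{-2}(\norm{g_t}+\beta\rho_t)^2$. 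A Young's inequality applied to each cross term $\rho_t\norm{g_t}$ lets me absorb a small multiple of $\norm{g_t}^2$ into the dominant negative term; since $\sum_t\eta_t^2<\infty$ forces $\eta_t\to 0$, the $\eta_t^2\norm{g_t}^2$ contributions are likewise absorbed for all large $t$. The upshot is a threshold $T_0$ beyond which
\begin{equation*}
 f(x_{t+1}) \le f(x_t) - \frac{1}{4L}\eta_t\norm{g_t}^2 + C\,\eta_t\rho_t^2 ,
\end{equation*}
where the remainder is summable precisely because of the hypothesis $\sum_t\rho_t^2\eta_t<\infty$.

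Finally I would telescope this inequality from $T_0$ onward. Since $f$ is bounded below by $f^*$ (\Cref{method_assumption:lowbound}) and the remainder sums to a finite quantity, the negative terms must also sum finitely, giving $\sum_t \eta_t\norm{\nabla f(x_t)}^2 < \infty$. Combining this with $\sum_t\eta_t=\infty$ yields the conclusion: if $\liminf_{t\to\infty}\norm{\nabla f(x_t)}$ were strictly positive, then $\norm{g_t}^2\ge c>0$ for all large $t$ would force $\sum_t\eta_t\norm{g_t}^2 \ge c\sum_t\eta_t=\infty$, a contradiction, so $\liminf_{t\to\infty}\norm{\nabla f(x_t)}=0$.

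The hard part will be the two ingredients feeding the descent inequality rather than the telescoping itself. Establishing the uniform lower bound $\mu I \preceq D_t$ is the most delicate point, since the sharpness step drives curvature estimates toward zero exactly where the square root and the algorithm's regularization must keep the preconditioner invertible; this is where the stable Hessian approximation design is doing real work. Equally, the bookkeeping of the perturbation-induced cross terms must be arranged so that every stray $\norm{g_t}^2$ factor is either absorbed by the negative term (using $\eta_t\to0$) or multiplied by a summable coefficient, which is exactly what pins down the three stated conditions on $\eta_t$ and $\rho_t$.
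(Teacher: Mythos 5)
Your proposal is correct and follows essentially the same route as the paper's proof: spectral bounds on the preconditioner from $0 \preceq \nabla^2 f \preceq \beta$ plus the $\epsilon$-regularization in the denominator, the smoothness bound $\|\tilde g_t - g_t\| \le \beta\rho_t$ on the perturbation, a descent lemma with Young's inequality absorbing the cross term once $\eta_t$ is small enough, telescoping against the lower bound on $f$, and the final contradiction with $\sum_t \eta_t = \infty$. The only cosmetic difference is that you carry generic constants $\mu, L$ where the paper works with $\epsilon$ and $\sqrt{\beta}+\epsilon$ explicitly, and the lower bound you flag as delicate is obtained there immediately from the added constant $\epsilon$ in the denominator of the analyzed iteration.
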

This preliminary result indicates that any limit point of \sassha is a stationary point of $f$, ensuring progress towards optimal solutions.
We refer to \cref{app:convergence} for the full proof details.

\begin{algorithm}[t!]
    \small
    \caption{\sassha algorithm}
    \begin{algorithmic}[1]
          \STATE {\bf Input:} Initial parameter $x_0$, learning rate $\{\eta_\mathnormal{t}\}$,  moving average parameters $\beta_{1},\beta_{2}$, Hessian update interval $k$, weight decay parameter $\lambda$
          \STATE Set $\mathnormal{m}_{-1} = 0$, $D_{-1} = 0$
          \FOR{$\mathnormal{t}=1$ {\bf to} $T$}
            \STATE $\mathnormal{g}_\mathnormal{t} = \nabla f_\mathcal{B}(x_t)$
            \STATE $\epsilon^\star_t=\rho g_t/\|g_t\|_2$  
            \STATE $\Tilde{g}_{t} = \nabla f_\mathcal{B}(x_t + \epsilon^\star_t)$
            \STATE  $m_t = \beta_{1} m_{t-1} + (1 - \beta_{1}) \Tilde{g}_{t}$ 
            \STATE $\overline{m}_t = m_t / (1 - \beta_1^t)$ 
            
            \IF{$t \operatorname{mod} k = 1$}
                \STATE $\Tilde{H}_t = \widehat{H}(x_t + \epsilon^\star_t)$ \hfill$\triangleright$ \cref{sec:psol} 
                \STATE $D_\mathnormal{t} = \beta_2 D_{t-1} + (1 - \beta_2) |\Tilde{H}_t| $ 
                \STATE $\overline{D}_{t} = \sqrt{D_t/(1 - \beta_2^t)}$ \hfill $\triangleright$ 
                \cref{sec:method_stability}
            \ELSE
                \STATE $\overline{D}_t=\overline{D}_{t-1}$ \hfill$\triangleright$ \cref{sec:lazy_hessian}
            \ENDIF
            \STATE $x_{t+1} = x_t - \eta_{t} \overline{D}_{t}^{-1} \overline{m}_t - \eta_t \lambda x_t$
        \ENDFOR
    \end{algorithmic}
    \label{algo:sassha}
\end{algorithm}

\subsection{Flatness Analysis}\label{sec:linstab}

To understand how \sassha can end up in flatter minima as observed in \cref{sec:3_sharpness_measurement}, we attend to linear stability analysis.
Originally developed to explain similar behavior in SGD \citep{wu2018sgd} and also later extended to SAM \citep{shin2024critical}, this framework suggests that an optimizer does not settle in every minimum it approaches, but instead escapes unless it encounters one that satisfies specific stability conditions—conditions that can vary between optimizers.
Based on this, we demonstrate that \sassha requires a minimum to possess a certain level of flatness and Hessian uniformity to settle in it, whereas approximate second-order optimizers do not necessarily require such restriction, thus allowing it to stay in much sharper minima.

Consider a general optimizer $x_{t+1}=x_t-G(x_t;\xi_t)$ with randomness induced by i.i.d. variables $\xi_t$ that are independent from the iterate $ x_t $.
Also, let us assume that the minima possess a fixed point $x^\star$ such that $\nabla f_{\xi}(x^\star) = 0$ for any $\xi$.

With these, we define the linear stability of the fixed point $x^\star$ as follows:
\begin{definition}
    (Linear stability).
    A fixed point $ x^\star $ is \emph{linearly stable} for the optimizer $G$ if there exists a constant $ C $ such that
    \begin{equation}
    \mathbb{E} \bigl[ \lVert \hat{x}_t - x^\star \rVert^2 \bigr] \leq C \lVert \hat{x}_0 - x^\star \rVert^2, \text{ for all } t > 0 \nonumber 
    \end{equation}
    under the linearized dynamics near $x^\star$: $\hat{x}_{t+1} = \hat{x}_t - \nabla G (x^\star)(\hat{x}_t - x^\star)$, \ie, when it does not deviate infinitely far from $x^\star$.
\end{definition}
Here, this linear dynamic arises when $x_t$ has approached sufficiently close to $x^\star$ such that the loss becomes approximately quadratic.

Under this framework, we present the necessary conditions of the linearly stable minima of \sassha in the following corollary.

\begin{corollary}
    Assume without loss of generality that $ x^\star = 0 $.
    Then, the linearly stable fixed point $ x^\star $ of \sassha satisfy the following necessary conditions:
    \begin{align}
        0 \leq&  a(1 + \rho a ) \leq \frac{2 \epsilon }{\eta } , \quad 
        0 \leq s^2_2 \leq \frac{ \epsilon^2 }{\eta^2 -2 \eta \rho \epsilon }, \nonumber \\
        0 \leq& s^2_3 \leq \frac{ \epsilon^2 }{2 \eta^2 \rho }, \quad
        0 \leq s^2_4 \leq \frac{ \epsilon^2 }{\eta^2 \rho^2 }, \label{cond:necess}
    \end{align}
    where $ a = \lambda_{max} (\E[H_\xi]) $ and $ s_k = \lambda_{max} ((\E[H_\xi^k] - \E[H_\xi]^k)^{1/k})$ are the sharpness and the non-uniformity of the stochastic Hessian measured with the $k$-th moment, respectively.
\end{corollary}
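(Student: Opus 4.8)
The plan is to instantiate the linear-stability machinery of the preceding definition for the specific map $G$ induced by \sassha, following the route used for \sgd\ \citep{wu2018sgd} and \sam\ \citep{shin2024critical}. First I would specialize to a quadratic model around the fixed point: setting $x^\star = 0$ and writing $f_\xi(x) = \tfrac12 x^\top H_\xi x$ so that $\nabla f_\xi(x) = H_\xi x$. Then I would linearize the \sassha\ step. Near a flat fixed point the estimated curvature is small, so the preconditioner $\overline{D}_t$ saturates at the numerical floor and is well approximated by a constant $\epsilon I$; expanding the perturbed gradient $\nabla f_\xi(x_t + \epsilon^\star_t)$ to first order in $\rho$ and dropping momentum gives the effective linear dynamics $\hat x_{t+1} = (I - B_\xi)\hat x_t$ with $B_\xi = \tfrac{\eta}{\epsilon}\bigl(H_\xi + \rho H_\xi^2\bigr)$. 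Making this reduction precise---in particular handling the normalization in $\epsilon^\star_t = \rho g_t/\|g_t\|$, which is singular at $x^\star$ (so that, as in linearized treatments of \sam, one keeps only its leading-order effect to stay linear in $\hat x_t$), and justifying the saturation of $\overline{D}_t$---is the step I expect to be the main obstacle, since everything downstream is a moment computation.

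Second, I would pass to second moments. Because $\xi_t$ is independent of $\hat x_t$ and $B_\xi$ is symmetric, $\mathbb{E}\bigl[\|\hat x_{t+1}\|^2 \mid \hat x_t\bigr] = \hat x_t^\top\, \mathbb{E}\bigl[(I-B_\xi)^2\bigr]\, \hat x_t$, so the requirement $\mathbb{E}\|\hat x_t\|^2 \le C\|\hat x_0\|^2$ can hold only if $\mathbb{E}\bigl[(I-B_\xi)^2\bigr]\preceq I$, i.e.
\begin{equation}
    \mathbb{E}[B_\xi^2] \preceq 2\,\mathbb{E}[B_\xi]. \nonumber
\end{equation}
I would then substitute $B_\xi$ and expand $(H_\xi + \rho H_\xi^2)^2 = H_\xi^2 + 2\rho H_\xi^3 + \rho^2 H_\xi^4$, which is exactly what produces the second-, third-, and fourth-moment terms $\mathbb{E}[H_\xi^k]$ that surface through $s_2,s_3,s_4$.

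Third, to separate the sharpness condition from the non-uniformity conditions I would write each raw moment as $\mathbb{E}[H_\xi^k] = \mathbb{E}[H_\xi]^k + \bigl(\mathbb{E}[H_\xi^k] - \mathbb{E}[H_\xi]^k\bigr)$. The deterministic part collapses to a condition on $a = \lambda_{\max}(\mathbb{E}[H_\xi])$: the mean terms assemble into $\tfrac{\eta}{\epsilon}\,\mathbb{E}[H_\xi](I+\rho\mathbb{E}[H_\xi]) \bigl[\tfrac{\eta}{\epsilon}\mathbb{E}[H_\xi](I+\rho\mathbb{E}[H_\xi]) - 2\bigr]\preceq 0$, whose top eigenvalue gives $a(1+\rho a)\le 2\epsilon/\eta$. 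The fluctuation parts $\mathbb{E}[H_\xi^k]-\mathbb{E}[H_\xi]^k$ enter with coefficients $\tfrac{\eta^2}{\epsilon^2}-\tfrac{2\eta\rho}{\epsilon}$, $\tfrac{2\eta^2\rho}{\epsilon^2}$, and $\tfrac{\eta^2\rho^2}{\epsilon^2}$ for $k=2,3,4$; isolating each order and bounding the largest eigenvalue of the corresponding fluctuation matrix by $s_k$ yields the three remaining inequalities, with the denominators $\eta^2 - 2\eta\rho\epsilon$, $2\eta^2\rho$, and $\eta^2\rho^2$ reading off directly from those coefficients.

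Finally I would collect the four scalar inequalities and check that they reproduce \eqref{cond:necess}; each is necessary---obtained by restricting the semidefinite inequality to the top eigendirection of the matrix it controls, using that the complementary mean budget is nonnegative under the first condition---rather than jointly sufficient. Apart from the linearization, the only delicate bookkeeping is keeping the mean/fluctuation split clean so that the $\rho^0,\rho^1,\rho^2$ orders map onto $s_2,s_3,s_4$ respectively; here I would use that $\mathbb{E}[H_\xi^2]-\mathbb{E}[H_\xi]^2 = \mathbb{E}\bigl[(H_\xi-\mathbb{E}[H_\xi])^2\bigr]\succeq 0$ genuinely, and treat the higher-order analogues as positive semidefinite in the operating regime so that each can be summarized by its largest eigenvalue $s_k$.
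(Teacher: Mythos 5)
Your proposal follows essentially the same route as the paper's Appendix D: linearize around the quadratic model with the preconditioner replaced by its $\epsilon^{-1}$ bound, obtain the effective per-step matrix $\tfrac{\eta}{\epsilon}\bigl(H_\xi+\rho H_\xi^2\bigr)$, require $\mathbb{E}\bigl[(I-B_\xi)^2\bigr]\preceq I$ for the second-moment recursion, and split each raw moment into its mean part (giving the $a(1+\rho a)$ condition) and PSD fluctuation parts (giving the $s_k$ conditions with the coefficients $\tfrac{\eta^2-2\eta\rho\epsilon}{\epsilon^2}$, $\tfrac{2\eta^2\rho}{\epsilon^2}$, $\tfrac{\eta^2\rho^2}{\epsilon^2}$). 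The only cosmetic differences are that the paper sidesteps the normalization issue by defining the linearized perturbation as $\rho H_{\xi}(x_t-x^\star)$ directly and treats the preconditioner via the uniform upper bound $1/d_{\xi}\preceq \epsilon^{-1}\mathbf{1}$ rather than a saturation approximation, neither of which changes the argument.
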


A detailed proof is provided in \cref{app:linear_stability}.

These results indicate that \sassha escapes from minima unless they satisfies both low sharpness and uniformly distributed Hessian moments, with the conditions becoming much stricter with larger $\rho$ and $\eta$.
In comparison, standard approximate second-order methods of the form $x_{t+1}=x_t-\eta P_t\nabla f(x_t)$ with $P \approx H^{-1}$ remain stable without such conditions, as shown by \citet{wu2018sgd}:
\begin{equation}
    \lambda_{\max}(P^{-1}H) \approx 1 \leq \frac{2}{\eta}, \nonumber    
\end{equation}
allowing convergence to minima of any sharpness provided $\eta \leq 2$.

\begin{table*}[t!]
    \centering
    \caption{Image classification results of various optimization methods in terms of final validation accuracy (mean$\pm$std).
    \sassha consistently outperforms the other methods for all workloads.
    * means \emph{omitted} due to excessive computational requirements.}
    \vskip 0.1in
    \resizebox{0.9\linewidth}{!}{
        \begin{tabular}{clcccccc}
        \toprule
         & 
         & \multicolumn{2}{c}{CIFAR-10} 
         & \multicolumn{2}{c}{CIFAR-100} 
         & \multicolumn{2}{c}{ImageNet} \\
         \cmidrule(l{3pt}r{3pt}){3-4} \cmidrule(l{3pt}r{3pt}){5-6} \cmidrule(l{3pt}r{3pt}){7-8}
         \multicolumn{1}{c}{ Category }
         & \multicolumn{1}{c}{ Method }
         & \multicolumn{1}{c}{ ResNet-20 } 
         & \multicolumn{1}{c}{ ResNet-32 } 
         & \multicolumn{1}{c}{ ResNet-32 }  
         & \multicolumn{1}{c}{ WRN-28-10} 
         & \multicolumn{1}{c}{ ResNet-50 } 
         & \multicolumn{1}{c}{ ViT-s-32} \\ \midrule

       \multirow{4}{*}{First-order}  
       & SGD       & 
         $ 92.03 _{ \textcolor{black!60}{\pm 0.32} } $    &
         $ 92.69 _{\textcolor{black!60}{\pm 0.06} }  $    &
         $ 69.32 _{\textcolor{black!60}{\pm 0.19} }  $    &
         $ 80.06 _{\textcolor{black!60}{\pm 0.15} }  $    &
         $ 75.58 _{\textcolor{black!60}{\pm 0.05} }  $    &
         $ 62.90 _{\textcolor{black!60}{\pm 0.36} }  $   \\

        & AdamW      & 
        $ 92.04 _{\textcolor{black!60}{\pm 0.11} }  $     &
        $ 92.42 _{\textcolor{black!60}{\pm 0.13} }  $     &
        $ 68.78 _{\textcolor{black!60}{\pm 0.22} }  $     &
        $ 79.09 _{\textcolor{black!60}{\pm 0.35} }  $     &
        $ 75.38 _{\textcolor{black!60}{\pm 0.08} }  $     &
        $ 66.46 _{\textcolor{black!60}{\pm 0.15} }  $    \\
        
        & SAM $_{\text{SGD}}$  &
        $ 92.85 _{\textcolor{black!60}{\pm 0.07} }  $    &
        $ 93.89 _{\textcolor{black!60}{\pm 0.13} }  $    &
        $ 71.99 _{\textcolor{black!60}{\pm 0.20} }  $    &
        $ 83.14 _{\textcolor{black!60}{\pm 0.13} }  $    &
        $ 76.36 _{\textcolor{black!60}{\pm 0.16} }  $    &
        $ 64.54 _{\textcolor{black!60}{\pm 0.63} }  $    \\
        
        & SAM $_{\text{AdamW}}$  &
        $ 92.77 _{\textcolor{black!60}{\pm 0.29} }  $    &
        $ 93.45 _{\textcolor{black!60}{\pm 0.24} }  $    &
        $ 71.15 _{\textcolor{black!60}{\pm 0.37} }  $    &
        $ 82.88 _{\textcolor{black!60}{\pm 0.31} }  $    &
        $ 76.35 _{\textcolor{black!60}{\pm 0.16} }  $    &
        $ 68.31 _{\textcolor{black!60}{\pm 0.17} }  $    \\

        \midrule
        
        \multirow{4}{*}{Second-order} &
        AdaHessian &
        $ 92.00 _{\textcolor{black!60}{\pm 0.17} } $  &
        $ 92.48 _{\textcolor{black!60}{\pm 0.15} } $  &
        $ 68.06 _{\textcolor{black!60}{\pm 0.22} } $  &
        $ 76.92 _{\textcolor{black!60}{\pm 0.26} } $  &
        $ 73.64 _{\textcolor{black!60}{\pm 0.16} } $  &
        $ 66.42 _{\textcolor{black!60}{\pm 0.23} } $  \\
        
        & Sophia-H   & 
        $ 91.81 _{\textcolor{black!60}{\pm 0.27} } $  &
        $ 91.99 _{\textcolor{black!60}{\pm 0.08} } $  &
        $ 67.76 _{\textcolor{black!60}{\pm 0.37} } $  & 
        $ 79.35 _{\textcolor{black!60}{\pm 0.24} } $  & 
        $ 72.06 _{\textcolor{black!60}{\pm 0.49} } $  &
        $ 62.44 _{\textcolor{black!60}{\pm 0.36} } $  \\
        
        & Shampoo    & 
        $ 88.55 _ {\textcolor{black!60}{\pm 0.83}}$  &
        $ 90.23 _{\textcolor{black!60}{\pm 0.24}} $  &
        $ 64.08 _{\textcolor{black!60}{\pm 0.46}} $  &
        $ 74.06 _{\textcolor{black!60}{\pm 1.28}} $  &
        $*$                                          &
        $*$  \\
        
        \cmidrule(l{3pt}r{3pt}){2-8}
        
        \rowcolor{green!20} \cellcolor{white}  &
        \sassha    &
        $ \textbf{92.98} _{\textcolor{black!60}{\pm 0.05} }  $ &
        $ \textbf{94.09} _{\textcolor{black!60}{\pm 0.24} }  $ &
        $ \textbf{72.14} _{\textcolor{black!60}{\pm 0.16} }  $ & 
        $ \textbf{83.54} _{\textcolor{black!60}{\pm 0.08} }  $ &
        $ \textbf{76.43} _{\textcolor{black!60}{\pm 0.18} }  $ &
        $ \textbf{69.20} _{\textcolor{black!60}{\pm 0.30} }  $ \\
        
        \bottomrule
        \end{tabular}
    }
    \vskip 0.1in
    \label{tab:im_cls_results}
\end{table*}

\begin{figure*}[t!]
    \centering
    \resizebox{0.9\linewidth}{!}{
    \includegraphics[width=0.325\linewidth]{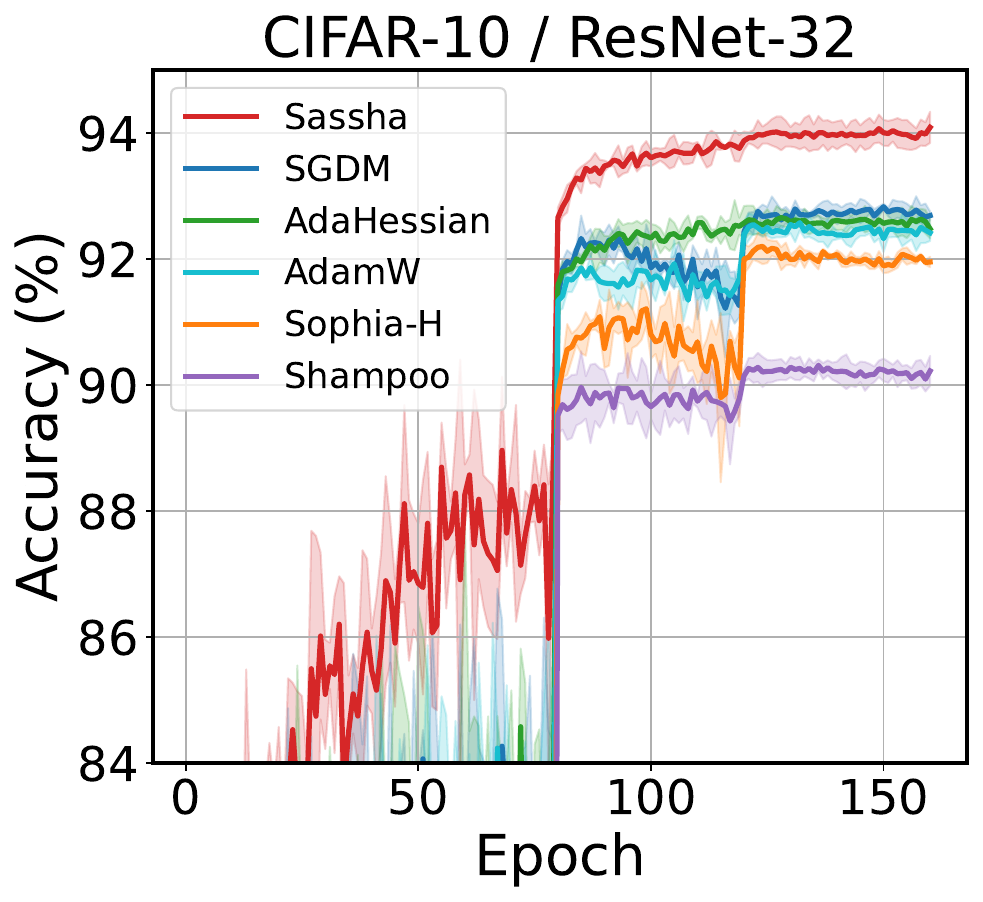}
    \includegraphics[width=0.325\linewidth]{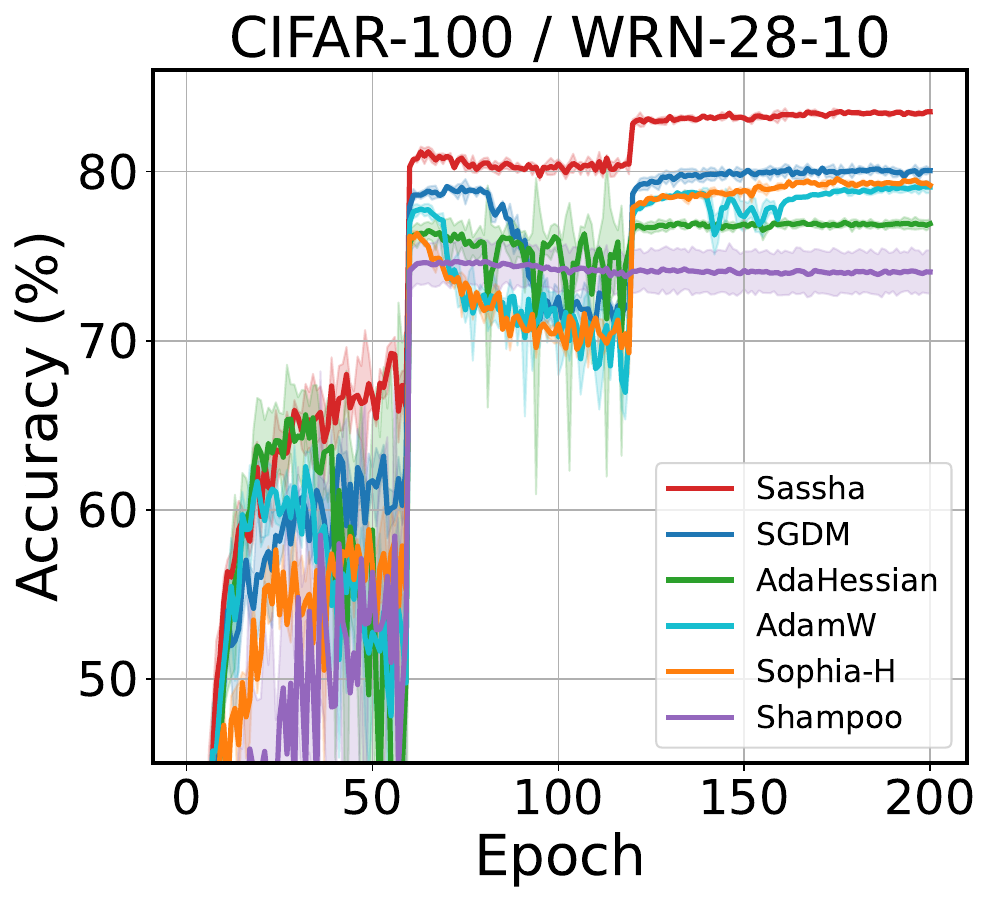}
    \includegraphics[width=0.325\linewidth]{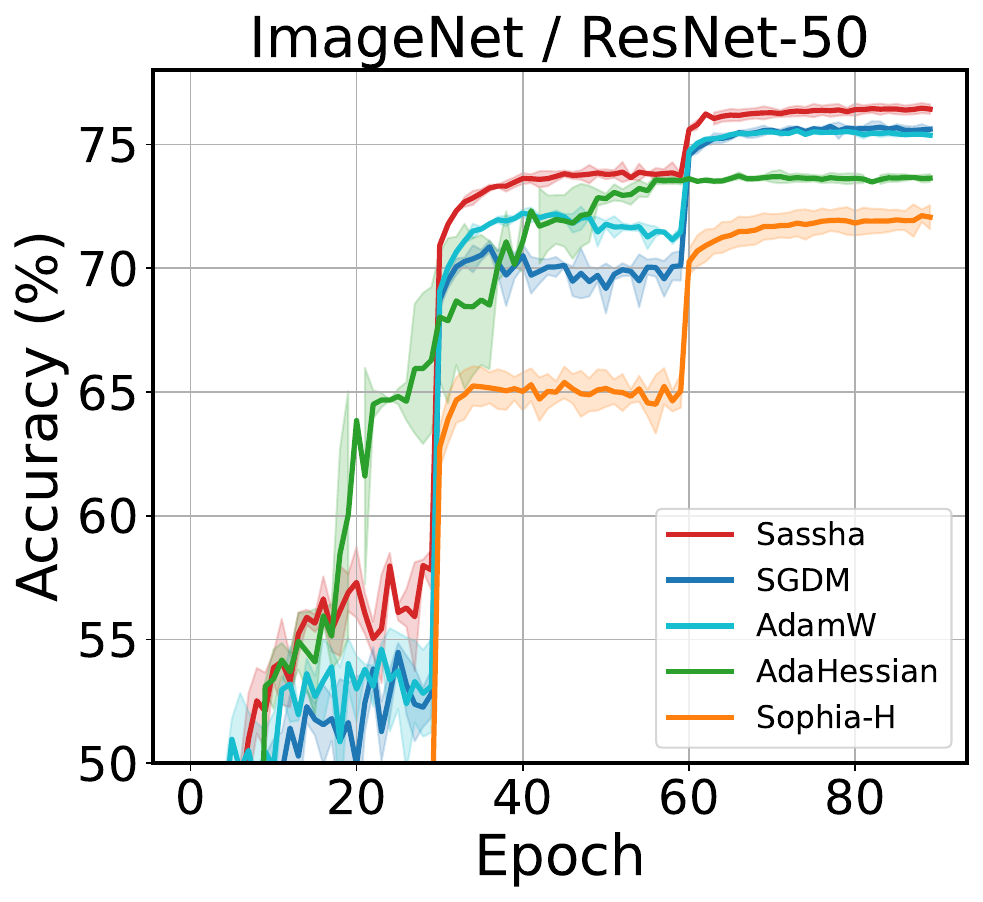}
    }
    \caption{
    Validation accuracy curves along the training trajectory.
    We also provide loss curves in \cref{app:valloss}.
    }
    \label{fig:im_cls_results}
\end{figure*}

\section{Evaluations}
\label{sec:experiment}
In this section, we demonstrate that \sassha can indeed improve upon existing second-order methods available for standard deep learning tasks.
We also show that \sassha performs competitively to the first-order baseline methods.
Specifically, \sassha is compared to AdaHessian \citep{adahessian}, Sophia-H \citep{sophia}, Shampoo \cite{gupta2018shampoo}, SGD, AdamW \citep{loshchilov2018decoupled}, and SAM \citep{sam} on a diverse set of both vision and language tasks.
We emphasize that we perform an \emph{extensive} hyperparameter search to rigorously tune all optimizers and ensure fair comparisons.
We provide the details of experiment settings to reproduce our results in \cref{app:hypersearch}.
The code to reproduce all results reported in this work is made available for download at \url{https://github.com/LOG-postech/Sassha}.

\begin{table*}[ht!]
    \centering
    \caption{
    Language finetuning and pertraining results for various optimizers. For finetuning, \sassha achieves better results than AdamW and AdaHessian and compares competitively with Sophia-H. For pretraining, \sassha achieves the lowest perplexity among all optimizers.
    }
    \vskip 0.1in
    \resizebox{\linewidth}{!}{
        \begin{tabular}{lc}
            \toprule
             & \multicolumn{1}{c}{$\textbf{Pretrain} / $ GPT1-mini} \\
             \cmidrule(l{3pt}r{3pt}){2-2}
             & Wikitext-2 \\
             & \texttt{Perplexity}\\
            \midrule
            
            AdamW & $ 175.06 _{\textcolor{black!60}{\pm 0.19}} $ \\
            SAM $_{\text{AdamW}}$ & $ 158.06 _{\textcolor{black!60}{\pm 0.23}} $ \\
            AdaHessian & $ 407.69 _{\textcolor{black!60}{\pm 0.20}} $ \\
            Sophia-H & $ 157.60 _{\textcolor{black!60}{\pm 0.37}}$ \\
            
            \midrule 
            
            \rowcolor{green!20}
            \sassha &
            $ \textbf{122.40}_{\textcolor{black!60}{\pm 0.16}} $ \\
            
            \bottomrule
        \end{tabular}
        
        \begin{tabular}{|ccccccc}
            \toprule
                         \multicolumn{7}{|c}{ \textbf{Finetune} /  SqeezeBERT } \\
                         \cmidrule(l{3pt}r{3pt}){1-7}
                         SST-2 &  MRPC & STS-B & QQP & MNLI & QNLI & RTE \\
             \texttt{Acc} &  \texttt{Acc / F1}  & \texttt{S/P corr.} & \texttt{F1 / Acc} & \texttt{mat/m.mat} &  \texttt{Acc} &  \texttt{Acc} \\
            \midrule
            
            $ 90.29 _{\textcolor{black!60}{\pm 0.52}} $ 
            & $ 84.56 _{ \textcolor{black!60}{\pm 0.25} } $ / $ 88.99 _{\textcolor{black!60}{\pm 0.11}} $ 
            & $ 88.34 _{\textcolor{black!60}{\pm 0.15}} $ / $ 88.48 _{\textcolor{black!60}{\pm 0.20}} $ 
            & $ 89.92 _{\textcolor{black!60}{\pm 0.05}} $ / $ 86.58 _{\textcolor{black!60}{\pm 0.11}} $ 
            & $ 81.22 _{\textcolor{black!60}{\pm 0.07}} $ / $ 82.26 _{\textcolor{black!60}{\pm 0.05}} $ 
            & $ 89.93 _{\textcolor{black!60}{\pm 0.14}} $ 
            & $ 68.95 _{\textcolor{black!60}{\pm 0.72}} $  \\
    
            $ \textbf{90.52} _{\textcolor{black!60}{\pm 0.27}} $ 
            & $ 83.25 _{\textcolor{black!60}{\pm 2.79}} $ / $ 87.90 _{\textcolor{black!60}{\pm 2.21}} $ 
            & $ 88.38 _{\textcolor{black!60}{\pm 0.01}} $ / $ 88.79 _{\textcolor{black!60}{\pm 0.99}} $ 
            & $ 90.26 _{\textcolor{black!60}{\pm 0.28}} $ / $ 86.99 _{\textcolor{black!60}{\pm 0.31}} $ 
            & $ 81.56 _{\textcolor{black!60}{\pm 0.18}} $ / $ \textbf{82.46} _{\textcolor{black!60}{\pm 0.19}} $ 
            & $ \textbf{90.38} _{\textcolor{black!60}{\pm 0.05}} $ 
            & $ 68.83 _{\textcolor{black!60}{\pm 1.46}} $  \\
    
            $ 89.64 _{\textcolor{black!60}{\pm 0.13}} $ 
            & $ 79.74 _{\textcolor{black!60}{\pm 4.00}} $ / $ 85.26 _{\textcolor{black!60}{\pm 3.50}} $ 
            & $ 86.08 _{\textcolor{black!60}{\pm 4.04}} $ / $ 86.46 _{\textcolor{black!60}{\pm 4.06}} $ 
            & $ 90.37 _{\textcolor{black!60}{\pm 0.05}} $ / $ 87.07 _{\textcolor{black!60}{\pm 0.05}} $ 
            & $ 81.33 _{\textcolor{black!60}{\pm 0.17}} $ / $ 82.08 _{\textcolor{black!60}{\pm 0.02}} $ 
            & $ 89.94 _{\textcolor{black!60}{\pm 0.12}} $ 
            & $ 71.00 _{\textcolor{black!60}{\pm 1.04}} $ \\
            
            $ 90.44 _{\textcolor{black!60}{\pm 0.46}} $ 
            & $ 85.78 _{\textcolor{black!60}{\pm 1.07}} $ / $ 89.90 _{\textcolor{black!60}{\pm 0.82}} $ 
            & $ 88.17 _{\textcolor{black!60}{\pm 1.07}} $ / $ 88.53 _{\textcolor{black!60}{\pm 1.13}} $ 
            & $ 90.70 _{\textcolor{black!60}{\pm 0.04}} $ / $ 87.60 _{\textcolor{black!60}{\pm 0.06}} $ 
            & $ \textbf{81.77} _{\textcolor{black!60}{\pm 0.18}} $ / $ 82.36 _{\textcolor{black!60}{\pm 0.22}} $ 
            & $ 90.12_{\textcolor{black!60}{\pm 0.14}} $ 
            & $ 70.76 _{\textcolor{black!60}{\pm 1.44}} $  \\
            
            \midrule
            
            \rowcolor{green!20} 
            $ 90.44 _{\textcolor{black!60}{\pm 0.98}} $    &
            $ \textbf{86.28} _{\textcolor{black!60}{\pm 0.28}} $ / $ \textbf{90.13} _{\textcolor{black!60}{\pm 0.161}} $     &
            $ \textbf{88.72} _{\textcolor{black!60}{\pm 0.75}} $ / $ \textbf{89.10} _{\textcolor{black!60}{\pm 0.70}}  $     &
            $ \textbf{90.91} _{\textcolor{black!60}{\pm 0.06}} $ / $ \textbf{87.85}  _{\textcolor{black!60}{\pm 0.09}} $     &
            $ 81.61 _{\textcolor{black!60}{\pm 0.25}} $ / $ 81.71 _{\textcolor{black!60}{\pm 0.11}} $     &
            $ 89.85_{\textcolor{black!60}{\pm 0.20}} $    &
            $ \textbf{72.08} _{\textcolor{black!60}{\pm 0.55}} $  \\
            \bottomrule
        \end{tabular}
    }
    \label{tab:language}
\end{table*}

\subsection{Image Classification}
We first evaluate \sassha for image classification on CIFAR-10, CIFAR-100, and ImageNet.
We train various models of the ResNet family \citep{he2016deep,zagoruyko2016wide} and an efficient variant of Vision Transformer \citep{beyer2022better}.
We adhere to standard inception-style data augmentations during training instead of making use of advanced data augmentation techniques \citep{devries2017improved} or regularization methods \citep{gastaldi2017shake}.
Results are presented in \cref{tab:im_cls_results} and \cref{fig:im_cls_results}.

We begin by comparing the generalization performance of adaptive second-order methods to that of first-order methods.
Across all settings, adaptive second-order methods consistently exhibit lower accuracy than their first-order counterparts.
This observation aligns with previous studies indicating that second-order optimization often result in poorer generalization compared to first-order approaches.
In contrast, \sassha, benefiting from sharpness minimization, consistently demonstrates superior generalization performance, outperforming both first-order and second-order methods in every setting.
Particularly, \sassha is up to 4\% more effective than the best-performing adaptive or second-order methods (\eg, WRN-28-10, ViT-s-32).
In addition, \sassha continually surpasses SGD and AdamW by approximately 0.3\% to 3\%, even when these methods are trained for twice as many epochs.
Further details on these experiments are provided in \cref{app:comp_fo_fair}.

Interestingly, \sassha also outperforms SAM.
Since first-order methods typically exhibit superior generalization performance compared to second-order methods, it might be intuitive to expect SAM to surpass \sassha if the two are viewed merely as the outcomes of applying sharpness minimization to first-order and second-order methods, respectively.
However, the results conflict with this intuition.
We attribute this to the careful design choices made in \sassha, stabilizing Hessian approximation under sharpness minimization, so as to unleash the potential of the second-order method, leading to its outstanding performance.
As a support, we show that naively incorporating SAM into other second-order methods does not yield these favorable results in \cref{app:samsophia}.
We also make more comparisons with SAM in \cref{sec:sassha_vs_sam}.

\subsection{Language Modeling}
Recent studies have shown the potential of second-order methods for pretraining language models.
Here, we first evaluate how \sassha performs on this task.
Specifically, we train GPT1-mini, a scaled-down variant of GPT1 \citep{radford2019language}, on Wikitext-2 dataset \citep{merity2022pointer} using various methods including \sassha and compare their results (see the left of \cref{tab:language}).
Our results show that \sassha achieves the lowest perplexity among all methods including Sophia-H \citep{sophia}, a recent method that is designed specifically for language modeling tasks and sets state of the art, which highlights generality in addition to the numerical advantage of \sassha.
We further evaluate \sassha for GPT2 and provide results in \cref{app:gpt2}.

We also extend our evaluation to finetuning tasks.
Specifically, we finetune SqueezeBERT \citep{iandola2020squeezebert} for diverse tasks in the GLUE benchmark \citep{wang2018glue}.
The results are on the right side of \cref{tab:language}.
It shows that \sassha compares competitively to other second-order methods.
Notably, it also outperforms AdamW---often the method of choice for training language models---on nearly all tasks.

\subsection{Comparison to SAM}\label{sec:sassha_vs_sam}

So far, we have seen that \sassha outperforms second-order methods quite consistently on both vision and language tasks.
Interestingly, we also find that \sassha often improves upon SAM.
In particular, it appears that the gain is larger for the Transformer-based architectures, \ie, ViT results in \cref{tab:im_cls_results} or GPT/BERT results in \cref{tab:language}.

To further investigate these findings, we conducted additional experiments.
First, we allocate more training budgets to SAM to see whether it compares to \sassha.
The results are presented in \cref{tab:sam}.
We find that SAM still underperforms \sassha, even though it is given more budgets of training iterations over data or wall-clock time.
Furthermore, we also compare \sassha to more advanced variants of SAM including ASAM \citep{asam} and GSAM \citep{gsam}, showing that \sassha performs competitively even to these methods (\cref{app:samvariants_vs_sassha}).
Notably, however, these variants of SAM require a lot more hyperparameter tuning to be compared.

We suspect that this may be due to the robustness of \sassha to the block heterogeneity inherent in Transformer architectures, where the Hessian spectrum varies significantly across different blocks.
This characteristic is known to make SGD perform worse than adaptive methods like Adam on Transformer-based models \citep{zhang2024why}.
Since \sassha leverages second-order information via preconditioning gradients, it has the potential to address the ill-conditioned nature of Transformers more effectively than SAM with first-order methods.

\begin{table}[t!]
    \centering
    \bgroup
    \def\arraystretch{1.2}
    \caption{
    Comparison between \sassha and SAM with more training budgets for the ViT-s-32 / ImageNet workload.
    }
    \label{tab:sam}
    \vskip 0.1in
    \resizebox{0.77\linewidth}{!}{
        \centering
        \begin{tabular}{lccc}  
        \toprule
        &\multicolumn{1}{c}{ Epoch } & Time (\texttt{s}) &\multicolumn{1}{c}{ Accuracy (\%) } \\ \midrule
        SAM$_\text{ SGD}$      &  180  & 220,852 & $  65.403 _{\textcolor{black!60}{\pm 0.63}}$ \\
        SAM$_\text{ AdamW}$    &  180 & 234,374 & $68.706 _{\textcolor{black!60}{\pm 0.16}}$ \\
        \midrule
        \rowcolor{green!20}\sassha        &  \textbf{90}  & \textbf{123,948} &  $\textbf{69.195} _{\textcolor{black!60}{\pm 0.30} } $  \\
        \bottomrule
        \end{tabular}
    }
    \egroup
\end{table}

\section{Further Analysis}
\label{sec:ablation}

\subsection{Robustness}
\label{sec:robustness}

Noisily labeled training data can critically degrade generalization performance \citep{natarajan2013learning}.
To evaluate how \sassha generalizes under these practical conditions, we randomly corrupt certain fractions of the training data and compare the validation performances between different methods.
The results show that \sassha outperforms other methods across all noise levels with minimal accuracy degradation (\cref{tab:noise_label}).
Additionally, we also observe the same trend on CIFAR-10 (\cref{tab:noise_label_sassha}).

Interestingly, \sassha surpasses SAM \citep{sam}, which is known to be one of the most robust techniques against label noise \citep{baek2024why}. 
We hypothesize that its robustness stems from the complementary benefits of the sharpness-minimization scheme and second-order methods.
Specifically, SAM enhances robustness by adversarially perturbing the parameters and giving more importance to clean data during optimization, making the model more resistant to label noise \citep{sam, baek2024why}.
Also, recent research indicates that second-order methods are robust to label noise due to preconditioning that reduces the variance in the population risk \citep{amari2021when}.

\begin{table}[t!]
    \centering
    \caption{
    Validation accuracy measured for ResNet-32/CIFAR-100 at different levels of noise.
    \sassha shows the best robustness.
    }
    \label{tab:noise_label}
    \vskip 0.1in
    \resizebox{\linewidth}{!}{%
    \begin{tabular}{lccccc}
        \toprule
        & \multicolumn{4}{c}{Noise level} \\ 
        \cmidrule(l{3pt}r{3pt}){2-5}  
        Method & {0\%} & {20\%} & {40\%} & {60\%} \\ 
        \midrule
        SGD                 &
        $69.32_{\textcolor{black!60}{\pm 0.19}}$
        & $62.18_{\textcolor{black!60}{\pm 0.06}}$ 
        & $55.78_{\textcolor{black!60}{\pm 0.55}}$  
        & $45.53_{\textcolor{black!60}{\pm 0.78}}$ \\ 
        
        SAM $_{\text{SGD}}$ & 
        $71.99_{\textcolor{black!60}{\pm 0.20}}$
        & $65.53_{\textcolor{black!60}{\pm 0.11}}$  
        & $ 61.20_{\textcolor{black!60}{\pm 0.17}}$  
        & $ 51.93_{\textcolor{black!60}{\pm 0.47}}$ \\ 
        
        AdaHessian         &
        $68.06_{\textcolor{black!60}{\pm 0.22}}$
        & $63.06_{\textcolor{black!60}{\pm 0.25}}$  
        & $58.37_{\textcolor{black!60}{\pm 0.13}}$  
        & $46.02_{\textcolor{black!60}{\pm 1.96}}$  \\

        Sophia-H           &
        $67.76_{\textcolor{black!60}{\pm 0.37}}$
        & $62.34_{\textcolor{black!60}{\pm 0.47}}$  
        & $56.54_{\textcolor{black!60}{\pm 0.28}}$  
        & $45.37_{\textcolor{black!60}{\pm 0.27}}$  \\
        
        Shampoo           & 
        $64.08_{\textcolor{black!60}{\pm 0.46}}$
        & $58.85_{\textcolor{black!60}{\pm 0.66}}$ & $ 53.82 _{\textcolor{black!60}{\pm 0.71}}$  
        & $ 42.91_{\textcolor{black!60}{\pm 0.99}}$ \\
        
        \midrule
        
        \rowcolor{green!20} \sassha         & 
        $ \textbf{72.14}_{\textcolor{black!60}{\pm 0.16}}    $
        & $\textbf{66.78}_{\textcolor{black!60}{\pm 0.47}}   $  
        & $ \textbf{ 61.97}_{\textcolor{black!60}{\pm 0.27}} $  
        & $\textbf{ 53.98}_{\textcolor{black!60}{\pm 0.57}}  $ \\
        
        \bottomrule  
    \end{tabular}}
\end{table}

\subsection{Stability} \label{sec:sqrt_ablation}
To show the effect of the square-root function on stabilizing the training process, we run \sassha without the square-root (\texttt{No-Sqrt}), repeatedly for multiple times with different random seeds.
As a result, we find that the training diverges most of the time.
A failure case is depicted in \cref{fig:sqrt_ablation}.

At first, we find that the level of training loss for \texttt{No-Sqrt} is much higher than that of \sassha, and also, it spikes up around step $200$ (\cref{fig:sqrt_ablation_train_loss}).
To look into it further, we also measure the update sizes along the trajectory (\cref{fig:sqrt_ablation_update-size}).
The results show that it matches well with the loss curves, suggesting that the training failure is somehow due to taking too large steps.

It turns out that this problem stems from the preconditioning matrix $D$ being too small;
\ie, the distribution of diagonal entries in the preconditioning matrix gradually shifts toward zero values (\cref{fig:sqrt_ablation_ridgeline-plot});
as a result, $D^{-1}$ becomes too large, creating large steps.
This progressive increase in near-zero diagonal Hessian entries is precisely due to the sharpness minimization scheme that we introduced; it penalizes the Hessian eigenspectrum to yield flat solutions, yet it could also make training unstable if taken naively.
By including square-root, the preconditioner are less situated near zero, effectively suppressing the risk of large updates, thereby stabilizing the training process.
We validate this further by showing its superiority to other alternatives including damping and clipping in \cref{app:sqrt_alternatives}.

We also provide an ablation analysis for the absolute-value function in \cref{sec:abs_ablation}, which demonstrates that it increases the stability of \sassha in tandem with square-root.

\begin{figure}[t!]
\resizebox{\linewidth}{!}{%
    \centering
    \begin{subfigure}{0.4\linewidth}
        \centering
        \includegraphics[width=\linewidth]{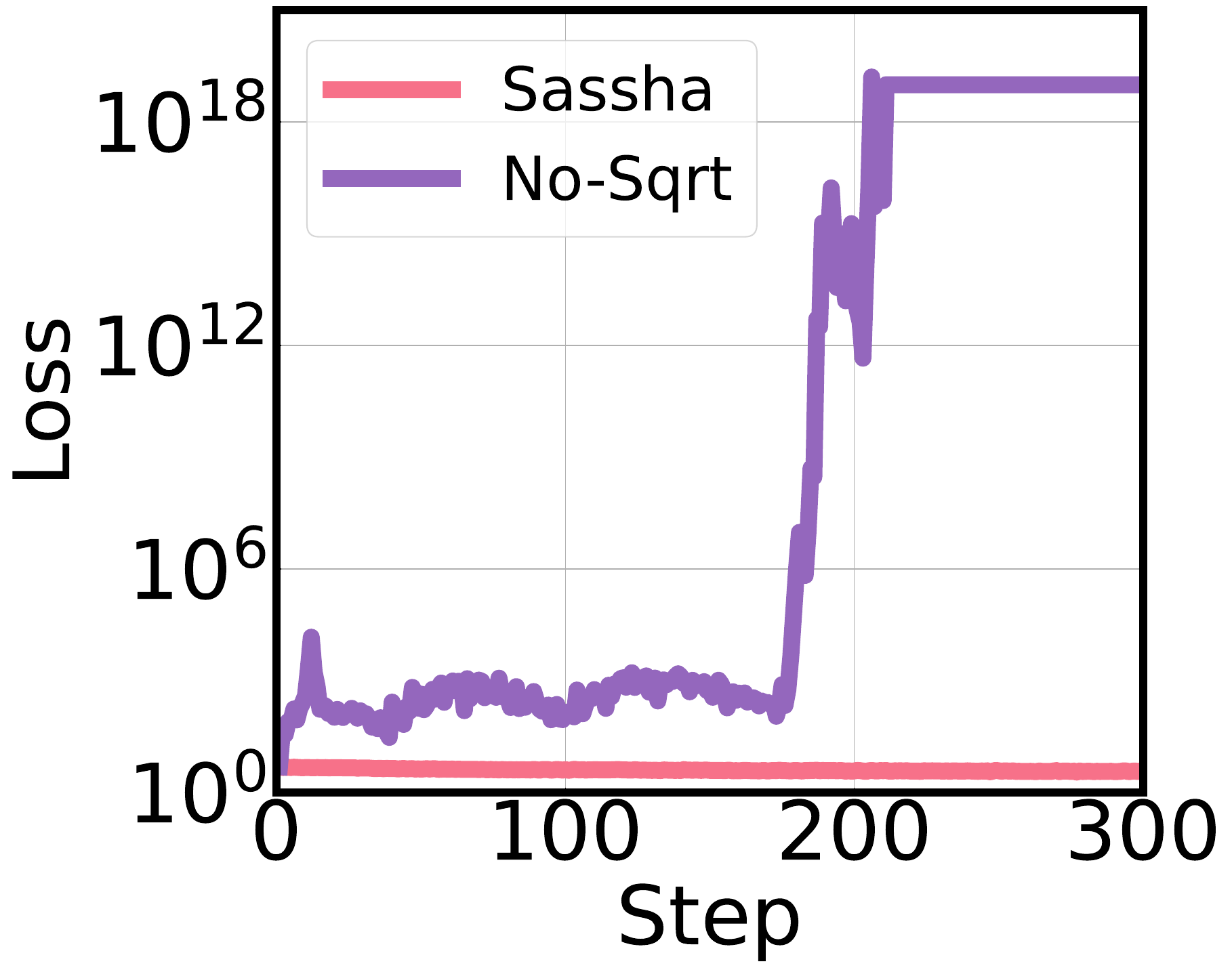}
        \caption{Train loss}
        \label{fig:sqrt_ablation_train_loss}
    \end{subfigure}
    \begin{subfigure}{0.4\linewidth}
        \centering
        \includegraphics[width=\linewidth,trim={0.5em 0.5em 1em 0},clip]{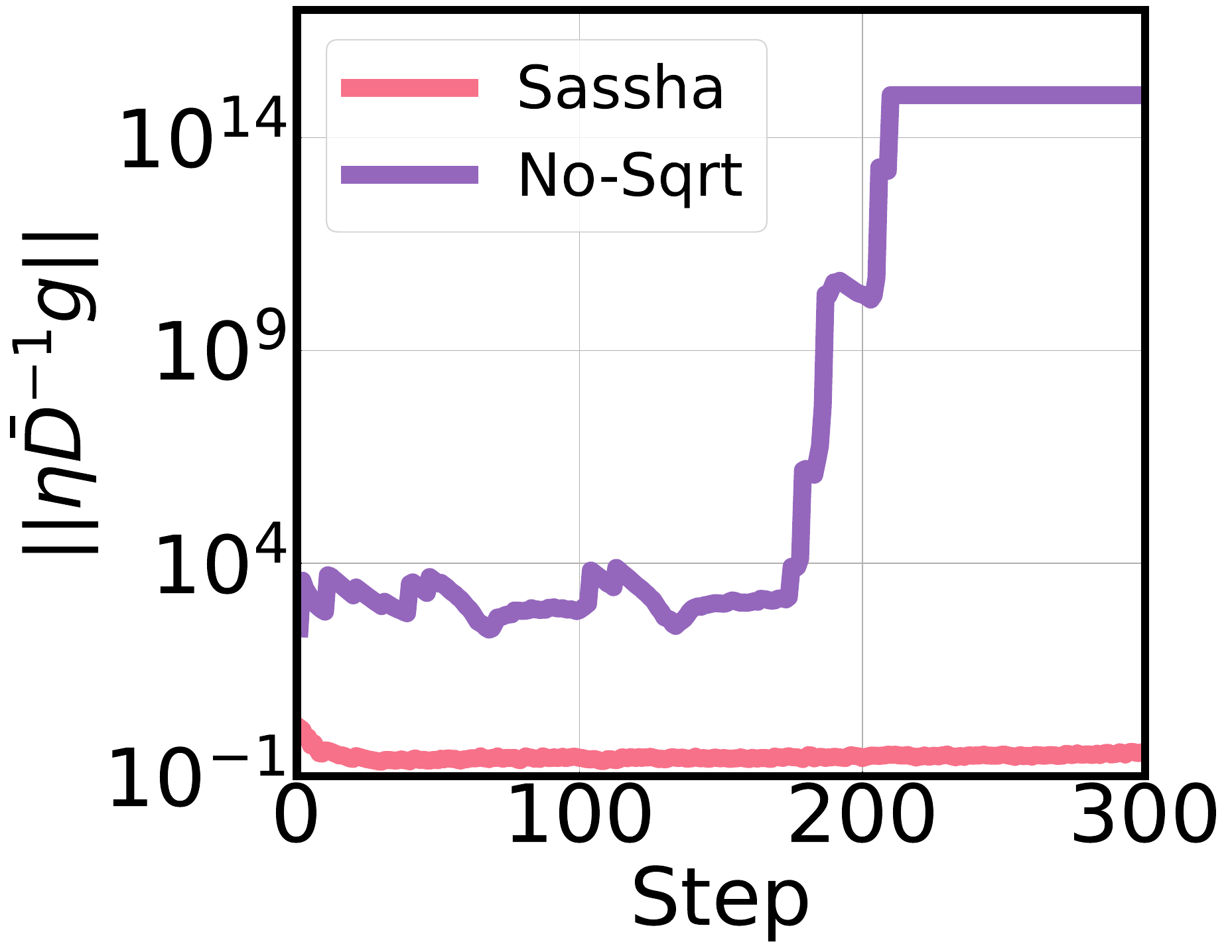}
        \caption{Update size}
        \label{fig:sqrt_ablation_update-size}
    \end{subfigure}
    \hfill
    \begin{subfigure}{0.4\linewidth}
        \centering
        \includegraphics[width=\linewidth,trim={0 1em 0 1.2em},clip]{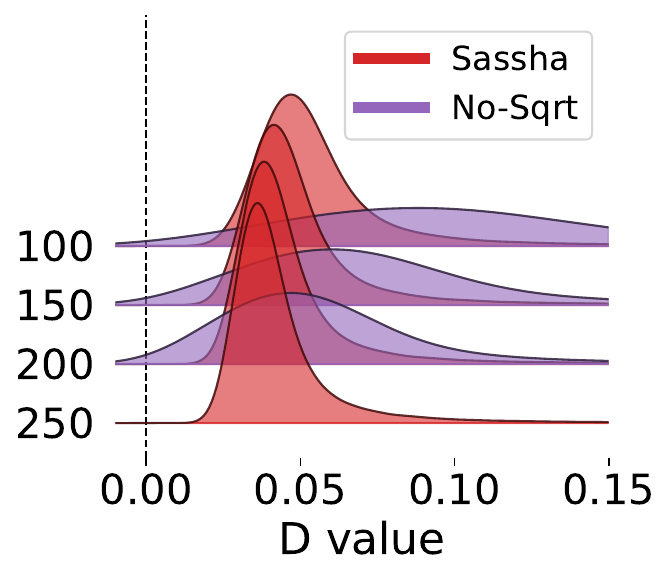}\\
        
        \caption{$D$ distribution}
        \label{fig:sqrt_ablation_ridgeline-plot}
    \end{subfigure}
    }
    \caption{
    Effects of square-root measured for ResNet-32/CIFAR-100;
    $D$ is set to be either $|\widehat{H}|^{1/2}$ for \sassha or $|\widehat{H}|$ for \texttt{No-Sqrt}.
    Sharpness minimization drives the diagonal Hessian entries move towards zero, causing divergence.
    The square-root in \sassha helps counteract this effect, stabilizing the training process.
    }
    \label{fig:sqrt_ablation}
\end{figure}

\subsection{Efficiency} \label{sec:emp_lazy_hess}

Here we show the effectiveness of lazy Hessian updates in \sassha.
The results are shown in \cref{fig:diagonal_hessian_comp}.
At first, we see that \sassha maintains its performance even at $k=100$, indicating that it is extremely robust to lazy Hessian updates (\cref{fig:lazy_results}).
We also measure the difference between the current and previous Hessians to validate lazy Hessian updates more directly (\cref{fig:lazy_hess_diff}).
The result shows that \sassha keeps the changes in Hessian to be small, and much smaller than other methods, indicating its advantage of robust reuse, and hence, computational efficiency.

We attribute this robustness to the sharpness minimization scheme incorporated in \sassha, which can potentially bias optimization toward the region of low curvature sensitivity.
To verify, we define local Hessian sensitivity as follows:
\begin{equation}\label{eq:diff_hessian(2)}
    \max_{\delta \sim \mathcal{N}(0, 1)}\left\|\widehat{H}\left(x+\rho\frac{\delta}{\|\delta\|_2}\right) - \widehat{H}(x)\right\|_F
\end{equation}
\ie, it measures the maximum change in Hessian induced from normalized random perturbations.
A smaller Hessian sensitivity would suggest reduced variability in the loss curvature, leading to greater relevance of the current Hessian for subsequent optimization steps.
We find that \sassha is far less sensitive compared to other methods (\cref{fig:lazy_perturbed}).

\begin{figure}[t!] 
    \centering
    \begin{minipage}{\linewidth}
        \centering
        \hspace{1.2em}
        \includegraphics[width=0.9\linewidth, trim={0 0 0 0},clip]{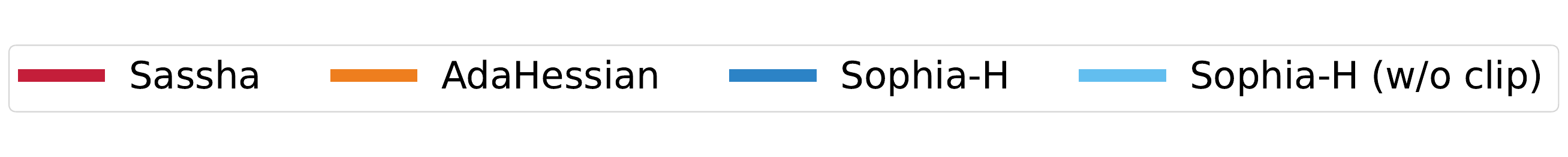} 
        \vspace{-0.6em}
    \end{minipage}

    \resizebox{\linewidth}{!}{
        \begin{subfigure}{0.288\linewidth}
        \centering%
            \includegraphics[width=\textwidth,trim={0 -1em 1.2em 0.5em},clip]{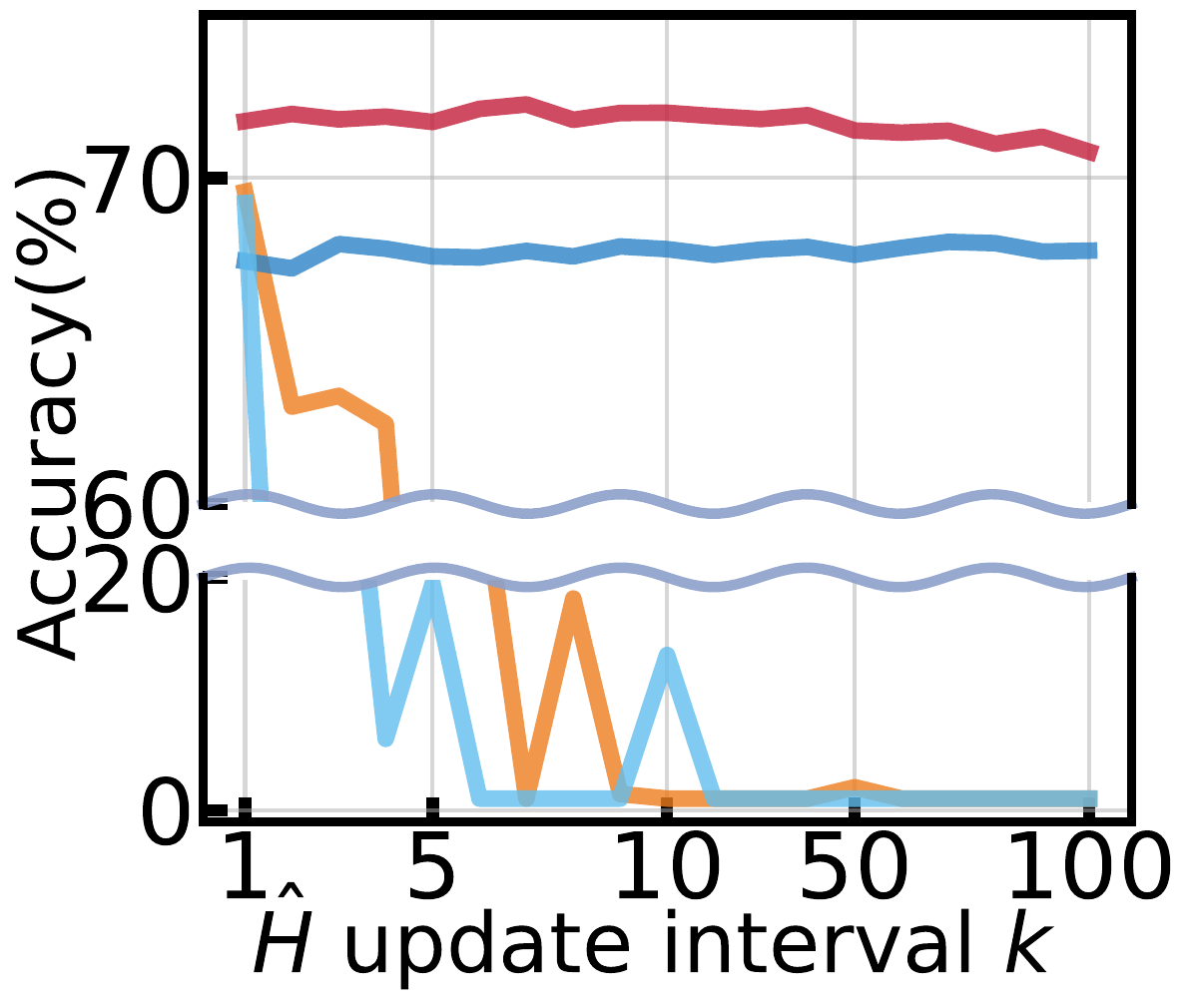}
            \caption{Lazy Hessian}
            \label{fig:lazy_results}
        \end{subfigure}
        \begin{subfigure}{0.305\linewidth}
            \centering
            \includegraphics[width=\textwidth,trim={2em 1.4em 0 0.1em},clip]{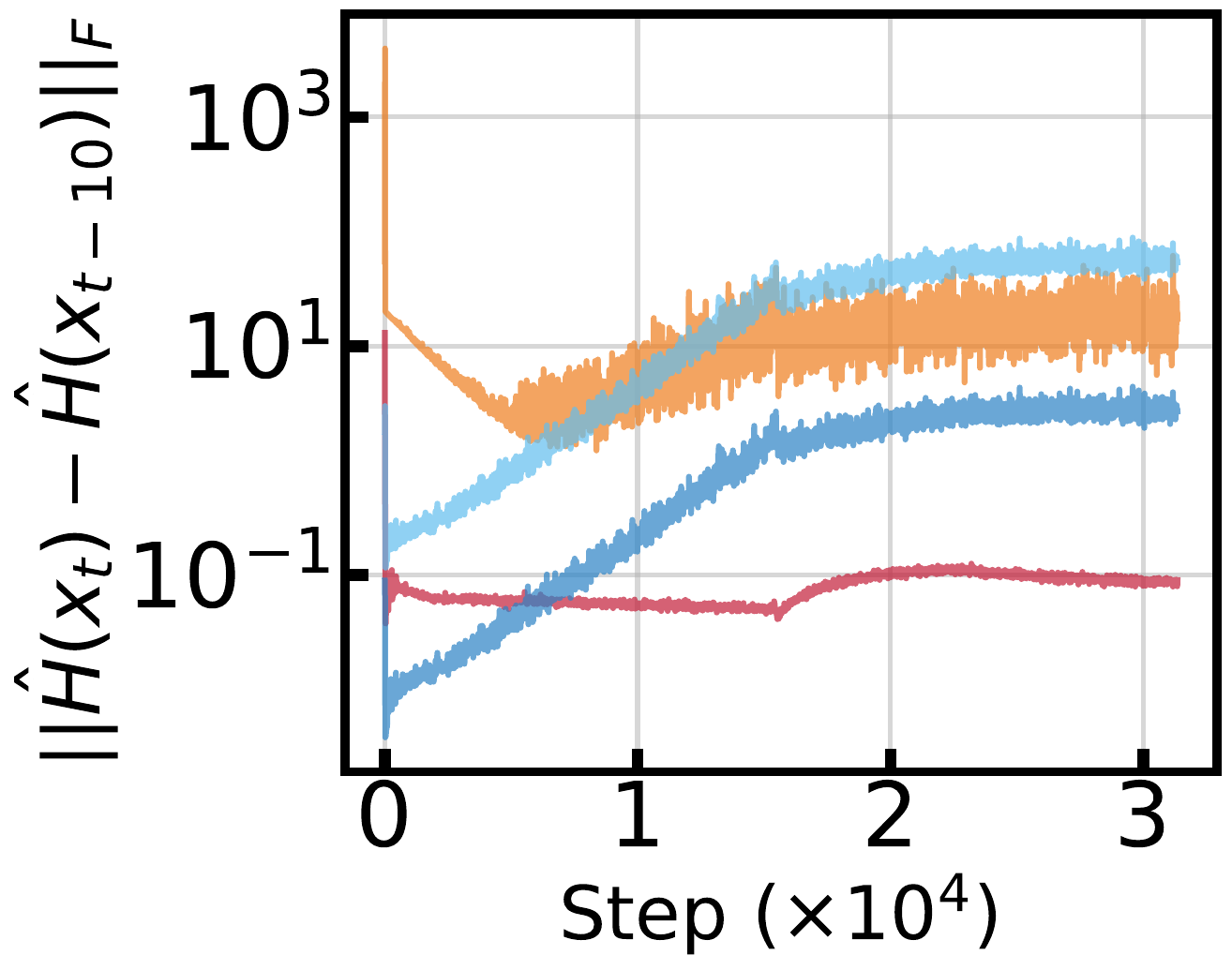}
            \caption{$\widehat{H}$ change}
            \label{fig:lazy_hess_diff}
        \end{subfigure}%
        \begin{subfigure}{0.32\linewidth}
            \centering
            \includegraphics[width=\linewidth,trim={0 -0.7em 0 0.3em},clip]{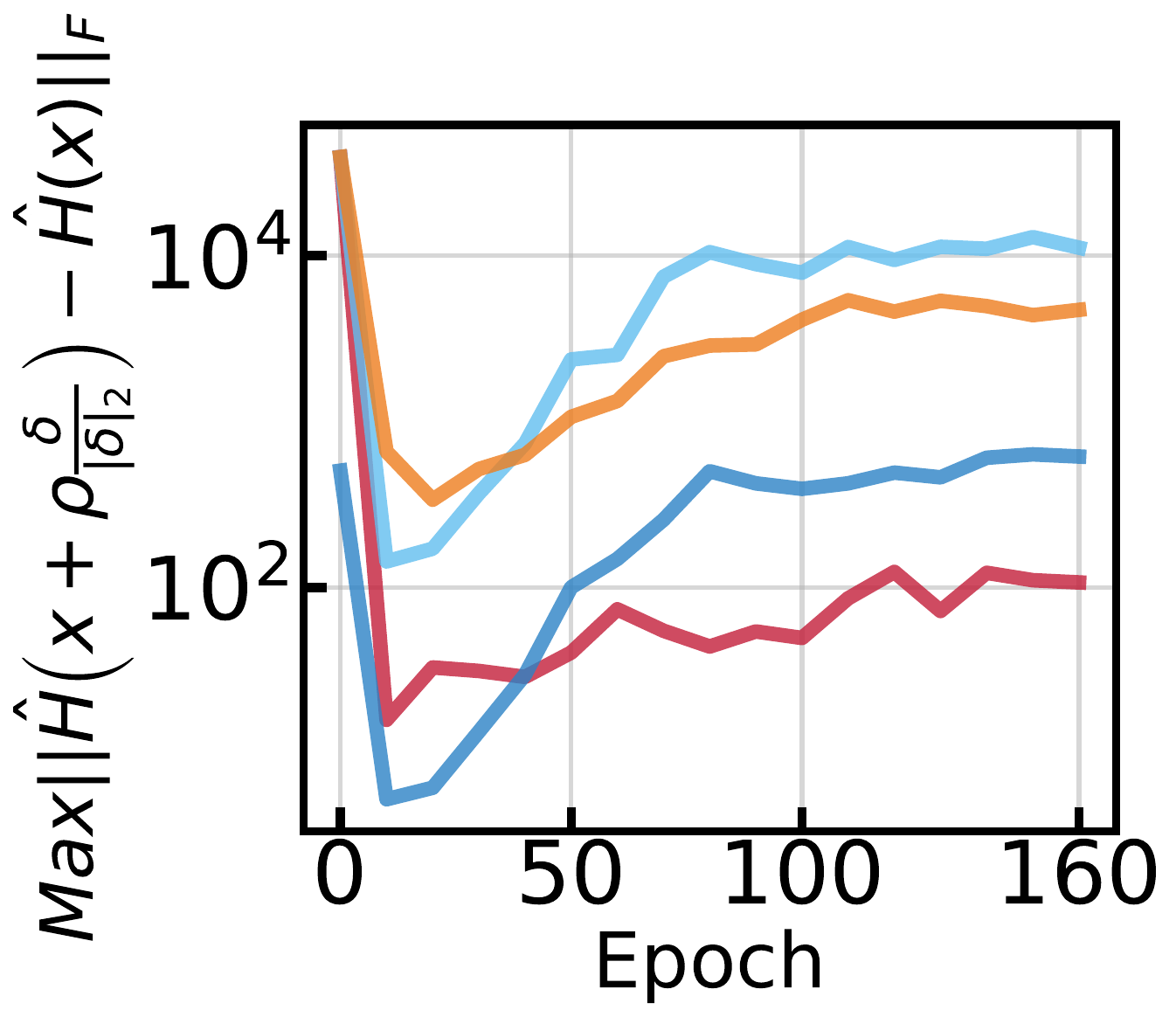}
            \caption{Local sensitivity}
            \label{fig:lazy_perturbed}
        \end{subfigure}
    }
    \caption{
    Effect of lazy Hessian for ResNet-32/CIFAR-100.
    \sassha stays within the region where the Hessian varies small.
    }
    \label{fig:diagonal_hessian_comp} %
\end{figure}

\subsection{Cost}\label{sec:cost}

Second-order methods can be highly costly.
In this section, we discuss the computational cost of \sassha and reveal its competitiveness to other methods.

\sassha requires one gradient computation (\texttt{GC}) in the sharpness minimization step, one Hessian-vector product (\texttt{HVP}) for diagonal Hessian computation, and an additional \texttt{GC} in the descent step.
That is, a total of $2$\texttt{GC}s and $1$\texttt{HVP} are required.
However, with lazy Hessian updates, the number of \texttt{HVP}s reduces drastically to $ 1 / k $.
With $ k = 10 $ as the default value used in this work, this scales down to $0.1$\texttt{HVP}s.

\begin{table}[t!]
    \centering
    \caption{
    Average wall-clock time per epoch (\texttt{s}) and the theoretical cost of different methods.
    \sassha can be an effective alternative to existing methods for its enhanced generalization performance.
    } 
    \vskip 0.1in
    \resizebox{\linewidth}{!}{%
    \begin{tabular}{l|r|r|r|r|r|r|r}
    \toprule
    \multirow{2.5}{*}{Method}
    & \multicolumn{4}{c|}{Cost}
    & \multicolumn{1}{c|}{CIFAR10} 
    & \multicolumn{1}{c|}{CIFAR100} 
    & \multicolumn{1}{c}{ImageNet} \\ 
    \cmidrule(l{3pt}r{3pt}){2-5}
    \cmidrule(l{3pt}r{3pt}){6-8} 
     & \multicolumn{1}{c|}{Descent}
     & \multicolumn{1}{c|}{Sharpness}
     & \multicolumn{1}{c|}{Hessian}
     & \multicolumn{1}{c|}{Total}
     & \multicolumn{1}{c|}{ResNet32} 
     & \multicolumn{1}{c|}{WRN28-10}  
     & \multicolumn{1}{c}{ViT-small} \\ 
    \midrule
    AdamW  & 1 \texttt{GC}
    & 0 \texttt{GC}
    & 0 \texttt{HVP}
    & 1 \texttt{GC} & 5.03 & 59.29 & 976.56 \\
    
    SAM    & 1 \texttt{GC}
    & 1 \texttt{GC}
    & 0 \texttt{HVP}
    & 2 \texttt{GC} & 9.16 & 118.46  &  1302.08\\ 
    
    AdaHessian &  1 \texttt{GC}
    & 0 \texttt{GC}
    & 1 \texttt{HVP}
    & 4 \texttt{GC} & 33.75 & 296.63  & 2489.07 \\ \midrule
    
    \rowcolor{green!20} \sassha & 1 \texttt{GC}
    & 1 \texttt{GC}
    & 0.1 \texttt{HVP}
    & 2.3 \texttt{GC} & 12.00   & 142.06   & 1377.20 \\ 
    
    \rowcolor{green!20} \msassha & 1 \texttt{GC}
    & 0 \texttt{GC}
    & 0.1 \texttt{HVP}
    & 1.3 \texttt{GC}  & 8.91   & 84.12 & 1065.40 \\ 
    \bottomrule
    \end{tabular}
    }
    \label{tab:costs}
\end{table}

It turns out that this is critical to the utility of \sassha, because $1$\texttt{HVP} is known to take about $ \times 3 $ the computation time of $1$\texttt{GC} in practice \citep{dagrou2024how}.
Compared to conventional second-order methods ($1$\texttt{GC} $+$ $1$\texttt{HVP} $\simeq$ $4$\texttt{GC}s), the cost of \sassha can roughly be a half of that ($2.3$\texttt{GC}s).
It is also comparable to standard SAM variants ($2$\texttt{GC}s).

Furthermore, we can leverage a momentum of gradients in the perturbation step to reduce the cost.
This variant \msassha requires only $1.3$\texttt{GC}s with minimal decrease in performance.
Notably, \msassha still outperforms standard first-order methods like SGD and AdamW (\cref{app:msassha}).

To verify, we measure the average wall-clock times and present the results in \cref{tab:costs}.
First, one can see that the theoretical cost is reflected well on the actual cost;
\ie, the time measurements scales proportionally roughly well with respect to the total cost.
More importantly, this result indicates the potential of \sassha for performance-critical applications.
Considering its well-balanced cost, and that it has been challenging to employ second-order methods efficiently for large-scale tasks without sacrificing performance, \sassha can be a reasonable addition to the lineup.

\section{Conclusion}

In this work, we focus on addressing the issue of poor generalization in approximate second-order methods.
Our empirical analysis indicates that this limitation may be attributed to their tendency to converge to sharp minima, which are known to correlate with weaker generalization performance.
To this end, we propose a new method called \sassha that stably minimizes sharpness within the framework of second-order optimization.
\sassha converges to flat solutions and achieves state-of-the-art performance within this class.
\sassha also performs competitively to widely-used first-order, adaptive, and sharpness-aware methods.
\sassha achieves this efficiently through lazy Hessian updates, to which it is robust, and does so without requiring extra hyperparameter tuning.
Moreover, \sassha exhibits strong resilience to label noise.
All of these are rigorously assessed with extensive experiments.

Nonetheless, there are still many limitations to be addressed to further improve this work.
Some examples may include, but are not limited to, extending experiments to various models and different data of extreme scales, as well as developing theoretical properties such as convergence rate, generalization bound, and implicit bias, all to more rigorously confirm the value of \sassha.
Seeing it as an exciting opportunity, we plan to investigate further in future work.
\section*{Acknowledgement}
This work was partly supported by the Institute of Information \& communications Technology
Planning \& Evaluation (IITP) grant funded by the Korean government (MSIT) (IITP-2019-0-01906, 
Artificial Intelligence Graduate School Program (POSTECH), RS-2024-00338140, Development of learning and utilization technology to reflect sustainability of generative language models and up-to-dateness over time), and the National Research
Foundation of Korea (NRF) grant funded by the Korean government (MSIT) (
NRF-2022R1F1A1064569, RS-2023-00210466, RS-2023-00265444).
\section*{Impact statement}
This paper contributes to advancing second-order optimization, with potential implications for both theoretical insights and practical applications. While our work does not present immediate concerns warranting specific emphasis, we recognize that progress in this field may have broader societal impacts. We remain committed to engaging in discussions on the broader implications of our research should the need arise in the future.


\bibliography{example_paper}

\begin{thebibliography}{76}
\providecommand{\natexlab}[1]{#1}
\providecommand{\url}[1]{\texttt{#1}}
\expandafter\ifx\csname urlstyle\endcsname\relax
  \providecommand{\doi}[1]{doi: #1}\else
  \providecommand{\doi}{doi: \begingroup \urlstyle{rm}\Url}\fi

\bibitem[Agarwala \& Dauphin(2023)Agarwala and Dauphin]{agarwala2023sam}
Agarwala, A. and Dauphin, Y.
\newblock Sam operates far from home: eigenvalue regularization as a dynamical phenomenon.
\newblock \emph{ICML}, 2023.

\bibitem[Amari et~al.(2000)Amari, Park, and Fukumizu]{amari2000adaptive}
Amari, S.-i., Park, H., and Fukumizu, K.
\newblock Adaptive method of realizing natural gradient learning for multilayer perceptrons.
\newblock \emph{Neural computation}, 2000.

\bibitem[Amari et~al.(2021)Amari, Ba, Grosse, Li, Nitanda, Suzuki, Wu, and Xu]{amari2021when}
Amari, S.-i., Ba, J., Grosse, R.~B., Li, X., Nitanda, A., Suzuki, T., Wu, D., and Xu, J.
\newblock When does preconditioning help or hurt generalization?
\newblock \emph{ICLR}, 2021.

\bibitem[Baek et~al.(2024)Baek, Kolter, and Raghunathan]{baek2024why}
Baek, C., Kolter, J.~Z., and Raghunathan, A.
\newblock Why is {SAM} robust to label noise?
\newblock \emph{ICLR}, 2024.

\bibitem[Bahri et~al.(2022)Bahri, Mobahi, and Tay]{bahri2022sharpness}
Bahri, D., Mobahi, H., and Tay, Y.
\newblock Sharpness-aware minimization improves language model generalization.
\newblock \emph{ACL}, 2022.

\bibitem[Becker et~al.(2024)Becker, Altrock, and Risse]{becker2024momentum}
Becker, M., Altrock, F., and Risse, B.
\newblock Momentum-sam: Sharpness aware minimization without computational overhead.
\newblock \emph{arXiv}, 2024.

\bibitem[Becker et~al.(1988)Becker, Le~Cun, et~al.]{becker1988improving}
Becker, S., Le~Cun, Y., et~al.
\newblock Improving the convergence of back-propagation learning with second order methods.
\newblock \emph{CMSS}, 1988.

\bibitem[Beyer et~al.(2022)Beyer, Zhai, and Kolesnikov]{beyer2022better}
Beyer, L., Zhai, X., and Kolesnikov, A.
\newblock Better plain vit baselines for imagenet-1k.
\newblock \emph{arXiv}, 2022.

\bibitem[Botev et~al.(2017)Botev, Ritter, and Barber]{botev2017practical}
Botev, A., Ritter, H., and Barber, D.
\newblock Practical gauss-newton optimisation for deep learning.
\newblock \emph{ICML}, 2017.

\bibitem[Bottou et~al.(2018)Bottou, Curtis, and Nocedal]{bottou}
Bottou, L., Curtis, F.~E., and Nocedal, J.
\newblock Optimization methods for large-scale machine learning.
\newblock \emph{SIAM Review}, 2018.

\bibitem[Byrd et~al.(2016)Byrd, Hansen, Nocedal, and Singer]{byrd2016stochastic}
Byrd, R.~H., Hansen, S.~L., Nocedal, J., and Singer, Y.
\newblock A stochastic quasi-newton method for large-scale optimization.
\newblock \emph{SIAM Journal on Optimization}, 2016.

\bibitem[Chaudhari et~al.(2017)Chaudhari, Choromanska, Soatto, LeCun, Baldassi, Borgs, Chayes, Sagun, and Zecchina]{chaudhari2017entropy}
Chaudhari, P., Choromanska, A., Soatto, S., LeCun, Y., Baldassi, C., Borgs, C., Chayes, J., Sagun, L., and Zecchina, R.
\newblock Entropy-{SGD}: Biasing gradient descent into wide valleys.
\newblock \emph{ICLR}, 2017.

\bibitem[Chen et~al.(2022)Chen, Hsieh, and Gong]{chenvision}
Chen, X., Hsieh, C.-J., and Gong, B.
\newblock When vision transformers outperform resnets without pre-training or strong data augmentations.
\newblock \emph{ICLR}, 2022.

\bibitem[Dagr{\'e}ou et~al.(2024)Dagr{\'e}ou, Ablin, Vaiter, and Moreau]{dagrou2024how}
Dagr{\'e}ou, M., Ablin, P., Vaiter, S., and Moreau, T.
\newblock How to compute hessian-vector products?
\newblock \emph{The Third Blogpost Track at ICLR}, 2024.

\bibitem[Dauphin et~al.(2015)Dauphin, De~Vries, and Bengio]{dauphin2015equilibrated}
Dauphin, Y., De~Vries, H., and Bengio, Y.
\newblock Equilibrated adaptive learning rates for non-convex optimization.
\newblock \emph{NeurIPS}, 2015.

\bibitem[Demeniconi \& Chawla(2020)Demeniconi and Chawla]{doi:10.1137/1.9781611976236}
Demeniconi, C. and Chawla, N.
\newblock Second-order optimization for non-convex machine learning: an empirical study.
\newblock \emph{Society for Industrial and Applied Mathematics}, 2020.

\bibitem[DeVries \& Taylor(2017)DeVries and Taylor]{devries2017improved}
DeVries, T. and Taylor, G.~W.
\newblock Improved regularization of convolutional neural networks with cutout.
\newblock \emph{arXiv}, 2017.

\bibitem[Dinh et~al.(2017)Dinh, Pascanu, Bengio, and Bengio]{dinh17b}
Dinh, L., Pascanu, R., Bengio, S., and Bengio, Y.
\newblock Sharp minima can generalize for deep nets.
\newblock \emph{ICML}, 2017.

\bibitem[Doikov et~al.(2023)Doikov, Chayti, and Jaggi]{lazyhessian}
Doikov, N., Chayti, E.~M., and Jaggi, M.
\newblock Second-order optimization with lazy hessians.
\newblock \emph{ICML}, 2023.

\bibitem[Du et~al.(2022{\natexlab{a}})Du, Yan, Feng, Zhou, Zhen, Goh, and Tan]{esam}
Du, J., Yan, H., Feng, J., Zhou, J.~T., Zhen, L., Goh, R. S.~M., and Tan, V.
\newblock Efficient sharpness-aware minimization for improved training of neural networks.
\newblock \emph{ICLR}, 2022{\natexlab{a}}.

\bibitem[Du et~al.(2022{\natexlab{b}})Du, Zhou, Feng, Tan, and Zhou]{saf}
Du, J., Zhou, D., Feng, J., Tan, V., and Zhou, J.~T.
\newblock Sharpness-aware training for free.
\newblock \emph{NeurIPS}, 2022{\natexlab{b}}.

\bibitem[Duchi et~al.(2011)Duchi, Hazan, and Singer]{duchi2011adaptive}
Duchi, J., Hazan, E., and Singer, Y.
\newblock Adaptive subgradient methods for online learning and stochastic optimization.
\newblock \emph{JMLR}, 2011.

\bibitem[Dziugaite \& Roy(2017)Dziugaite and Roy]{DR17}
Dziugaite, G.~K. and Roy, D.~M.
\newblock Computing nonvacuous generalization bounds for deep (stochastic) neural networks with many more parameters than training data.
\newblock \emph{UAI}, 2017.

\bibitem[Foret et~al.(2021)Foret, Kleiner, Mobahi, and Neyshabur]{sam}
Foret, P., Kleiner, A., Mobahi, H., and Neyshabur, B.
\newblock Sharpness-aware minimization for efficiently improving generalization.
\newblock \emph{ICLR}, 2021.

\bibitem[Gao et~al.(2020)Gao, Biderman, Black, Golding, Hoppe, Foster, Phang, He, Thite, Nabeshima, Presser, and Leahy]{pile}
Gao, L., Biderman, S., Black, S., Golding, L., Hoppe, T., Foster, C., Phang, J., He, H., Thite, A., Nabeshima, N., Presser, S., and Leahy, C.
\newblock The {P}ile: An 800gb dataset of diverse text for language modeling.
\newblock \emph{arXiv}, 2020.

\bibitem[Gastaldi(2017)]{gastaldi2017shake}
Gastaldi, X.
\newblock Shake-shake regularization.
\newblock \emph{arXiv}, 2017.

\bibitem[Goldfarb et~al.(2020)Goldfarb, Ren, and Bahamou]{goldfarb2020practical}
Goldfarb, D., Ren, Y., and Bahamou, A.
\newblock Practical quasi-newton methods for training deep neural networks.
\newblock \emph{NeurIPS}, 2020.

\bibitem[Gomes et~al.(2024)Gomes, Zhang, Belilovsky, Wolf, and Hosseini]{gomes2024adafisher}
Gomes, D.~M., Zhang, Y., Belilovsky, E., Wolf, G., and Hosseini, M.~S.
\newblock Adafisher: Adaptive second order optimization via fisher information.
\newblock \emph{arXiv}, 2024.

\bibitem[Gower et~al.(2016)Gower, Goldfarb, and Richt{\'a}rik]{gower2016stochastic}
Gower, R., Goldfarb, D., and Richt{\'a}rik, P.
\newblock Stochastic block bfgs: Squeezing more curvature out of data.
\newblock \emph{ICML}, 2016.

\bibitem[Gupta et~al.(2018)Gupta, Koren, and Singer]{gupta2018shampoo}
Gupta, V., Koren, T., and Singer, Y.
\newblock Shampoo: Preconditioned stochastic tensor optimization.
\newblock In \emph{ICLR}, 2018.

\bibitem[Hardt et~al.(2016)Hardt, Recht, and Singer]{hardt2016train}
Hardt, M., Recht, B., and Singer, Y.
\newblock Train faster, generalize better: Stability of stochastic gradient descent.
\newblock \emph{ICML}, 2016.

\bibitem[He et~al.(2016)He, Zhang, Ren, and Sun]{he2016deep}
He, K., Zhang, X., Ren, S., and Sun, J.
\newblock Deep residual learning for image recognition.
\newblock \emph{CVPR}, 2016.

\bibitem[Hinton et~al.(2012)Hinton, Srivastava, and Swersky]{hinton2012neural}
Hinton, G., Srivastava, N., and Swersky, K.
\newblock Neural networks for machine learning lecture 6a overview of mini-batch gradient descent.
\newblock \emph{Coursera Lecture slides https://class. coursera. org/neuralnets-2012-001/lecture}, 2012.

\bibitem[Hochreiter \& Schmidhuber(1994)Hochreiter and Schmidhuber]{NIPS1994_Hochreiter}
Hochreiter, S. and Schmidhuber, J.
\newblock Simplifying neural nets by discovering flat minima.
\newblock \emph{NeurIPS}, 1994.

\bibitem[Hochreiter \& Schmidhuber(1997)Hochreiter and Schmidhuber]{Hochreiter1997}
Hochreiter, S. and Schmidhuber, J.
\newblock Flat minima.
\newblock \emph{Neural Computation}, 1997.

\bibitem[Hutchinson(1989)]{hutchinson}
Hutchinson, M.
\newblock A stochastic estimator of the trace of the influence matrix for laplacian smoothing splines.
\newblock \emph{Communications in Statistics - Simulation and Computation}, 1989.

\bibitem[Iandola et~al.(2020)Iandola, Shaw, Krishna, and Keutzer]{iandola2020squeezebert}
Iandola, F., Shaw, A., Krishna, R., and Keutzer, K.
\newblock Squeezebert: What can computer vision teach nlp about efficient neural networks?
\newblock \emph{SustaiNLP: Workshop on Simple and Efficient Natural Language Processing}, 2020.

\bibitem[Izmailov et~al.(2018)Izmailov, Podoprikhin, Garipov, Vetrov, and Wilson]{izmailov2018averaging}
Izmailov, P., Podoprikhin, D., Garipov, T., Vetrov, D., and Wilson, A.~G.
\newblock Averaging weights leads to wider optima and better generalization.
\newblock \emph{UAI}, 2018.

\bibitem[Jastrz{\k{e}}bski et~al.(2018)Jastrz{\k{e}}bski, Kenton, Ballas, Fischer, Bengio, and Storkey]{jastrzkebski2018relation}
Jastrz{\k{e}}bski, S., Kenton, Z., Ballas, N., Fischer, A., Bengio, Y., and Storkey, A.
\newblock On the relation between the sharpest directions of dnn loss and the sgd step length.
\newblock \emph{ICLR}, 2018.

\bibitem[Jiang et~al.(2020)Jiang, Neyshabur, Mobahi, Krishnan, and Bengio]{2020Fantastic}
Jiang, Y., Neyshabur, B., Mobahi, H., Krishnan, D., and Bengio, S.
\newblock Fantastic generalization measures and where to find them.
\newblock \emph{ICLR}, 2020.

\bibitem[Karimireddy et~al.(2019)Karimireddy, Rebjock, Stich, and Jaggi]{karimireddy2019error}
Karimireddy, S.~P., Rebjock, Q., Stich, S., and Jaggi, M.
\newblock Error feedback fixes signsgd and other gradient compression schemes.
\newblock \emph{ICML}, 2019.

\bibitem[Keskar et~al.(2017)Keskar, Mudigere, Nocedal, Smelyanskiy, and Tang]{keskar2016large}
Keskar, N.~S., Mudigere, D., Nocedal, J., Smelyanskiy, M., and Tang, P. T.~P.
\newblock On large-batch training for deep learning: Generalization gap and sharp minima.
\newblock \emph{ICLR}, 2017.

\bibitem[Khanh et~al.(2024)Khanh, Luong, Mordukhovich, and Tran]{khanh2024fundamental}
Khanh, P.~D., Luong, H.-C., Mordukhovich, B.~S., and Tran, D.~B.
\newblock Fundamental convergence analysis of sharpness-aware minimization.
\newblock \emph{NeurIPS}, 2024.

\bibitem[Kingma \& Ba(2015)Kingma and Ba]{kingma2014adam}
Kingma, D.~P. and Ba, J.
\newblock Adam: A method for stochastic optimization.
\newblock \emph{ICLR}, 2015.

\bibitem[Kiros(2013)]{kiros2013training}
Kiros, R.
\newblock Training neural networks with stochastic hessian-free optimization.
\newblock \emph{arXiv}, 2013.

\bibitem[Kunstner et~al.(2019)Kunstner, Hennig, and Balles]{kunstner2019limitations}
Kunstner, F., Hennig, P., and Balles, L.
\newblock Limitations of the empirical fisher approximation for natural gradient descent.
\newblock \emph{NeurIPS}, 32, 2019.

\bibitem[Kwon et~al.(2021)Kwon, Kim, Park, and Choi]{asam}
Kwon, J., Kim, J., Park, H., and Choi, I.~K.
\newblock Asam: Adaptive sharpness-aware minimization for scale-invariant learning of deep neural networks.
\newblock \emph{ICML}, 2021.

\bibitem[Li et~al.(2023)Li, Rakhlin, and Jadbabaie]{li2023convergence}
Li, H., Rakhlin, A., and Jadbabaie, A.
\newblock Convergence of adam under relaxed assumptions.
\newblock \emph{NeurIPS}, 2023.

\bibitem[Liu et~al.(2024)Liu, Li, Hall, Liang, and Ma]{sophia}
Liu, H., Li, Z., Hall, D., Liang, P., and Ma, T.
\newblock Sophia: A scalable stochastic second-order optimizer for language model pre-training.
\newblock \emph{ICLR}, 2024.

\bibitem[Liu et~al.(2022)Liu, Mai, Chen, Hsieh, and You]{looksam}
Liu, Y., Mai, S., Chen, X., Hsieh, C.-J., and You, Y.
\newblock Towards efficient and scalable sharpness-aware minimization.
\newblock \emph{CVPR}, 2022.

\bibitem[Loshchilov \& Hutter(2018)Loshchilov and Hutter]{loshchilov2018decoupled}
Loshchilov, I. and Hutter, F.
\newblock Decoupled weight decay regularization.
\newblock \emph{ICLR}, 2018.

\bibitem[Martens \& Grosse(2015)Martens and Grosse]{martens2015optimizing}
Martens, J. and Grosse, R.
\newblock Optimizing neural networks with kronecker-factored approximate curvature.
\newblock \emph{ICML}, 2015.

\bibitem[Martens et~al.(2010)]{martens2010deep}
Martens, J. et~al.
\newblock Deep learning via hessian-free optimization.
\newblock \emph{ICML}, 2010.

\bibitem[Merity et~al.(2022)Merity, Xiong, Bradbury, and Socher]{merity2022pointer}
Merity, S., Xiong, C., Bradbury, J., and Socher, R.
\newblock Pointer sentinel mixture models.
\newblock \emph{ICLR}, 2022.

\bibitem[Mi et~al.(2022)Mi, Shen, Ren, Zhou, Sun, Ji, and Tao]{mi2022make}
Mi, P., Shen, L., Ren, T., Zhou, Y., Sun, X., Ji, R., and Tao, D.
\newblock Make sharpness-aware minimization stronger: A sparsified perturbation approach.
\newblock \emph{NeurIPS}, 2022.

\bibitem[Natarajan et~al.(2013)Natarajan, Dhillon, Ravikumar, and Tewari]{natarajan2013learning}
Natarajan, N., Dhillon, I.~S., Ravikumar, P.~K., and Tewari, A.
\newblock Learning with noisy labels.
\newblock \emph{NeurIPS}, 2013.

\bibitem[Neyshabur et~al.(2017)Neyshabur, Bhojanapalli, Mcallester, and Srebro]{NIPS2017_Neyshabur}
Neyshabur, B., Bhojanapalli, S., Mcallester, D., and Srebro, N.
\newblock Exploring generalization in deep learning.
\newblock \emph{NeurIPS}, 2017.

\bibitem[Orvieto et~al.(2022)Orvieto, Kersting, Proske, Bach, and Lucchi]{antipgd_orvieto22a}
Orvieto, A., Kersting, H., Proske, F., Bach, F., and Lucchi, A.
\newblock Anticorrelated noise injection for improved generalization.
\newblock \emph{ICML}, 2022.

\bibitem[Qu et~al.(2022)Qu, Li, Duan, Liu, Tang, and Lu]{qu2022generalized}
Qu, Z., Li, X., Duan, R., Liu, Y., Tang, B., and Lu, Z.
\newblock Generalized federated learning via sharpness aware minimization.
\newblock \emph{ICML}, 2022.

\bibitem[Radford et~al.(2019)Radford, Wu, Child, Luan, Amodei, Sutskever, et~al.]{radford2019language}
Radford, A., Wu, J., Child, R., Luan, D., Amodei, D., Sutskever, I., et~al.
\newblock Language models are unsupervised multitask learners.
\newblock \emph{OpenAI blog}, 2019.

\bibitem[Roosta-Khorasani \& Ascher(2014)Roosta-Khorasani and Ascher]{hutchinson2}
Roosta-Khorasani, F. and Ascher, U.
\newblock Improved bounds on sample size for implicit matrix trace estimators.
\newblock \emph{FoCM}, 2014.

\bibitem[Schraudolph(2002)]{schraudolph2002fast}
Schraudolph, N.~N.
\newblock Fast curvature matrix-vector products for second-order gradient descent.
\newblock \emph{Neural computation}, 2002.

\bibitem[Shin et~al.(2025)Shin, Lee, Andriushchenko, and Lee]{shin2024critical}
Shin, S., Lee, D., Andriushchenko, M., and Lee, N.
\newblock Critical influence of overparameterization on sharpness-aware minimization.
\newblock \emph{UAI}, 2025.

\bibitem[Wadia et~al.(2021)Wadia, Duckworth, Schoenholz, Dyer, and Sohl-Dickstein]{wadia2021whitening}
Wadia, N., Duckworth, D., Schoenholz, S.~S., Dyer, E., and Sohl-Dickstein, J.
\newblock Whitening and second order optimization both make information in the dataset unusable during training, and can reduce or prevent generalization.
\newblock \emph{ICML}, 2021.

\bibitem[Wang et~al.(2018)Wang, Singh, Michael, Hill, Levy, and Bowman]{wang2018glue}
Wang, A., Singh, A., Michael, J., Hill, F., Levy, O., and Bowman, S.~R.
\newblock Glue: A multi-task benchmark and analysis platform for natural language understanding.
\newblock \emph{ICLR}, 2018.

\bibitem[Wilson et~al.(2017)Wilson, Roelofs, Stern, Srebro, and Recht]{wilson2017marginal}
Wilson, A.~C., Roelofs, R., Stern, M., Srebro, N., and Recht, B.
\newblock The marginal value of adaptive gradient methods in machine learning.
\newblock \emph{NeurIPS}, 2017.

\bibitem[Wolf et~al.(2020)Wolf, Debut, Sanh, Chaumond, Delangue, Moi, Cistac, Rault, Louf, Funtowicz, Davison, Shleifer, von Platen, Ma, Jernite, Plu, Xu, Scao, Gugger, Drame, Lhoest, and Rush]{wolf2020huggingfaces}
Wolf, T., Debut, L., Sanh, V., Chaumond, J., Delangue, C., Moi, A., Cistac, P., Rault, T., Louf, R., Funtowicz, M., Davison, J., Shleifer, S., von Platen, P., Ma, C., Jernite, Y., Plu, J., Xu, C., Scao, T.~L., Gugger, S., Drame, M., Lhoest, Q., and Rush, A.~M.
\newblock Huggingface's transformers: State-of-the-art natural language processing.
\newblock \emph{arXiv}, 2020.

\bibitem[Wu et~al.(2018)Wu, Ma, et~al.]{wu2018sgd}
Wu, L., Ma, C., et~al.
\newblock How sgd selects the global minima in over-parameterized learning: A dynamical stability perspective.
\newblock \emph{NeurIPS}, 31, 2018.

\bibitem[Xie et~al.(2020)Xie, Sato, and Sugiyama]{xie2020diffusion}
Xie, Z., Sato, I., and Sugiyama, M.
\newblock A diffusion theory for deep learning dynamics: Stochastic gradient descent exponentially favors flat minima.
\newblock \emph{ICLR}, 2020.

\bibitem[Yao et~al.(2020)Yao, Gholami, Keutzer, and Mahoney]{yao2020pyhessian}
Yao, Z., Gholami, A., Keutzer, K., and Mahoney, M.~W.
\newblock Pyhessian: Neural networks through the lens of the hessian.
\newblock \emph{IEEE BigData}, 2020.

\bibitem[Yao et~al.(2021)Yao, Gholami, Shen, Keutzer, and Mahoney]{adahessian}
Yao, Z., Gholami, A., Shen, S., Keutzer, K., and Mahoney, M.~W.
\newblock Adahessian: An adaptive second order optimizer for machine learning.
\newblock \emph{AAAI}, 2021.

\bibitem[Zagoruyko \& Komodakis(2016)Zagoruyko and Komodakis]{zagoruyko2016wide}
Zagoruyko, S. and Komodakis, N.
\newblock Wide residual networks.
\newblock \emph{BMVC}, 2016.

\bibitem[Zhang et~al.(2024)Zhang, Chen, Ding, Li, Sun, and Luo]{zhang2024why}
Zhang, Y., Chen, C., Ding, T., Li, Z., Sun, R., and Luo, Z.-Q.
\newblock Why transformers need adam: A hessian perspective.
\newblock \emph{NeurIPS}, 2024.

\bibitem[Zhou et~al.(2020)Zhou, Feng, Ma, Xiong, Hoi, et~al.]{zhou2020towards}
Zhou, P., Feng, J., Ma, C., Xiong, C., Hoi, S. C.~H., et~al.
\newblock Towards theoretically understanding why sgd generalizes better than adam in deep learning.
\newblock \emph{NeurIPS}, 2020.

\bibitem[Zhuang et~al.(2022)Zhuang, Gong, Yuan, Cui, Adam, Dvornek, sekhar tatikonda, s~Duncan, and Liu]{gsam}
Zhuang, J., Gong, B., Yuan, L., Cui, Y., Adam, H., Dvornek, N.~C., sekhar tatikonda, s~Duncan, J., and Liu, T.
\newblock Surrogate gap minimization improves sharpness-aware training.
\newblock \emph{ICLR}, 2022.

\bibitem[Zou et~al.(2022)Zou, Cao, Li, and Gu]{zou2022understanding}
Zou, D., Cao, Y., Li, Y., and Gu, Q.
\newblock Understanding the generalization of adam in learning neural networks with proper regularization.
\newblock \emph{ICLR}, 2022.

\end{thebibliography}
\bibliographystyle{icml2025}


\newpage
\appendix
\onecolumn
\section{Sharpness Measurements for Other Settings} \label{app:sharp}

\bgroup
\begin{table}[ht!]
    \centering
    \caption{Sharpness measurements of the solutions found by seven different optimizers and their generalization on CIFAR-10/100 and Wikitext-2. Approximate second-order methods tend to yield highly sharp solutions and poor generalization compared to SGD; \sassha and \msassha effectively recover this. Here, we measure sharpness in terms of maximum Hessian eigenvalue $\lambda_{max}(H)$, trace of Hessian $\operatorname{tr}(H)$, gradient-direction sharpness $\delta L_{\text{grad}}$, and average sharpness $\delta L_{\text{avg}}$, along with generalization using validation loss $L_{val}$ and accuracy $\text{Acc}_{val}$.}
    \vskip 0.1in
    \resizebox{\linewidth}{!}{
        \begin{tabular}{clcccccccc}
            \toprule
             & &  \multicolumn{4}{c}{Sharpness} & \multicolumn{2}{c}{Generalization} \\
             \cmidrule(l{3pt}r{3pt}){3-6} \cmidrule(l{3pt}r{3pt}){7-8}    &    &
             {$\lambda_{max}(H)$}                              &
             {$\operatorname{tr}(H)_{\times 10^3}$}            &
             $\delta L_{\text{grad}}$                          &
             $\delta L_{\text{avg} \times 10^{-3}}$            &
             $ L_\text{val} $                                  &
             $\text{Acc}_\text{val}$                          \\ 
             
             \midrule
             
             \rowcolor{lgray} \multicolumn{8}{c}{CIFAR-10} 
             
             \\ \midrule
             
             \multirow{7}{*}{ResNet20}
             
             & SGD        &
             \alignednum{107}[4.370]         &
             \alignednum{1.380}[0.010]       &
             \alignednum{0.840}[0.304]       & 
             \alignednum{0.690}[0.390]        &
             \alignednum{0.295}[0.008]       &
             \alignednum{92.03}[0.32]       \\
             
             & SAM         &  
             \alignednum{58}[2.980]           &
             \alignednum{0.730}[0.040]         &
             \alignednum{0.171}[0.038]       & 
             \alignednum{0.461}[0.240]        & 
             \alignednum{0.119}[0.002]       & 
             \alignednum{92.85}[0.07]       \\
             
             & AdaHessian  & 
             \alignednum{23048}[29932]       &
             \alignednum{189.5}[240.6]    &
             \alignednum{4.538}[1.634]       &
             \alignednum{198.7}[266.0]        & 
             \alignednum{0.260}[0.006]       & 
             \alignednum{92.00}[0.17]       \\

             & Sophia-H    & 
             \alignednum{3606}[303.0]         &
             \alignednum{31.24}[2.628]      &
             \alignednum{6.120}[1.634]       &
             \alignednum{18.11}[1.000]           &
             \alignednum{0.316}[0.002]      &
             \alignednum{91.81}[0.27]        \\
                                       
             & Shampoo     & 
             \alignednum{647066}[419964]     & 
             \alignednum{3900}[1825]         & 
             \alignednum{166.3}[48.00]       &
             \alignednum{2177189}[1628993]   &
             \alignednum{0.381}[0.028]       &
             \alignednum{88.55}[0.83]        \\
             
             \rowcolor{green!20}\cellcolor{white} &  
             \msassha &
             \alignednum{129}[17.00]            &
             \alignednum{1.580}[0.080]        &
             \alignednum{1.551}[0.684]       &
             \alignednum{1.025}[0.360]        &
             \alignednum{0.234}[0.003]       &
             \alignednum{92.36}[0.23]        \\

            \rowcolor{green!20}\cellcolor{white} &
             \sassha  &
             \alignednum{78}[5.090]          &
             \alignednum{0.860}[0.030]      & 
             \alignednum{0.184}[0.053]       &
             \alignednum{0.388}[0.704]      &
             \alignednum{0.209}[0.001]     &
             \alignednum{92.98}[0.05] \\ 
             
             \midrule
             
             \multirow{7}{*}{ResNet32}
             & SGD        &
             \alignednum{56}[5.100]           &
             \alignednum{0.800}[0.040]         &
             \alignednum{0.560}[0.219]       &
             \alignednum{0.196}[0.146]       &
             \alignednum{0.309}[0.002]       &
             \alignednum{92.69}[0.06]       \\
             
             & SAM        & 
             \alignednum{45}[2.670]         & 
             \alignednum{0.580}[0.020]        &
             \alignednum{0.107}[0.005]     &  
             \alignednum{0.753}[0.351]     &
             \alignednum{0.128}[0.001]      &
             \alignednum{93.89}[0.13] \\

             & AdaHessian  &
             \alignednum{1746}[1018]       &
             \alignednum{17.06}[10.24]      &
             \alignednum{4.599}[1.710]     &
             \alignednum{5.518}[3.623]     &
             \alignednum{0.278}[0.006]     &
             \alignednum{92.48}[0.15] \\
             
             & Sophia-H    &
             \alignednum{7167}[2755]       &
             \alignednum{18.82}[5.500]       &
             \alignednum{9.399}[2.283]   &
             \alignednum{7.915}[3.397]     & 
             \alignednum{0.394}[0.010]      &
             \alignednum{91.99}[0.08] \\
             
             & Shampoo &
             \alignednum{717553}[93129]    &
             \alignednum{4523}[629.7]        &
             \alignednum{162.1}[123.2]      & 
             \alignednum{105322}[82246]     & 
             \alignednum{0.348}[0.008]      &
             \alignednum{90.23}[0.24] \\

             \rowcolor{green!20}\cellcolor{white} &
             \msassha & 
             \alignednum{283}[10.00]          &
             \alignednum{3.960}[0.100]       & 
             \alignednum{2.986}[1.133]     & 
             \alignednum{1.300}[0.969]     &
             \alignednum{0.211}[0.010]      & 
             \alignednum{93.18}[0.30] \\

            \rowcolor{green!20}\cellcolor{white} &
            \sassha   &
            \alignednum{47}[1.880]           &
            \alignednum{0.590}[0.020]         &
            \alignednum{0.136}[0.019]       &
            \alignednum{0.714}[0.090]      &
            \alignednum{0.177}[0.002]      &
            \alignednum{94.09}[0.24]  \\ 
            
            \midrule
            
            \rowcolor{lgray} 
            \multicolumn{8}{c}{CIFAR-100} \\ \midrule
            \multirow{7}{*}{ResNet32} &
             
             SGD        &
             \alignednum{265}[25.00]          &
             \alignednum{7.290}[0.300]        &
             \alignednum{0.703}[0.132]        &
             \alignednum{1.310}[1.030]        &
             \alignednum{1.260}[0.001]        &
             \alignednum{69.32}[0.19]         \\
             
             &SAM        & 
             \alignednum{123}[11.00]          &
             \alignednum{2.630}[0.090]        &
             \alignednum{0.266}[0.025]        &  
             \alignednum{-0.619}[0.594]       &
             \alignednum{0.512}[0.016]        &
             \alignednum{71.99}[0.20]         \\

             &AdaHessian  &
             \alignednum{11992}[5779]         & 
             \alignednum{46.94}[17.60]        & 
             \alignednum{4.119}[1.136]        & 
             \alignednum{12.50}[6.080]        & 
             \alignednum{1.377}[0.070]        & 
             \alignednum{68.06}[0.22]         \\
             
             &Sophia-H    & 
             \alignednum{22797}[10857]        &
             \alignednum{68.15}[20.19]        & 
             \alignednum{8.130}[3.082]        &
             \alignednum{19.19}[6.380]        & 
             \alignednum{1.463}[0.022]        & 
             \alignednum{67.76}[0.37]         \\
             
             &Shampoo &
             \alignednum{436374}[9017]        &
             \alignednum{6823}[664.7]         &
             \alignednum{73.27}[12.51]        & 
             \alignednum{49307489}[56979794]  & 
             \alignednum{1.386}[0.010]        & 
             \alignednum{64.08}[0.46]         \\
             
             \rowcolor{green!20}\cellcolor{white} &
             \msassha & 
             \alignednum{382}[65.00]          & 
             \alignednum{8.750}[0.310]        & 
             \alignednum{2.391}[0.425]        & 
             \alignednum{2.260}[1.660]        & 
             \alignednum{1.067}[0.001]        & 
             \alignednum{70.93}[0.21]         \\
             
            \rowcolor{green!20}\cellcolor{white} &
            \sassha   &
            \alignednum{107}[40.00]           &
            \alignednum{1.870}[0.650]         &
            \alignednum{0.238}[0.088]         &
            \alignednum{0.650}[0.860]         &
            \alignednum{0.961}[0.005]         & 
            \alignednum{72.14}[0.16]          \\ 
            
            \midrule
            
            \multirow{7}{*}{WRN28-10}
             & SGD        &
             \alignednum{18}[1.170] & 
             \alignednum{0.660}[0.040]        & 
             \alignednum{1.984}[0.506] & 
             \alignednum{-0.007}[0.028] & 
             \alignednum{0.820}[0.005] & 
             \alignednum{80.06}[0.15] \\
             
             & SAM         &  
             \alignednum{9}[0.866] & 
             \alignednum{0.230}[0.010]   &
             \alignednum{0.841}[0.084]  &
             \alignednum{0.024}[0.041]  &  
             \alignednum{0.648}[0.006]  & 
             \alignednum{83.14}[0.13] \\
             
             & AdaHessian  & 
             \alignednum{35119}[46936]      &
             \alignednum{139.5}[191.0]   &
             \alignednum{6.745}[1.932]  & 
             \alignednum{19.727}[27.87] & 
             \alignednum{1.005}[0.008] & 
             \alignednum{76.92}[0.26] \\
             
             & Sophia-H    & 
             \alignednum{3419}[3240]        &
             \alignednum{13.57}[3.300]               &
             \alignednum{5.073}[0.268] & 
             \alignednum{0.067}[0.054] &
             \alignednum{0.866}[0.003] & 
             \alignednum{79.35}[0.24] \\
                          
             & Shampoo     &
             \alignednum{102129}[60722]   & 
             \alignednum{1459}[709.4]  &
             \alignednum{483.0}[172.0] &
             \alignednum{98.558}[123.1] & 
             \alignednum{1.173}[0.088] & 
             \alignednum{74.06}[1.28] \\
             
             \rowcolor{green!20}\cellcolor{white} &
             \msassha &
             \alignednum{2257}[248.0]         & 
             \alignednum{30.40}[4.780]               & 
             \alignednum{4.599}[0.003]  & 
             \alignednum{0.301}[0.047] &
             \alignednum{0.757}[0.011] & 
             \alignednum{81.53}[0.27] \\
             
             \rowcolor{green!20}\cellcolor{white} & 
             \sassha  & 
             \alignednum{84}[3.150]          & 
             \alignednum{2.030}[0.110]      & 
             \alignednum{4.540}[0.122] &  
             \alignednum{0.007}[0.129]  &
             \alignednum{0.625}[0.002] & 
             \alignednum{83.54}[0.08] \\
             
             \midrule
             
             \rowcolor{lgray} \multicolumn{8}{c}{Wikitext-2} \\ \midrule
            \multirow{5}{*}{Mini-GPT1} 
            & AdamW       & \alignednum{836}[13.00]           
                          & \alignednum{31.61}[0.433]    
                          & \alignednum{1.642}[1.036] 
                          & \alignednum{7}[0] 
                          & \alignednum{5.072}[0.013] 
                          & \alignednum{175.06}[0.19] 
                          \\
                          
             & AdaHessian & \alignednum{13141}[14432]           
                          & \alignednum{46.36}[26.85]           
                          & \alignednum{0.289}[0.187]    
                          & \alignednum{9}[5]   
                          & \alignednum{7.231}[0.043]    
                          & \alignednum{407.69}[0.20] \\
                          
             & Sophia-H   & \alignednum{319}[14.00]      
                          & \alignednum{55.17}[1.100] 
                          & \alignednum{0.824}[0.089] 
                          & \alignednum{13}[1] 
                          & \alignednum{5.077}[0.014] 
                          & \alignednum{157.60}[0.37] \\
            \rowcolor{green!20}\cellcolor{white} &
                          \msassha 
                          & \alignednum{145}[125.0]           
                          & \alignednum{13.23}[17.19]    
                          & \alignednum{0.379}[0.275] 
                          & \alignednum{3}[1] 
                          & \alignednum{5.259}[0.010] 
                          & \alignednum{125.01}[0.21] \\
            \rowcolor{green!20}\cellcolor{white} & 
                          \sassha   
                          & \alignednum{79}[2.000]  
                          & \alignednum{14.50}[0.325] 
                          & \alignednum{0.221}[0.023] 
                          & \alignednum{3}[0] 
                          & \alignednum{4.808}[0.001] 
                          & \alignednum{122.40}[0.16]\\
            \bottomrule

        \end{tabular}
    }
    \label{tab:add_sharp}
\end{table}
\egroup

\newpage

\section{Algorithm Comparison} \label{app:comparison}

\begin{table}[ht!]
\renewcommand{\arraystretch}{1.8}
\caption{Comparison of various optimization algorithms in terms of gradient momentum $m_t$, diagonal preconditioning matrix $D_t$, and method-specific operations $\mathbf{U}(z)$. Here $g_t, \widehat{H}_t$ are the stochastic gradient and the Hessian estimation respectively, and $\beta_1, \beta_2$ denotes the momentum hyperparameters for the gradient and estimated Hessian. Bias correction \texttt{bc}$(\cdot)$ compensates for initialization biases in the gradient and Hessian momentum variables due to zero initialization.} 

\vskip 0.1in
\resizebox{\linewidth}{!}{
    \begin{tabular}{lllll}
        \toprule
        \rowcolor{lgray} \multicolumn{4}{c}{\large $x_{t+1} = x_t - \eta_{t} \mathbf{U}(D_{t}^{-1}m_{t})$}\\[0.3em] \midrule
         & $m_t$ & $D_t$ & $ \mathbf{U}(z) $ \\ \midrule
        SGD with momentum      &
        $\beta_1 m_{t-1} + (1 - \beta_1)g_t $ &
        $I$ &
        $z$ \\ 
        
        Stochastic Newton &
        $g_t$                                 &
        $H_t(x_t)$ &
        $z$ \\ 
        
        Adam \citep{kingma2014adam}             &
        $\beta_1 m_{t-1} + (1 - \beta_1)g_t $ &
        $ \sqrt{ \beta_2 v_{t-1} + (1-\beta_2) \colorbox{red!20}{$\operatorname{diag}(g_tg_t^\top)$} }$ &
        $ \texttt{bc}(z) $ \\ 
        
        
        AdaHessian \citep{adahessian}       &
        \large "                              &
        $ \sqrt{ \beta_2 v_{t-1} + (1-\beta_2) \colorbox{red!20}{$\widehat{H}_t^{(s)}(x_t)^2$} }$ &
        $ \texttt{bc}(z) $ \\ 
        
        Sophia-H \citep{sophia}         &
        \large "                              &
        \hspace{1em} $ \beta_2 v_{t-1} + (1-\beta_2) \colorbox{red!20}{$\widehat{H}^{(c)}_t(x_t)$}$ every $k$ steps &
        \colorbox{red!20}{$\operatorname{clip}(z)$}  \\
        
        \midrule

        \sassha (Ours)       &
        $ \beta_1 m_{t-1} + (1-\beta_1) \colorbox{red!20}{$g_{t}(x_t+\boldsymbol{\epsilon_t^\star})$} $ &
        $ \sqrt{ \beta_2 v_{t-1} + (1-\beta_2) \colorbox{red!20}{$|\widehat{H}_t(x_t+\boldsymbol{\epsilon_t^\star})|$} }$ every $k$ steps & 
        $ \texttt{bc}(z) $ \\
        
        \bottomrule
    \end{tabular}
}
\label{tab:comp_algo}
\end{table}
In this section, we compare our algorithm with other adaptive and approximate second-order methods designed for deep learning to better illustrate our contributions within concurrent literature.
We present a detailed comparison of each methods in \cref{tab:comp_algo}.

Adam \citep{kingma2014adam} is an adaptive method popular among practitioners, which rescales the learning rate for each parameter dimension by dividing by the square root of the moving average of squared gradients.
This adaptive learning rate effectively adjusts the gradient (momentum) at each descent step, accelerating convergence and improving update stability.
Although Adam is not explicitly a second-order method, its process is related to second-order methods as it can be viewed as preconditioning via a diagonal approximation of the empirical Fisher information matrix.
AdamW \citep{loshchilov2018decoupled} proposes to improve Adam by decoupling the weight decay from the update rule for better generalization.
This is also shown to be effective in most approximate second-order methods, thus employed in all subsequently mentioned algorithms.

AdaHessian \citep{adahessian} is one of the earliest approximate second-order optimization methods tailored for deep learning.
To reduce the prohibitive cost of computing the Hessian, it uses Hutchinson's method \citep{hutchinson, hutchinson2} to estimate a diagonal Hessian approximation $\widehat{H}_{t}$ and applies a moving average to reduce variance in the estimation.
The authors also propose spatial averaging of the Hessian estimate, denoted as ($\widehat{H}_{t}^{(s)}$), which involves averaging the diagonal element within a filter of a convolution layer for filter-wise gradient scaling.
Sophia \citep{sophia} is an approximate second-order method specifically designed for language model pretraining.
Its primary feature is the use of the clipping mechanism $\operatorname{clip}(z) = \max\{\min\{z, \rho\}, -\rho\}$ with a predefined threshold $\rho$ to control the worst-case update size resulting from errorneous diagonal Hessian estimates in preconditioning.
Additionally, a hard adjustment is applied to each Hessian entry, substituting negative and very small values with a constant $\epsilon$, such as $\widehat{H}^{(c)}_t = \max \{ \widehat{h}_t , \epsilon \}$ to prevent convergence to saddle points and mitigate numerical instability.
Furthermore, Sophia also incorporates lazy Hessian updates to enhance computational efficiency. 
This works without significant performance degradation as the clipping technique and hard adjustment prevent a rapid change of the Hessian, keeping the previous Hessian relevant over more extended steps.

Our method, \sassha, adds a perturbation $\epsilon^\star_t$ before computing the gradient and diagonal Hessian estimation to penalize sharpness during training for promoting improved generalization -- an approach that, to our knowledge, has not been previously explored in the literature.
However, naive second-order optimization under sharpness minimization can be numerically unstable, because, when the overall curvature is small due to sharpness reduction, Hessian underestimation can cause the optimization to yield extremely large update.
To address this issue, we introduce two simple techniques to Hessian estimates: the square root and the absolute function.
The square root smoothly adjusts underestimated curvature while preserving the relative scale among Hessian entries.
The absolute function enforces Hessian estimates to be semi-positive definite while maintaining their magnitude.
This not only prevents convergence to saddle points or local maxima, but also allows the square root to operate on Hessian entries retaining the original Hessian magnitude.  
Together, the combination of sharpness minimization and Hessian stabilization enables efficient reuse of previously computed Hessians, resulting in a stable and efficient algorithm.

\section{Convergence Analysis of \sassha} \label{app:convergence}

In this section, we provide preliminary convergence analysis results.
Based on the well-established analyses of \citet{li2023convergence, khanh2024fundamental}, we further investigate the complexities arising from preconditioned perturbed gradients.\\

\begin{assumption}
    \label{assumption:convex_smooth_bounded_nonzero}
    The function $ f : \mathbb{R}^d \to \mathbb{R} $ is convex, $\beta$-smooth, and bounded from below, i.e., $ f^* := \inf_x f(x) > -\infty $. Additionally, the gradient $ \nabla f(x_t) $  is non-zero for a finite number of iterations, i.e., $ \nabla f(x_t) \neq 0 $ for all $ t \in \{1, 2, \dots, n\} $.
\end{assumption}
\vspace{5mm}
\begin{assumption}
    \label{assumption:learning_rate_and_perturbation_radius}
    Step sizes $\eta_t$ and perturbation radii $\rho_t$ are assumed to satisfy the following conditions:
    \begin{align*}
        \sum_{t=1}^{\infty} \eta_t = \infty, \quad \sum_{t=1}^{\infty} \eta_t^2 < \infty, \quad \sum_{t=1}^{\infty} \rho_t^2 \eta_t < \infty.
    \end{align*}
\end{assumption}
\vspace{3mm}
\begin{remark}
    The following notations will be used throughout
    \begin{enumerate}
        \item $ g_t := \nabla f(x_t) $ denotes the gradient of $ f $ at iteration $t$.
        \item The intermediate points and the difference between the gradients are defined as
        \begin{align*}
            x_{t+\frac{1}{2}} := x_t + \rho_t \frac{g_t}{\|g_t\|}, \quad g_{t+\frac{1}{2}} := \nabla f(x_{t+\frac{1}{2}}), \quad \delta_t := g_{t+\frac{1}{2}} - g_t.
        \end{align*}
        \item For $ u, v \in \mathbb{R}^d $, operations such as $ \sqrt{v}, |v| $ and $ \frac{v}{u} $, as well as the symbols $ \preceq $ and $ \succeq $, are applied element-wise.
    \end{enumerate}
\end{remark}
\vspace{3mm}
\begin{remark}
    The update rule for the iterates is given by
    \begin{align}\label{remark:sassha_iteration_adamlike}
        x_{t+1} = x_t - \frac{\eta_t}{\sqrt{|\operatorname{diag}(\nabla^2 f(x_{t+\frac{1}{2}}))|} + \epsilon} \odot g_{t+\frac{1}{2}},
    \end{align}
    where $ \operatorname{diag} $ extracts the diagonal elements of a matrix as a vector, or constructs a diagonal matrix from a vector, and  
    $ \epsilon $ is a small constant introduced to prevent division by zero.
    Define  $h_t$  as
    \begin{align*}
        h_t = \frac{\eta_t}{\sqrt{|\operatorname{diag}(\nabla^2 f(x_{t+\frac{1}{2}}))|} + \epsilon},
    \end{align*}
    then the following hold
    \begin{enumerate}
        \item From the convexity and $\beta$-smoothness of $ f $, the diagonal elements of $ \nabla^2 f(x) $ are bounded within the interval $[0, \beta]$, i.e.,
        \begin{align*}
            0 \leq \left[\nabla^2 f(x)\right]_{(i,i)} = e_i^\top \nabla^2 f(x) e_i \leq \beta,        
        \end{align*}
        where $ e_i $ is the $ i $-th standard basis vector in $ \mathbb{R}^d $.
        \item The term $h_t$ is bounded as 
        \begin{align*}
            \frac{\eta_t}{\sqrt{\beta} + \epsilon} \preceq h_t \preceq \frac{\eta_t}{\epsilon}.
        \end{align*}
    \end{enumerate}
\end{remark}

\begin{remark}
    For the matrix representation
    \begin{enumerate}
        \item Denoting $ H_t := \operatorname{diag}(h_t) $, the matrix bounds for $H_t$  are given by
        \begin{align}\label{remark:matrix_bound}
            \frac{\eta_t}{\sqrt{\beta} + \epsilon} I \preceq H_t \preceq \frac{\eta_t}{\epsilon} I,
        \end{align}
        where  $I$  is the identity matrix.
        \item Using the matrix notation $ H_t $, the update for the iterates is expressed as
        \begin{align*}
            x_{t+1} = x_t - H_t g_{t+\frac{1}{2}}.    
        \end{align*}
        \end{enumerate}
\end{remark}

\begin{remark}
    From the $\beta$-smoothness of $f$, $\delta_t$ is bounded by
    \begin{align}\label{remark:delta_bound}
        \|\delta_t\| \leq \beta \| x_t + \rho_t \frac{\nabla f(x_t)}{\|\nabla f(x_t)\|} - x_t \| = \beta \rho_t.
    \end{align}
\end{remark}

\begin{lemma}[Descent Lemma]\label{lemma:descent_lemma}
    Under \cref{assumption:convex_smooth_bounded_nonzero} and \cref{assumption:learning_rate_and_perturbation_radius}, for given $\beta$ and $\epsilon$, there exists a $T \in \mathbb{N}$ such that for $\forall t \geq T$, $\eta_t$ satisfies $\eta_t \leq \min\left\{ \frac{\epsilon^2}{6\beta(\sqrt{\beta} + \epsilon)}, \frac{\epsilon}{4\beta} \right\}$.
    For such $t\geq T$, the following inequality holds
    \begin{align}
        f(x_{t+1}) \leq f(x_t) - \frac{\eta_t}{2(\sqrt{\beta} + \epsilon)} \|g_t\|^2 + \frac{\eta_t}{\epsilon} \|\delta_t\|^2.
    \end{align}
\end{lemma}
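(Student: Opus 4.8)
The plan is to begin from the descent inequality implied by $\beta$-smoothness, $f(x_{t+1}) \le f(x_t) + \langle g_t, x_{t+1}-x_t\rangle + \tfrac{\beta}{2}\|x_{t+1}-x_t\|^2$, and substitute the iterate update $x_{t+1}-x_t = -H_t g_{t+\frac12} = -H_t(g_t+\delta_t)$. Expanding, the inner-product term splits as $-g_t^\top H_t g_t - g_t^\top H_t \delta_t$ and the quadratic term becomes $\tfrac{\beta}{2}\|H_t(g_t+\delta_t)\|^2$. The goal is to show that after bounding each piece with the matrix bounds \cref{remark:matrix_bound} and the perturbation bound \cref{remark:delta_bound}, the surviving coefficient of $\|g_t\|^2$ is at most $-\tfrac{\eta_t}{2(\sqrt\beta+\epsilon)}$ and that of $\|\delta_t\|^2$ is at most $\tfrac{\eta_t}{\epsilon}$.

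First I would keep the genuine descent term intact, $-g_t^\top H_t g_t \le -\tfrac{\eta_t}{\sqrt\beta+\epsilon}\|g_t\|^2$, via the lower matrix bound. For the cross term I would apply an $H_t$-weighted Young inequality, $-g_t^\top H_t \delta_t \le \tfrac{c}{2} g_t^\top H_t g_t + \tfrac{1}{2c}\delta_t^\top H_t\delta_t$ (valid since $H_t \succ 0$), and merge its first summand with the descent term to obtain $-(1-\tfrac{c}{2})\tfrac{\eta_t}{\sqrt\beta+\epsilon}\|g_t\|^2$ (using the lower bound, which requires $c<2$) together with $\tfrac{\eta_t}{2c\epsilon}\|\delta_t\|^2$ (using the upper bound). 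For the quadratic term I would use $\|H_t\|_{\mathrm{op}}\le \tfrac{\eta_t}{\epsilon}$ and $\|g_t+\delta_t\|^2 \le 2\|g_t\|^2 + 2\|\delta_t\|^2$, giving $\tfrac{\beta\eta_t^2}{\epsilon^2}(\|g_t\|^2 + \|\delta_t\|^2)$.

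Collecting terms, the coefficient of $\|g_t\|^2$ is $-(1-\tfrac{c}{2})\tfrac{\eta_t}{\sqrt\beta+\epsilon} + \tfrac{\beta\eta_t^2}{\epsilon^2}$ and that of $\|\delta_t\|^2$ is $\tfrac{\eta_t}{2c\epsilon} + \tfrac{\beta\eta_t^2}{\epsilon^2}$. I would then set $c=\tfrac23$, which makes the two stated step-size thresholds exactly sufficient: the bound $\eta_t \le \tfrac{\epsilon}{4\beta}$ gives $\tfrac{1}{2c}+\tfrac{\beta\eta_t}{\epsilon}\le 1$, forcing the $\|\delta_t\|^2$ coefficient below $\tfrac{\eta_t}{\epsilon}$, while $\eta_t \le \tfrac{\epsilon^2}{6\beta(\sqrt\beta+\epsilon)}$ gives $\tfrac{\beta\eta_t^2}{\epsilon^2}\le \tfrac{1-c}{2}\tfrac{\eta_t}{\sqrt\beta+\epsilon}$, forcing the $\|g_t\|^2$ coefficient below $-\tfrac{\eta_t}{2(\sqrt\beta+\epsilon)}$. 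The existence of an index $T$ beyond which $\eta_t$ meets both thresholds is immediate, since $\sum_t \eta_t^2 < \infty$ forces $\eta_t \to 0$.

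The routine part is the smoothness expansion; the delicate point, and the place I would be most careful, is the constant bookkeeping. The naive choice $c=1$ spends the entire descent term on the cross term and leaves nothing to absorb the $\|g_t\|^2$ contribution of the quadratic term, so the Young parameter must be taken strictly below $1$; verifying that $c=\tfrac23$ is precisely the value that reproduces both thresholds $\tfrac{\epsilon}{4\beta}$ and $\tfrac{\epsilon^2}{6\beta(\sqrt\beta+\epsilon)}$ simultaneously is the crux of the argument.
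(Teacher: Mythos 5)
Your proposal is correct and follows essentially the same route as the paper's proof: the same smoothness expansion, the same $H_t$-weighted Young's inequality on the cross term (your parameter $c=\tfrac23$ is exactly the reciprocal of the paper's $\alpha=\tfrac32$), the same matrix bounds on $H_t$ and the same $\|a+b\|^2\le 2\|a\|^2+2\|b\|^2$ step, yielding identical coefficients $-\eta_t\bigl(\tfrac{2}{3(\sqrt\beta+\epsilon)}-\tfrac{\beta\eta_t}{\epsilon^2}\bigr)$ and $\eta_t\bigl(\tfrac{3}{4\epsilon}+\tfrac{\beta\eta_t}{\epsilon^2}\bigr)$ before invoking the two step-size thresholds. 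Your constant bookkeeping checks out exactly against the paper's.
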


\begin{proof}
    We begin by applying the $\beta$-smoothness of $f$, 
    \begin{align*} 
        f(x_{t+1}) & \leq f(x_t) + \left\langle g_t, x_{t+1} - x_t \right\rangle + \frac{\beta}{2} \|x_{t+1} - x_t\|^2\\
        & =f\left(x_t\right)-\left\langle g_t, H_t (g_t+\delta_t)\right\rangle+\frac{\beta}{2}\left\|H_t (g_t+\delta_t)\right\|^2 \\
        & \leq f\left(x_t\right)- g_t^\top H_tg_t 
        + \frac{1}{2\alpha} g_t^\top H_t g_t + \frac{\alpha}{2} \delta_t^\top H_t \delta_t
        +\frac{\beta}{2}\left\|H_t (g_t+\delta_t)\right\|^2\\
        & \leq f\left(x_t\right)-(1-\frac{1}{2\alpha})  \frac{\eta_t}{\sqrt{\beta}+\epsilon} \|g_t\|^2  +\frac{\alpha}{2} \frac{\eta_t}{\epsilon} \|\delta_t\|^2 +\frac{\beta}{2}\frac{\eta_t^2}{\epsilon^2}\left\|g_t+\delta_t\right\|^2 \\
        & \leq f\left(x_t\right)-(1-\frac{1}{2\alpha})  \frac{\eta_t}{\sqrt{\beta}+\epsilon} \|g_t\|^2  +\frac{\alpha}{2} \frac{\eta_t}{\epsilon} \|\delta_t\|^2 +\beta\frac{\eta_t^2}{\epsilon^2}(\left\|g_t\|^2+\|\delta_t\right\|^2) \\
        & = f\left(x_t\right)-\eta_t((1-\frac{1}{2\alpha})\frac{1}{\sqrt{\beta}+\epsilon}-\beta\frac{\eta_t}{\epsilon^2}) \|g_t\|^2  +\eta_t(\frac{\alpha}{2\epsilon} + \beta\frac{\eta_t}{\epsilon^2})\|\delta_t\|^2.
\end{align*}
The second inequality follows from Young's inequality, the third inequality is obtained from \cref{remark:matrix_bound}, and the last inequality is simplified using the property \( \|a + b\|^2 \leq 2\|a\|^2 + 2\|b\|^2 \). By setting \( \alpha = \frac{3}{2} \), we get
\begin{align*}
        \hspace{3.5mm}= f\left(x_t\right)-\eta_t(\frac{2}{3}\left(\frac{1}{\sqrt{\beta}+\epsilon}\right)-\beta\frac{\eta_t}{\epsilon^2}) \|g_t\|^2  +\eta_t(\frac{3}{4\epsilon} + \beta\frac{\eta_t}{\epsilon^2})\|\delta_t\|^2.
    \end{align*}    
    Since $\eta_t \to 0,\ \exists T\in\mathbb{N}$ such that $\eta_t\leq\min\{\frac{\epsilon^2}{6\beta(\sqrt{\beta}+\epsilon)}, \frac{\epsilon}{4\beta}\}$,
    this gives $\frac{2}{3}\left(\frac{1}{\sqrt{\beta}+\epsilon}\right)-\beta\frac{\eta_t}{\epsilon^2}\geq \frac{1}{2(\sqrt{\beta}+\epsilon)}$ and $\frac{3}{4\epsilon} + \beta\frac{\eta_t}{\epsilon^2} \leq \frac{1}{\epsilon}$, which implies
    \begin{align*}
    \hspace{-34mm} \leq f\left(x_t\right)-\frac{\eta_t}{2(\sqrt{\beta}+\epsilon)} \|g_t\|^2 +\frac{\eta_t}{\epsilon} \|\delta_t\|^2     
    \end{align*}
\end{proof}
\begin{theorem}
    Under \cref{assumption:convex_smooth_bounded_nonzero} and \cref{assumption:learning_rate_and_perturbation_radius}, given any initial point $x_0 \in \mathbb{R}^d$, let $\{x_t\}$ be generated by \cref{remark:sassha_iteration_adamlike}. Then, it holds that $\liminf_{t \to \infty} \|g_t\| = 0$.
\end{theorem}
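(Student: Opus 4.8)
The plan is to convert the per-step Descent Lemma (\cref{lemma:descent_lemma}) into a summability statement for the gradients, and then combine that with the divergence of $\sum_t \eta_t$ to force $\liminf_{t\to\infty}\|g_t\|=0$ by contradiction. The key observation is that the only obstruction to descent in \cref{lemma:descent_lemma} is the perturbation error term $\frac{\eta_t}{\epsilon}\|\delta_t\|^2$, and this term is controlled by \cref{remark:delta_bound}, which gives $\|\delta_t\|\leq\beta\rho_t$, hence $\|\delta_t\|^2\leq\beta^2\rho_t^2$.

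First I would fix the threshold $T$ from \cref{lemma:descent_lemma} (so that the inequality is valid for all $t\geq T$), substitute the bound $\|\delta_t\|^2\leq\beta^2\rho_t^2$, and rearrange to isolate the gradient term:
\begin{equation*}
    \frac{\eta_t}{2(\sqrt{\beta}+\epsilon)}\|g_t\|^2 \leq f(x_t) - f(x_{t+1}) + \frac{\beta^2}{\epsilon}\,\eta_t \rho_t^2.
\end{equation*}
Summing this over $t=T,\dots,N$ telescopes the difference of function values, yielding
\begin{equation*}
    \frac{1}{2(\sqrt{\beta}+\epsilon)}\sum_{t=T}^{N}\eta_t\|g_t\|^2 \leq f(x_T) - f(x_{N+1}) + \frac{\beta^2}{\epsilon}\sum_{t=T}^{N}\eta_t \rho_t^2 \leq f(x_T) - f^* + \frac{\beta^2}{\epsilon}\sum_{t=T}^{N}\eta_t \rho_t^2,
\end{equation*}
where the last step uses \cref{assumption:convex_smooth_bounded_nonzero} ($f\geq f^*$). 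Letting $N\to\infty$ and invoking the hypothesis $\sum_{t=1}^\infty \eta_t\rho_t^2 < \infty$ from \cref{assumption:learning_rate_and_perturbation_radius}, the right-hand side stays finite, so I conclude $\sum_{t=1}^{\infty}\eta_t\|g_t\|^2 < \infty$.

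Finally I would argue by contradiction. Suppose $\liminf_{t\to\infty}\|g_t\| = c > 0$. Then there exists $T'\geq T$ such that $\|g_t\|^2 \geq c^2/2$ for all $t\geq T'$, whence
\begin{equation*}
    \sum_{t=T'}^{\infty}\eta_t\|g_t\|^2 \geq \frac{c^2}{2}\sum_{t=T'}^{\infty}\eta_t = \infty,
\end{equation*}
since $\sum_{t=1}^\infty \eta_t = \infty$ and only finitely many terms are discarded. This contradicts the finiteness of $\sum_t \eta_t\|g_t\|^2$ established above, so $\liminf_{t\to\infty}\|g_t\|=0$.

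I expect no serious obstacle here: once \cref{lemma:descent_lemma} is in hand, the argument is the standard Robbins--Monro-style summation. The only points requiring care are bookkeeping ones: ensuring the summation begins past the threshold $T$ where the descent inequality holds (finitely many initial terms are harmless for both the convergent sum and the divergent $\sum\eta_t$), and correctly pairing the two Assumption~\ref{assumption:learning_rate_and_perturbation_radius} conditions with their roles---$\sum\eta_t\rho_t^2<\infty$ to bound the error contribution and $\sum\eta_t=\infty$ to drive the contradiction. The genuine difficulty of the theorem was absorbed into proving the Descent Lemma itself.
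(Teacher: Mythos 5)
Your proposal is correct and follows essentially the same route as the paper's own proof: substitute the bound $\|\delta_t\|\leq\beta\rho_t$ into the Descent Lemma, telescope the sum past the threshold $T$, use boundedness from below together with $\sum_t\rho_t^2\eta_t<\infty$ to conclude $\sum_t\eta_t\|g_t\|^2<\infty$, and then derive a contradiction with $\sum_t\eta_t=\infty$. The only cosmetic difference is that the paper bounds the telescoped term by $f(x_T)-\inf_t f(x_t)$ rather than $f(x_T)-f^*$, which is immaterial.
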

\begin{proof}
    From \cref{lemma:descent_lemma} and \cref{remark:delta_bound}, we have the bound
    \begin{align*}
        f(x_{t+1}) &\leq f(x_t) - \frac{\eta_t}{2(\sqrt{\beta} + \epsilon)} \|g_t\|^2 + \frac{\eta_t}{\epsilon} \|\delta_t\|^2 \\
        &\leq f(x_t) - \frac{\eta_t}{2(\sqrt{\beta} + \epsilon)} \|g_t\|^2 + \frac{\eta_t}{\epsilon} \beta^2 \rho_t^2.
    \end{align*}
    
    By rearranging the terms, we obtain the following
    \begin{align*}
        \frac{\eta_t}{2(\sqrt{\beta} + \epsilon)} \|g_t\|^2 \leq f(x_t) - f(x_{t+1}) + \frac{\eta_t}{\epsilon} \beta^2 \rho_t^2.
    \end{align*}
    
    For any $M > T$, we have
    \begin{align*}
    \frac{1}{2(\sqrt{\beta} + \epsilon)} \sum_{t=T}^M \eta_t \|g_t\|^2 &\leq \sum_{t=T}^M \left(f(x_t) - f(x_{t+1})\right) + \frac{\beta^2}{\epsilon} \sum_{t=T}^M \rho_t^2 \eta_t \\
    &= f(x_T) - f(x_{M+1}) + \frac{\beta^2}{\epsilon} \sum_{t=T}^{M} \rho_t^2 \eta_t \\
    &\leq f(x_T) - \inf_{t \in \mathbb{N}} f(x_t) + \frac{\beta^2}{\epsilon} \sum_{t=T}^{M} \rho_t^2 \eta_t.
    \end{align*}
    As \( M \to \infty \), the series \( \sum_{t=T}^{\infty} \eta_t \|g_t\|^2 \) converges. Now, assume for contradiction that \( \liminf_{t \to \infty} \|g_t\| \neq 0 \). This means there exists some \( \xi > 0 \) and \( N \geq T \) such that \( \|g_t\| \geq \xi \) for all \( t \geq N \). Consequently, we have
    \begin{align*}
        \infty > \sum_{t=N}^{\infty} \eta_t \|g_t\|^2 \geq \xi^2 \sum_{t=N}^{\infty} \eta_t = \infty,    
    \end{align*}
    which is a contradiction. Therefore, \( \liminf_{t \to \infty} \|g_t\| = 0 \).
\end{proof}

\newpage

\section{Linear stability analysis on \sassha}
\label{app:linear_stability}
In this section, we provide the detailed proof of \cref{sec:linstab}.

We begin by considering the minimization of the training error

\begin{equation}
    f(x) = \frac{1}{n} \sum^n_{i=1} f_i (x) \nonumber
\end{equation}
via a general second-order optimization method:
\begin{equation} \label{eq:general_update}
    x_{t+1} = x_t - 
    P(x_t; \xi_t )^{-1} \nabla f (x_t; \xi_t)
\end{equation}
where $ \xi_t $ is a \textit{i.i.d.} random variable independent of $x_t$. 
For \sassha, the update is specified as
\begin{align}\label{eq:update}
    x_{t+1} = x_t - 
    \eta \, \frac{1}{\sqrt{|\operatorname{diag} \bigl( \nabla^2 f_{ \xi_t } ( x_t + \rho \, \nabla f_{ \xi_t }(x_t) \bigr) |} + \epsilon} \odot 
    \nabla f_{\xi_t} ( x_t + \rho\,\nabla f_{\xi_t}(x_t)),
\end{align}
where $ \operatorname{diag} $ extracts the diagonal elements of a matrix as a vector, or constructs a diagonal matrix from a vector, and $ \epsilon $ is a small constant introduced to prevent division by zero.

We now derive the linear stability conditions for the fixed points of \sassha.

\begin{definition} (Fixed point).
    A point $ x^\star $ is called a fixed point of the stochastic dynamics (\ref{eq:general_update}) if, for any $ \xi $, we have $ \nabla f_{\xi}(x^\star) = 0 $.
\end{definition}
\vspace{0.5mm}
\begin{definition} (Linearized \sassha). 
    Let $x^\star$ be the fixed point of interest and assume $ f(x^\star) = 0 $.
    Consider the quadratic approximation of $ f $ near $ x^\star $: $ f_{\xi}(x) \approx \frac{1}{2} (x - x^\star)^\top H_{\xi} (x - x^\star)$  with $ H_{\xi} = \nabla^2 f_{\xi_t} (x^\star)$.
    The corresponding linearized \sassha is given by 
    \begin{equation} \label{eq:linearized_sassha}
        x_{t+1} = x_t
        -\eta \frac{1}{ \sqrt{\operatorname{diag} ( H_{\xi_t }) } + \epsilon } \odot H_{\xi_t} ( \Tilde{x}_t - x^\star )
    \end{equation}
    where $ \Tilde{x}_t = x_t + \rho H_{\xi_t} ( x_t - x^\star ) $ is the linearized perturbed point.
\end{definition}
\vspace{0.5mm}

For brevity, we define
\begin{equation}
   d_{\xi_t} := \sqrt{ \operatorname{diag} \bigl( H_{\xi_t} \bigr) } + \epsilon , \nonumber
\end{equation}
Then, the update (\ref{eq:linearized_sassha}) can be rewritten as
\begin{align} \label{eq:simple_linearized_rule}
    x_{t+1} = x_t - 
    \eta \frac{1}{ d_{\xi_t} } \odot H_{\xi_t} ( \Tilde{x}_t - x^\star )
    ,
\end{align}

\begin{remark} 
    In a neighborhood of $x^\star$, the inverse scaling vector is uniformly upper bounded by 
    \begin{equation}\label{eq:lsa_lemma}
        \frac{1}{d_{ \xi_t }}  \preceq \frac{1}{\epsilon} \mathbf{1}
    \end{equation}
    where $\mathbf{1}$ is the vector of all ones.
\end{remark}

\begin{definition}
    (Linear stability). Consider a fixed point $ x^\star $ of linearized stochastic dynamic such as (\ref{eq:linearized_sassha}). 
    We say that $x^\star$ is \emph{linearly stable} if there exists a constant $ C $ such that 
    \begin{equation}
    \mathbb{E} \bigl[ \lVert x_t - x^\star \rVert^2 \bigr] 
    \leq C \lVert x_0 - x^\star \rVert^2, \text{ for all } t > 0 \nonumber 
    \end{equation}
\end{definition}

\begin{theorem}
Assume without loss of generality that $ x^\star = 0 $.
Then, the fixed point $ x^\star $ of \sassha is linearly stable if:
\begin{equation}
\lambda_{max} \Bigl( 
\bigl( I - \frac{\eta}{ \epsilon } H - \frac{\eta \rho}{ \epsilon } H^2 \bigr)^2
+ \frac{ \eta^2 -2 \eta \rho \epsilon }{ \epsilon^2 } \, \bigl( \mathbb{E} H_{\xi_t}^2 - H^2 \bigr) 
+ \frac{2 \eta^2 \rho }{ \epsilon^2 } \, \bigl( \mathbb{E} H_{\xi_t}^3 - H^3 \bigr) 
+ \frac{ \eta^2 \rho^2 }{ \epsilon^2 } \, \bigl( \mathbb{E} H_{\xi_t}^4 - H^4 \bigr)
\Bigr) \leq 1. \nonumber
\end{equation}

\begin{proof}
Our goal is to obtain a bound of the form $ \mathbb{E} \lVert x_t \rVert^{2} \leq C \lVert x_0 \rVert^{2} $, for some constant $C$.
We begin by substituting (\ref{eq:simple_linearized_rule}) into $ \mathbb{E} \lVert x_{t+1} \rVert^{2} $ and proceed to expand the terms as follows:
\begin{align}
\mathbb{E} \Bigl[ \lVert x_{t+1} \rVert^2 \Bigr | x_t \Bigr]
&= x_t^{\top} \mathbb{E} \Bigl[ 
\bigl(I - \eta \operatorname{diag} ( \frac{1}{d_{\xi_t}} ) H_{\xi_t} (I+\rho\,H_{\xi_t}) \bigr)^{\top} \bigl(I - \eta \operatorname{diag} ( \frac{1}{d_{\xi_t}} ) H_{\xi_t} (I+\rho\,H_{\xi_t}) \bigr) | x_t \Bigr] \, x_t \nonumber \\
&\overset{\text{( \ref{eq:lsa_lemma} )}}{\leq} x_t^{\top} \mathbb{E} \Bigl[ 
\bigl(I - \eta \frac{1}{\epsilon} H_{\xi_t} (I+\rho\,H_{\xi_t}) \bigr)^{\top} \bigl(I - \eta \frac{1}{\epsilon} H_{\xi_t} (I+\rho\,H_{\xi_t}) \bigr) 
| x_t \Bigr] \, x_t \nonumber \\
&= x_t^{\top} \mathbb{E} \Bigl[ 
I - \frac{2\eta}{\epsilon} H_{\xi_t} (I+\rho\,H_{\xi_t}) 
+ \frac{ \eta^2 }{ \epsilon^2 } H_{\xi_t}^2 (I+\rho\,H_{\xi_t})^2 
| x_t \Bigr] \, x_t \nonumber \\
&= x_t^{\top} \mathbb{E} \Bigl[ 
I - \frac{2\eta}{\epsilon} H_{\xi_t} 
- \frac{2 \eta \rho}{\epsilon} \, H_{\xi_t}^2 
+ \frac{\eta^2}{\epsilon^2} H_{\xi_t}^2 
+ \frac{2 \eta^2 \rho }{ \epsilon^2 }  H_{\xi_t}^3 
+ \frac{\eta^2 \rho^2 }{ \epsilon^2 }  H_{\xi_t}^4 | x_t \Bigr] \, x_t \nonumber \\
&= x_t^{\top} \Bigl( 
I - \frac{ 2 \eta }{ \epsilon } H 
+ \frac{ \eta^2 -2 \eta \rho \epsilon }{ \epsilon^2 } \, H^2
+ \frac{ 2 \eta^2 \rho }{ \epsilon^2 } \, H^3
+ \frac{ \eta^2 \rho^2 }{ \epsilon^2 } \, H^4  \nonumber \\ 
&+ \frac{ \eta^2 -2 \eta \rho \epsilon }{ \epsilon^2 } \, \bigl( \mathbb{E} H_{\xi_t}^2 - H^2 \bigr) 
+ \frac{2 \eta^2 \rho }{ \epsilon^2 } \, \bigl( \mathbb{E} H_{\xi_t}^3 - H^3 \bigr) 
+ \frac{ \eta^2 \rho^2 }{ \epsilon^2 } \, \bigl( \mathbb{E} H_{\xi_t}^4 - H^4 \bigr)
\Bigr) \, x_t  \nonumber \\
&= x_t^{\top} \Bigl( 
\bigl( I - \frac{\eta}{\epsilon} H - \frac{\eta \rho}{ \epsilon } H^2 \bigr)^2
+ \frac{ \eta^2 -2 \eta \rho \epsilon }{ \epsilon^2 } \, \bigl( \mathbb{E} H_{\xi_t}^2 - H^2 \bigr) 
+ \frac{2 \eta^2 \rho }{ \epsilon^2 } \, \bigl( \mathbb{E} H_{\xi_t}^3 - H^3 \bigr) 
+ \frac{ \eta^2 \rho^2 }{ \epsilon^2 } \, \bigl( \mathbb{E} H_{\xi_t}^4 - H^4 \bigr)
\Bigr) \, x_t  \nonumber
\end{align}
Since for any $ x $ and any matrix $ A $ the inequality $ x^{\top} Ax \leq \lambda_{max} (A) \lVert x \rVert^2$ with $ \lambda_{max}(A) $ denoting the maximum eigenvalue of $A$, holds true, applying this inequality and taking the total expectation yields the following:
\begin{equation}
\mathbb{E}\lVert x_{t+1} \rVert^2 \leq \lambda_{max} \Bigl( 
\bigl( I - \frac{\eta}{ \epsilon } H - \frac{\eta \rho}{ \epsilon } H^2 \bigr)^2
+ \frac{ \eta^2 -2 \eta \rho \epsilon }{ \epsilon^2 } \, \bigl( \mathbb{E} H_{\xi_t}^2 - H^2 \bigr) 
+ \frac{2 \eta^2 \rho }{ \epsilon^2 } \, \bigl( \mathbb{E} H_{\xi_t}^3 - H^3 \bigr) 
+ \frac{ \eta^2 \rho^2 }{ \epsilon^2 } \, \bigl( \mathbb{E} H_{\xi_t}^4 - H^4 \bigr)
\Bigr) \, \mathbb{E} \lVert x_{t} \rVert^2  \nonumber
\end{equation}
Recursively applying this bound gives
\begin{equation}
\mathbb{E} \lVert x_{t} \rVert^2
= \lambda_{max} \Bigl( 
\bigl( I - \frac{\eta}{ \epsilon } H - \frac{\eta \rho}{ \epsilon } H^2 \bigr)^2
+ \frac{ \eta^2 -2 \eta \rho \epsilon }{ \epsilon^2 } \, \bigl( \mathbb{E} H_{\xi_t}^2 - H^2 \bigr) 
+ \frac{2 \eta^2 \rho }{ \epsilon^2 } \, \bigl( \mathbb{E} H_{\xi_t}^3 - H^3 \bigr) 
+ \frac{ \eta^2 \rho^2 }{ \epsilon^2 } \, \bigl( \mathbb{E} H_{\xi_t}^4 - H^4 \bigr)
\Bigr)^t \, \mathbb{E} \lVert x_{0} \rVert^2 \nonumber
\end{equation}
Here, we can see that $ x^* $ is linearly stable if
\begin{equation}
\lambda_{max} \Bigl( 
\bigl( I - \frac{\eta}{ \epsilon } H - \frac{\eta \rho}{ \epsilon } H^2 \bigr)^2
+ \frac{ \eta^2 -2 \eta \rho \epsilon }{ \epsilon^2 } \, \bigl( \mathbb{E} H_{\xi_t}^2 - H^2 \bigr) 
+ \frac{2 \eta^2 \rho }{ \epsilon^2 } \, \bigl( \mathbb{E} H_{\xi_t}^3 - H^3 \bigr) 
+ \frac{ \eta^2 \rho^2 }{ \epsilon^2 } \, \bigl( \mathbb{E} H_{\xi_t}^4 - H^4 \bigr)
\Bigr) \leq 1. \nonumber
\end{equation}
\end{proof}
\end{theorem}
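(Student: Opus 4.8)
The plan is to control the second moment $\mathbb{E}\lVert x_t - x^\star\rVert^2$ along the linearized dynamics and to show it stays bounded by a fixed multiple of $\lVert x_0 - x^\star\rVert^2$ exactly when the displayed maximum eigenvalue does not exceed one. Taking $x^\star = 0$, I would first rewrite the linearized update (\ref{eq:simple_linearized_rule}) as a one-step linear map $x_{t+1} = M_{\xi_t} x_t$ with $M_{\xi_t} = I - \eta\,\operatorname{diag}(1/d_{\xi_t})\,H_{\xi_t}(I + \rho H_{\xi_t})$, using $\tilde x_t - x^\star = (I + \rho H_{\xi_t})x_t$. Since $\xi_t$ is independent of $x_t$, conditioning on $x_t$ gives $\mathbb{E}[\lVert x_{t+1}\rVert^2 \mid x_t] = x_t^\top \mathbb{E}[M_{\xi_t}^\top M_{\xi_t}]\,x_t$, so the whole problem reduces to estimating the symmetric matrix $\mathbb{E}[M_{\xi_t}^\top M_{\xi_t}]$.

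Next I would replace the data-dependent diagonal scaling $\operatorname{diag}(1/d_{\xi_t})$ by the uniform bound $\tfrac{1}{\epsilon}I$ from (\ref{eq:lsa_lemma}), reducing $M_{\xi_t}$ to $I - \tfrac{\eta}{\epsilon}H_{\xi_t}(I + \rho H_{\xi_t})$, and expand the resulting square. Writing $H_{\xi_t}(I+\rho H_{\xi_t}) = H_{\xi_t} + \rho H_{\xi_t}^2$ and squaring yields $I - \tfrac{2\eta}{\epsilon}H_{\xi_t} - \tfrac{2\eta\rho}{\epsilon}H_{\xi_t}^2 + \tfrac{\eta^2}{\epsilon^2}H_{\xi_t}^2 + \tfrac{2\eta^2\rho}{\epsilon^2}H_{\xi_t}^3 + \tfrac{\eta^2\rho^2}{\epsilon^2}H_{\xi_t}^4$. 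The key algebraic step is to take expectations and split each moment as $\mathbb{E}[H_{\xi_t}^k] = H^k + (\mathbb{E}[H_{\xi_t}^k] - H^k)$ for $H := \mathbb{E}[H_{\xi_t}]$: the deterministic powers $H^k$ recombine precisely into the perfect square $(I - \tfrac{\eta}{\epsilon}H - \tfrac{\eta\rho}{\epsilon}H^2)^2$ (the linear term in $H$ carries no fluctuation since $\mathbb{E}H_{\xi_t} = H$), while the leftover differences are exactly the non-uniformity terms $\mathbb{E}H_{\xi_t}^2 - H^2$, $\mathbb{E}H_{\xi_t}^3 - H^3$, and $\mathbb{E}H_{\xi_t}^4 - H^4$ carrying the stated coefficients.

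From there the argument is routine: I would apply $x^\top A x \leq \lambda_{max}(A)\lVert x\rVert^2$ to the assembled matrix, take total expectation to obtain the one-step contraction $\mathbb{E}\lVert x_{t+1}\rVert^2 \leq \lambda\,\mathbb{E}\lVert x_t\rVert^2$ with $\lambda$ the displayed eigenvalue, and unroll the recursion to $\mathbb{E}\lVert x_t\rVert^2 \leq \lambda^t\lVert x_0\rVert^2$. Under the hypothesis $\lambda \leq 1$ this gives $\mathbb{E}\lVert x_t\rVert^2 \leq \lVert x_0\rVert^2$ for all $t$, which is linear stability with $C = 1$.

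The step I expect to be the real obstacle is the uniform replacement of $\operatorname{diag}(1/d_{\xi_t})$ by $\tfrac{1}{\epsilon}I$ \emph{inside} the product $M_{\xi_t}^\top M_{\xi_t}$. Because $M^\top M$ is not monotone in the scaling matrix, this is not a mere formal substitution: writing $A := \eta H_{\xi_t}(I+\rho H_{\xi_t})$ and $D := \operatorname{diag}(1/d_{\xi_t})$, the difference $\tilde M^\top\tilde M - M^\top M$ splits into a quadratic piece $\tfrac{1}{\epsilon^2}A^2 - A D^2 A$, which is cleanly positive semidefinite since $D^2 \preceq \tfrac{1}{\epsilon^2}I$, and a cross piece $A(D - \tfrac{1}{\epsilon}I) + (D - \tfrac{1}{\epsilon}I)A$, an anticommutator of a positive semidefinite and a nonpositive matrix that need not be sign-definite. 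Making this majorization rigorous — for instance by exploiting the diagonal structure of $D$, controlling the cross term separately, or restricting to the regime where it is dominated — is the crux; once it is secured, the expansion, the mean/fluctuation decomposition, and the geometric recursion are all mechanical.
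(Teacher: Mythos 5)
Your proposal follows the paper's proof essentially line for line: the same rewriting of the linearized update as $x_{t+1}=M_{\xi_t}x_t$, the same conditioning on $x_t$ to reduce to $\mathbb{E}[M_{\xi_t}^\top M_{\xi_t}]$, the same replacement of $\operatorname{diag}(1/d_{\xi_t})$ by $\tfrac{1}{\epsilon}I$, the same mean/fluctuation split $\mathbb{E}[H_{\xi_t}^k]=H^k+(\mathbb{E}[H_{\xi_t}^k]-H^k)$ recombining into the perfect square plus non-uniformity terms, and the same eigenvalue bound and geometric unrolling. The one point worth highlighting is that the obstacle you single out as the crux --- justifying the majorization $\mathbb{E}[M_{\xi_t}^\top M_{\xi_t}]\preceq\mathbb{E}[\tilde M_{\xi_t}^\top\tilde M_{\xi_t}]$ when the diagonal scaling is replaced by its uniform bound --- is precisely the step the paper performs with nothing more than a citation to the elementwise bound $1/d_{\xi_t}\preceq\tfrac{1}{\epsilon}\mathbf{1}$. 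Your observation that $M^\top M$ is not monotone in the scaling matrix, and that the anticommutator cross term $A(D-\tfrac{1}{\epsilon}I)+(D-\tfrac{1}{\epsilon}I)A$ need not be sign-definite, is correct; the paper does not address it, so you have not missed an argument that exists in the source. If you want to close this honestly rather than assert it, one route is to avoid the substitution inside the square altogether, e.g.\ bound $\lVert x_{t+1}\rVert\leq\lVert(I-\eta D H_{\xi_t}(I+\rho H_{\xi_t}))\rVert\,\lVert x_t\rVert$ and control the operator norm directly, or restrict to the regime where $d_{\xi_t}$ is close to $\epsilon$ so the cross term is dominated; either way the resulting stability condition will differ slightly from the displayed one, which is why the paper's corollary should be read as giving heuristic necessary conditions rather than a fully rigorous equivalence.
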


\newpage

\section{Experiment Setting} \label{app:hypersearch}
Here, we describe our experiment settings in detail. 
We evaluate \sassha against AdaHessian \citep{adahessian}, Sophia-H \citep{sophia}, Shampoo \citep{gupta2018shampoo}, SGD, AdamW \citep{loshchilov2018decoupled}, and SAM \citep{sam} across a diverse set of vision and language tasks.
Across all evaluations except for language finetuning, we set lazy Hessian update interval to $k = 10$ for \sassha. 
In fact, Sophia-H also supports lazy Hessian updates, but \citet{sophia} reports that it achieves the best performance when $k = 1$, without lazy updating.
Since our goal is to demonstrate that \sassha{} exhibits better generalization than existing approximate second-order methods, we compare it with Sophia-H without lazy Hessian updating $k = 1$, ensuring that the algorithm is assessed under its optimal configuration.

\subsection{Image Classification}
\paragraph{CIFAR}
We trained ResNet-20 and ResNet-32 on the CIFAR datasets for $160$ epochs and Wide-ResNet28-10 for $200$ epochs. 
Only standard inception-style data augmentations, such as random cropping and horizontal flipping, were applied, without any additional regularization techniques or extra augmentations.
We used standard cross-entropy without label smoothing as a loss function.
Also, we adopted a multi-step decay learning rate schedule. 
Specifically, for ResNet-20 and ResNet-32, the learning rate was decayed by a factor of $0.1$ at epochs $80$ and $120$. 
For Wide-ResNet28-10, the learning rate was decayed by a factor of $0.2$ at epochs $60$, $120$ and $160$. 
The exponential moving average hyperparameters were set to $\beta_1 = 0.9$ and $\beta_2 = 0.999$.
All experiments were conducted with a batch size of 256.
The hyperparameter search space for each method is detailed in \cref{tab:tuning_cifar}.

\begin{table}[ht!]
\renewcommand{\arraystretch}{2}
\centering
\resizebox{\textwidth}{!}{%
\begin{tabular}{lcc|cccc|c}
    \toprule
    \rowcolor{lgray} 
    Method        
    & \sassha
    & \msassha
    & AdaHessian 
    & Sophia-H 
    & AdamW / SGD
    & SAM
    & shampoo \\
    
    \midrule
    
    Learning Rate    & 
    \multicolumn{2}{c|}{ $\Bigl\{0.3, 0.15, 0.03, 0.015 \Bigr\}$ } 
    & \multicolumn{4}{c|}{ $\Bigl\{ 0.3, 0.15, 0.1, 0.03, 0.015, 0.01, 0.003, 0.001, 0.0003, 0.0001  \Bigr\}$ } 
    & $\Bigl\{ \substack{1.5, 1.4, 1.3, 1.2, 1.1, 1, 0.9, 0.8, 0.7, 0.6, \\ 0.5, 0.4, 0.3, 0.2, 0.1, 0.01, 0.04, 0.004} \Bigr\} $ \\ 
    
    \midrule
    Weight Decay     & \multicolumn{7}{c}{$\Bigl\{ \text{1e-3}, \text{5e-4}, \text{1e-4}, \text{5e-5}, \text{1e-5}, \text{5e-6}, \text{1e-6} \Bigr\}$}   \\ 
    
    \midrule
    
    Perturbation radius $\rho$      & $\Bigl\{0.1, 0.15, 0.2, 0.25 \Bigr\}$ 
    & $\Bigl\{ 0.1, 0.2, 0.3, 0.6, 0.8 \Bigr\}$ 
    & - 
    & - 
    & - 
    & $\Bigl\{ \substack{0.01, 0.05, 0.1, 0.15, 0.2, 0.25, 0.3,\\ 0.35, 0.4, 0.45, 0.5, 0.55, 0.6} \Bigr\}$
    & - \\ 
    
    \midrule
    
    Clipping-threshold   & - 
    & - 
    & - 
    & $\Bigl\{ \substack{0.1, 0.05, 0.01, 0.005,\\ 0.001, 0.0005, 0.0001} \Bigr\}$ 
    & -   
    & - 
    & - \\ 
    
    \midrule
    
    Damping   & - 
    & - 
    & -  
    & - 
    & -
    & -
    & 1e-$\Bigl\{2, 3, 4, 6, 8 \Bigr\}$ \\ 
    
    \midrule
    
    Hessian Update Interval $k$ & 10 
    & 10 
    & 1 
    & 1 
    & - 
    & -
    &  1\\
    
    \midrule
    
    learning rate schedule & \multicolumn{7}{c}{\text{Multi-step decay}} \\
    
    \bottomrule
    \end{tabular}
    }
    \caption{Hyperparameter search space for CIFAR datasets}
    \label{tab:tuning_cifar}
    
\end{table}

\paragraph{ImageNet}

We trained ResNet-50 and \textit{plain Vision Transformer} (plain ViT) \citep{beyer2022better} for 90 epochs.
Remarkably, plain ViT converges in just 90 epochs on ImageNet, attaining performance comparable to the original ViT trained for 300 epochs \citep{beyer2022better}.
This faster convergence allows us to efficiently assess whether \sassha can enhance the generalization in ViT architectures. 
Consistent with our CIFAR training settings, we applied only standard inception-style data augmentations and used standard cross-entropy as a loss function. 
For ResNet-50, we adopted a multi-step decay learning rate schedule, reducing the learning rate by a factor of 0.1 at epochs 30 and 60. 
However, AdaHessian could not be trained with a multi-step decay schedule; therefore, as recommended by \citet{adahessian}, we employed a plateau decay schedule instead. 
For Vision Transformer training, following \citet{chenvision}, we used a cosine learning rate schedule with an 8-epoch warm-up phase. 
Additionally, the exponential moving average hyperparameters $ \beta_1 $ and $ \beta_2 $ were set to $0.9$ and $0.999$ respectively. 
We used a batch size of 256 for ResNet50 and 1024 for ViT. 
The hyperparameter search spaces for each methods used during training on the ImageNet dataset are detailed in \cref{tab:tuning_imag}.

\begin{table}[ht!]
    \renewcommand{\arraystretch}{2}
    \centering
    \resizebox{\textwidth}{!}{%
    \begin{tabular}{lccc|cccc}
    \toprule
    \rowcolor{lgray} 
    methods     
    & \sassha
    & \msassha
    & AdaHessian 
    & Sophia-H
    & AdamW / SGD 
    & SAM \\
    \midrule
    Learning Rate    &  $\Bigl\{0.6, 0.3, 0.15 \Bigr\}$
    &  $\Bigl\{0.6, 0.3, 0.15 \Bigr\}$
    &  $\Bigl\{0.6 0.3, 0.15 \Bigr\}$
    &  \multicolumn{3}{c}{$ \Bigl\{0.4, 0.2, 0.1, 0.04, 0.02, 0.01, 0.001 \Bigr\} $}  \\ 
    
    \midrule
    
    Weight Decay     & \multicolumn{6}{c}{$\Bigl\{ \text{1e-3}, \text{5e-4}, \text{1e-4}, \text{5e-5}, \text{1e-5} \Bigr\}$}   \\ 
    
    \midrule
    
    Perturbation radius $\rho$       & $\Bigl\{ 0.1, 0.15, 0.2, 0.25 \Bigr\}$ 
    & $\Bigl\{ 0.1, 0.2, 0.4, 0.8 \Bigr\}$ 
    & -
    & -
    & -
    & $\Bigl\{ 0.1, 0.15, 0.2, 0.25, 0.3 \Bigr\} $ \\ 
    
    \midrule
    
    Clipping-threshold   & - 
    & - 
    & - 
    & $\Bigl\{ 0.1, 0.05, 0.01, 0.005, 0.001, 0.0005, 0.0001 \Bigr\}$  
    & -
    & - \\ 
    
    \midrule
    
    Hessian Update Interval $k$ & 10 
    & 10
    & 1
    & 1
    & - 
    & - \\ 
    
    \bottomrule
    
    \end{tabular}
    }
    \caption{Hyperparameter search space for ImageNet}
    \label{tab:tuning_imag}
    
\end{table}

\subsection{Language}
\paragraph{Language Pretraining}
Following the training settings introduced in \citet{gomes2024adafisher}, we conducted experiments on a mini GPT-1 model using the Wikitext-2 dataset. 
This scaled-down version of GPT-1 maintains essential modeling capabilities while reducing computational demands. 
We trained the model with three methods: \sassha{}, \msassha{}, and Sophia-H. 
The hyperparameter tuning spaces for these methods are summarized in \cref{tab:tuning_pretraining}. 
For other methods not listed in the table, we directly reported the results from \citet{gomes2024adafisher}.

\begin{table}[ht!]
    \renewcommand{\arraystretch}{2}
    \centering
    \resizebox{0.8\textwidth}{!}{%
    \begin{tabular}{lccc}
    \toprule
    \rowcolor{lgray} 
    methods        & \sassha{} / \msassha{}     
    & Sophia-H 
    & SAM        \\
    
    \midrule
    
    Learning Rate     & $\Bigl\{0.15, 0.1, 0.03, 0.01, 0.003, 0.0015 \Bigr\}$      
    &   $\Bigl\{ \text{1e-2}, \text{5e-3}, \text{1e-3}, \text{5e-4}, \text{1e-4}, \text{5e-5}, \text{1e-5} \Bigr\}$ 
    &   $\Bigl\{ \text{1e-2}, \text{1e-3}, \text{1e-4}, \text{1e-5}, \text{1e-6} \Bigr\}$                      \\ 
    
    \midrule
    
    Weight Decay      & \multicolumn{3}{c}{ $ \text{1e-}\{1, 2, 3, 4, 5, 6, 7, 8\}$} \\ 
    
    \midrule
    
    Perturbation radius $\rho$    
    &  $ \text{1e-}\{1, 2, 3, 4, 5\} $ 
    & - 
    &  $ \text{1e-}\{1, 2, 3, 4, 5, 6, 7, 8 \} $ \\ 
    
    \midrule
    
    Clipping-threshold   
    & - 
    & $\Bigl\{ \text{1e-1}, \text{5e-2}, \text{1e-2}, \text{5e-3}, \text{1e-3}, \text{5e-4}, \text{1e-4} \Bigr\}$ 
    & - \\ 
    
    \midrule
    
    Hessian Update Interval $k$  & 10  
    & 1 
    & - \\ 
    
    \midrule
    
    Epochs        & \multicolumn{2}{c}{$50$} 
    & 55 \\ 
    
    \bottomrule
    \end{tabular}
    }
    \caption{Hyperparameter search space for language pretraining}
    \label{tab:tuning_pretraining}
    
\end{table}

\paragraph{Language Finetuning} 
We utilized a pretrained SqueezeBERT \citep{iandola2020squeezebert} from the HuggingFace Hub \citep{wolf2020huggingfaces}.
We set the batch size to 16, the maximum sequence length to 512, and the dropout rate to 0.
The number of training epochs varied depending on the specific GLUE task: 5 epochs for MNLI, QQP, QNLI, and SST-2; 10 epochs for STS-B, MRPC, and RTE.
Additionally, We adopted a polynomial learning rate decay scheduler. 
The detailed hyperparameter search spaces are presented in \cref{tab:tuning_fine}.

\begin{table}[ht]
    \renewcommand{\arraystretch}{2}
    \centering
    \resizebox{0.8\linewidth}{!}{%
    \begin{tabular}{lccccc}
    \toprule
    \rowcolor{lgray} 
    methods        & \sassha{} / \msassha{} 
    & Sophia-H 
    & AdaHessian 
    & AdamW
    & SAM          \\
    \midrule
    Learning Rate         &   \multicolumn{5}{c}{ $ \text{1e-}\{1, 2, 3, 4, 5, 6, 7, 8\}$} \\ 
    
    \midrule
    
    Weight Decay          &   \multicolumn{5}{c}{ $ \text{1e-}\{1, 2, 3, 4, 5, 6, 7, 8\}$}  \\ 
    
    \midrule
    
    Perturbation radius $\rho$       &  $ \text{1e-}\{2, 3, 4, 5\}$  
    & - 
    & - 
    & -
    & $ \text{1e-}\{1, 2, 3, 4, 5, 6, 7, 8 \}$ \\ 
    
    \midrule
    
    Clipping-threshold  & - 
    & $\Bigl\{ \substack{0.1, 0.05, 0.01, 0.005,\\ 0.001, 0.0005, 0.0001} \Bigr\}$ 
    & - 
    & - \\ 
    
    \midrule
    
    Hessian Update Interval $k$ & 1 
    & 1 
    & 1 
    & -
    & - \\ 
    
    \bottomrule
    
    \end{tabular}
    }
    \caption{Hyperparameter search space for language finetuning}
    \label{tab:tuning_fine}
    
\end{table}

\subsection{Label Noise}
We introduced label noise by randomly corrupting a fraction of the training data at rates of 20\%, 40\%, and 60\%.
Using this setup, we trained ResNet-32 for 160 epochs with a batch size of 256.
We adopted a multi-step decay learning rate schedule, reducing the learning rate by a factor of 0.1 at epochs 80 and 120.
The specific hyperparameters explored during these experiments are detailed in \cref{tab:tuning_noise}.

\begin{table}[ht!]
\renewcommand{\arraystretch}{2}
\centering
\resizebox{\textwidth}{!}{%
    \begin{tabular}{lcccccc}
    \toprule
    \rowcolor{lgray} 
    Methods        & \sassha{} & \msassha{} 
    & Sophia-H 
    & AdaHessian 
    & SAM
    & SGD \\
    \midrule
    Learning Rate    & \multicolumn{6}{c}{$ \Bigl\{ 0.3, 0.15, 0.1, 0.03, 0.015, 0.01, 0.003, 0.0015, 0.001 \Bigr\} $}   \\ 
    
    \midrule
    
    Weight Decay     & \multicolumn{6}{c}{$\Bigl\{ \text{1e-3}, \text{5e-4}, \text{5e-5}, \text{1e-5}, \text{5e-6}, \text{1e-6} \Bigr\}$}   \\ 
    
    \midrule
    
    Perturbation radius $\rho$       & $\Bigl\{ 0.25, 0.2, 0.15, 0.1  \Bigr\}$ & $\Bigl\{ 0.8, 0.6, 0.3, 0.2, 0.1 \Bigr\}$ 
    & - 
    & - 
    & $\Bigl\{ 0.3, 0.25, 0.2, 0.15, 0.1, 0.05, 0.02, 0.01, 0.002, 0.001      \Bigr\}$ 
    & - \\ 
    
    \midrule
    
    Clipping-threshold   & - 
    & - 
    & $\Bigl\{ \substack{0.1, 0.05, 0.01, 0.005,\\ 0.001, 0.0005, 0.0001} \Bigr\}$  & - & - & - \\ 
    
    \midrule
    
    Hessian Update Interval $k$ & 10 
    & 10 
    & 1 
    & 1 
    & - 
    & - \\ \bottomrule
    \end{tabular}
    }
    \caption{Hyperparameter search space for label noise experiments}
    \label{tab:tuning_noise}
\end{table}

\section{More Ablations} \label{app:add_ablation}

\subsection{Square Root Function}\label{app:sqrt_alternatives}

\begin{table}[ht!]
    \centering
    \caption{
    Comparison of square root against damping and clipping.
    }
    \vskip 0.1in
    \resizebox{0.55\linewidth}{!}{
        \begin{tabular}{lccc}
        \toprule
         & \multicolumn{2}{c}{CIFAR-10} 
         & \multicolumn{1}{c}{CIFAR-100}  \\
         
         \cmidrule(l{3pt}r{3pt}){2-3} \cmidrule(l{3pt}r{3pt}){4-4} 
         
         & \multicolumn{1}{c}{ ResNet-20 } & \multicolumn{1}{c}{ ResNet-32 }   & \multicolumn{1}{c}{ ResNet-32 } \\  
         
         \midrule
           
        Clipping   & $ 92.78 _{\textcolor{black!60}{\pm 0.18} } $ 
        & $ 93.80 _{\textcolor{black!60}{\pm 0.16} } $ 
        & $ 69.47 _{\textcolor{black!60}{\pm 0.20} } $ \\
              
        Damping   & $  92.74 _{\textcolor{black!60}{\pm 0.06} } $ 
        & $  93.68 _{\textcolor{black!60}{\pm 0.29} } $ 
        & $ 71.27 _{\textcolor{black!60}{\pm 0.43} } $ \\
        
        \midrule
        
        \rowcolor{green!20} Square root (\sassha)    & $ \textbf{92.98} _{\textcolor{black!60}{\pm 0.05}} $ 
        & $ \textbf{94.09} _{\textcolor{black!60}{\pm 0.24} } $ 
        & $ \textbf{72.14} _{\textcolor{black!60}{\pm 0.16} } $ \\
        \bottomrule
        \end{tabular}
    }
    \label{tab:sqrt generalization}    
\end{table}

We conduct an ablation study to support our use of the square rooted preconditioner in \sassha, comparing it to other alternatives to stabilize the preconditioner such as damping or clipping.
We search damping and clipping hyperparameters over $\{10^{-4}, 10^{-6}, 10^{-8}, 10^{-12}\}$ and $\{0.1, 0.05, 0.01, 0.005, 0.001, 0.0005, 0.0001\}$, respectively.
We note that the square-root employed in \sassha does not require such extensive hyperparamter search.
The results are presented in \cref{tab:sqrt generalization}.

Our experiments demonstrate that the square rooted preconditioner achieves higher validation accuracy than those with damping or clipping, even with a three times smaller hyperparameter search budget.
We provide two possible explanations for this observation.
First, taking square root preserves the relative scale among Hessian elements while smoothly amplifying near-zero entries in the denominator (\ie, $h < \sqrt{h} $ if $0 < h < 1$). 
This property is particularly valuable during sharpness minimization, where the overall magnitude of the Hessian components tends to be small.
In such cases, even small differences between Hessian elements may carry nontrivial curvature information.
Applying Square root can help retain this relative structure while also reducing numerical instability caused by underestimated curvature.
In contrast, both damping and clipping modify the Hessian by entirely shifting or abruptly replacing values based on a predefined and fixed threshold criterion.
As a result, when the Hessian is generally small due to sharpness minimization, informative dimensions may fall below the threshold, removing potentially critical variations and hence deteriorating the quality of preconditioning.
This behavior can also increase the sensitivity to the choice of the threshold hyperparameter.
Second, square-rooted preconditioner can be interpreted as the result of a geometric interpolation between the identity matrix $I$ and $H^\alpha$. 
This interpolation has been demonstrated to enable selecting an optimal preconditioner that balances the bias and the variance of the population risk, thereby minimizing generalization error \citep{amari2021when}.
In general, $\alpha=1/2$ (i.e., square root) has consistently shown moderate performance across various scenarios \citep{{amari2021when, duchi2011adaptive, kingma2014adam}}.

\newpage 

\subsection{Absolute Value Function} \label{sec:abs_ablation}

\begin{wrapfigure}{r}{0.49\linewidth}
    \centering
    \vspace{-1em}
    \begin{subfigure}{0.45\linewidth}
        \centering
        \includegraphics[width=\linewidth]{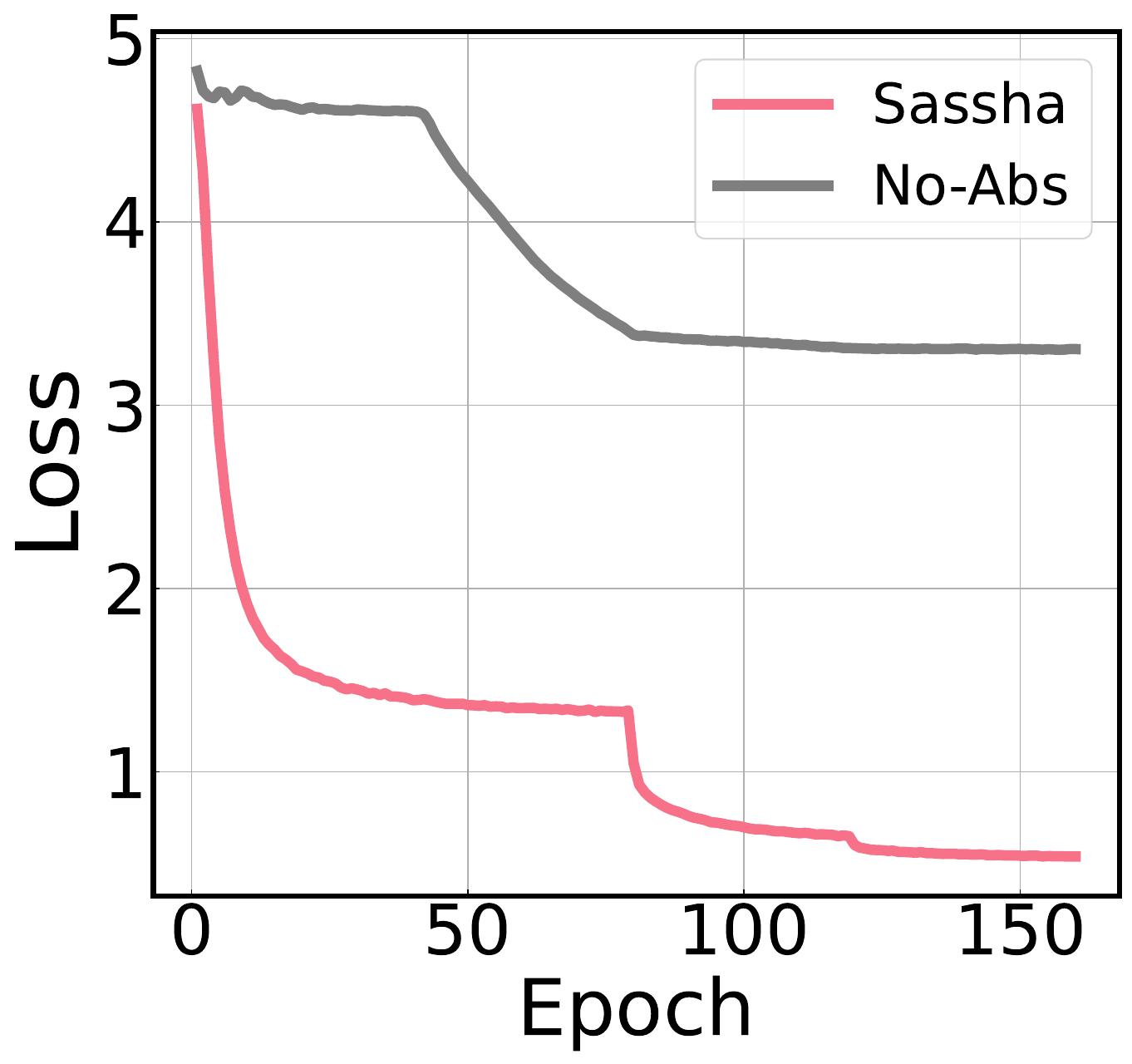}
        \caption{Train loss}
        \label{fig:abs_ablation_val}
    \end{subfigure}
    \begin{subfigure}{0.52\linewidth}
        \centering
        \includegraphics[width=\linewidth]{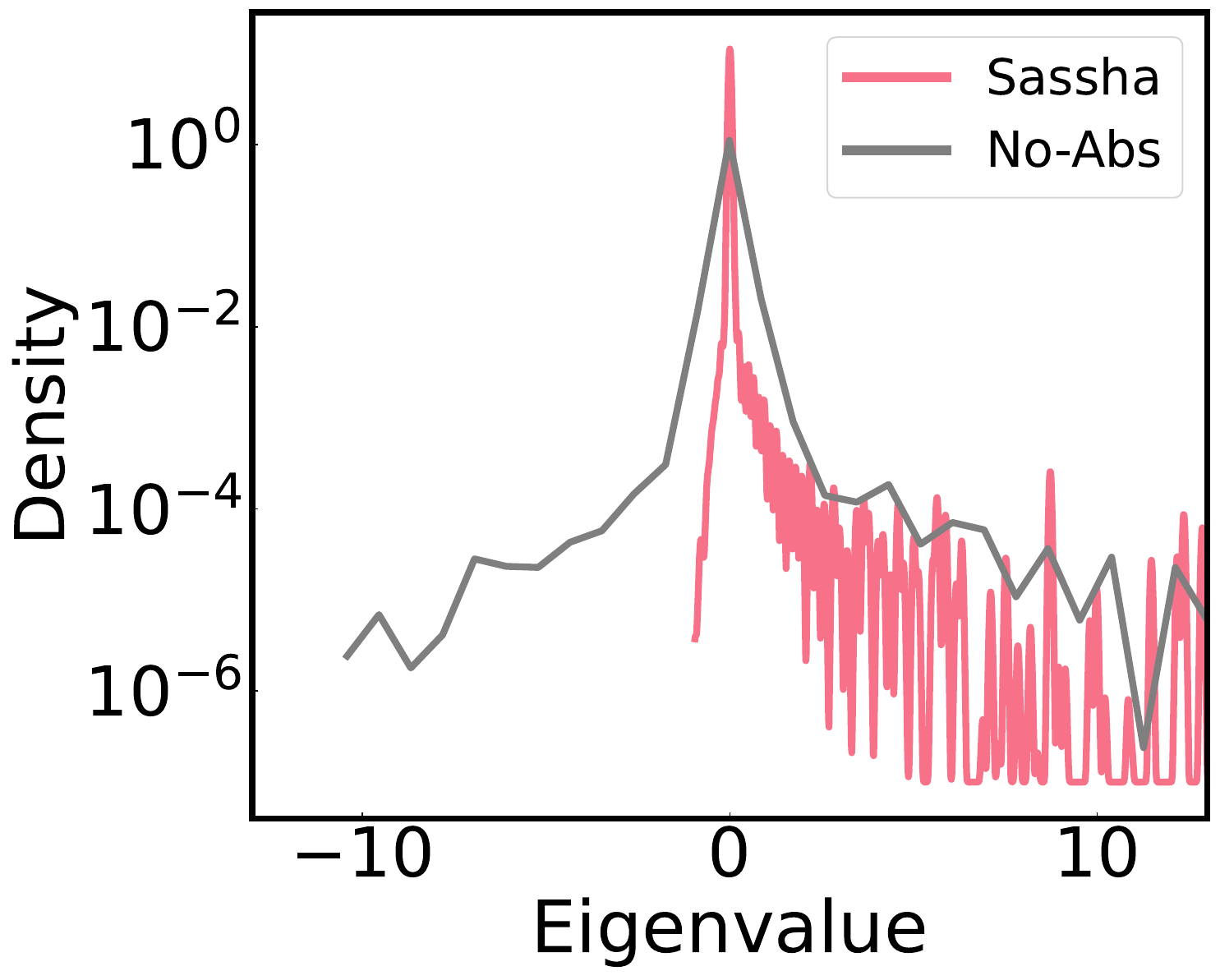}
        \caption{Hess. eigenspectrum}
        \label{fig:abs_ablation_dist}
    \end{subfigure}
    \caption{Effect of the absolute function on the training loss and the Hessian eigenspectrum of the found solution of \sassha on ResNet-32/CIFAR-10. Without the absolute function, \sassha converges to sub-optimal saddle point.}
    \label{fig:abs_ablation}
    
\end{wrapfigure}

We observe how the absolute function influences the training process to avoid convergence to a critical solution that could result in sub-optimal performance.
We train ResNet-32 on CIFAR-100 using \sassha without the absolute function (\texttt{No-Abs}) and compare the resulting training loss to that of the original \sassha.
We also plot the Hessian eigenspectrum of the found solution via the Lanczos algorithm \citep{yao2020pyhessian} to determine whether the found solution corresponds to a minimum or a saddle point.
The results are illustrated in \cref{fig:abs_ablation}.
We can see that without the absolute function, the training loss converges to a sub-optimal solution, where the prevalent negative values in the diagonal Hessian distribution indicate it as a saddle point.
This shows the necessity of the absolute function for preventing convergence to these critical regions.

\section{Validation Loss Curve for Vision Task} \label{app:valloss}

\begin{figure}[ht!]
    \centering
    \includegraphics[width=0.27\linewidth]{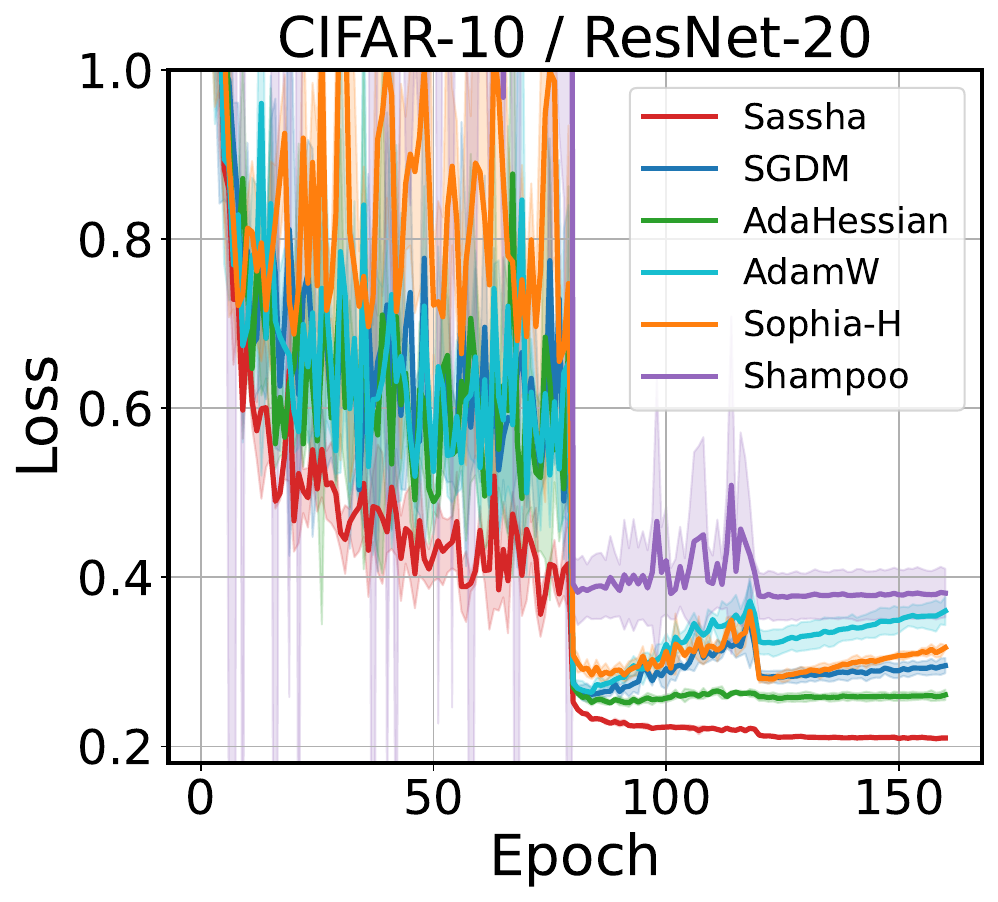}
    \includegraphics[width=0.27\linewidth]{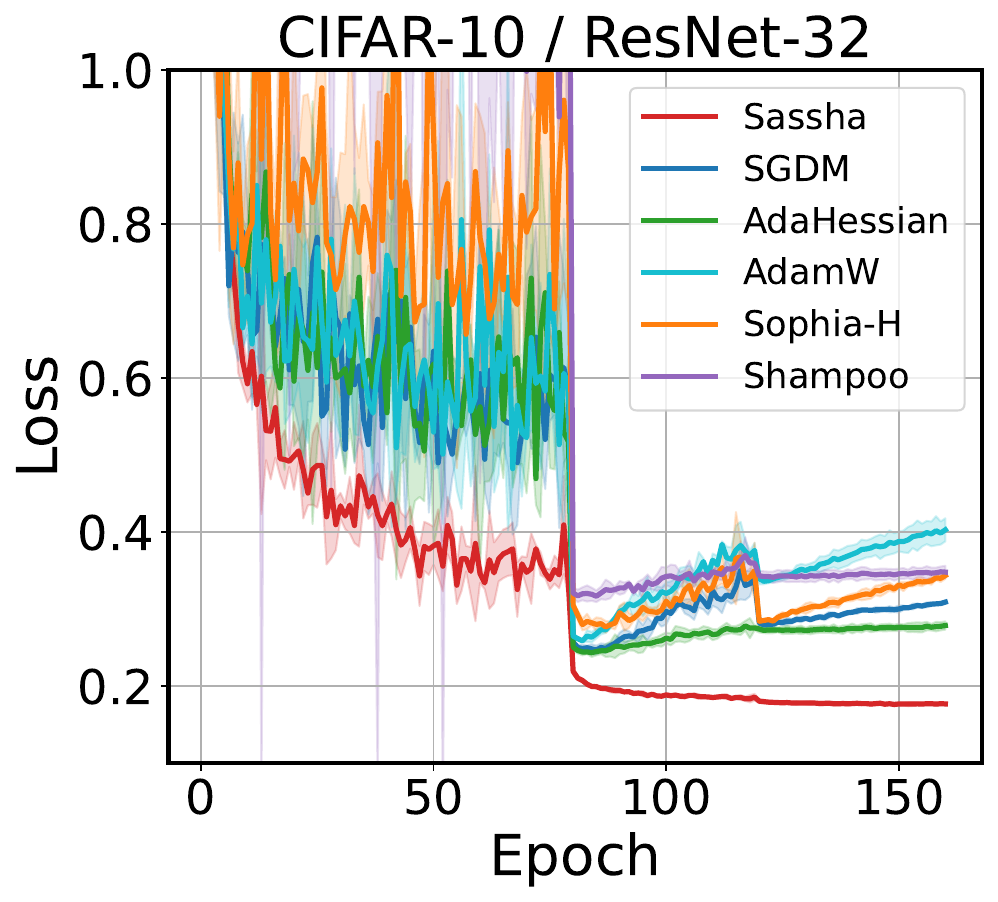}
    \includegraphics[width=0.27\linewidth]{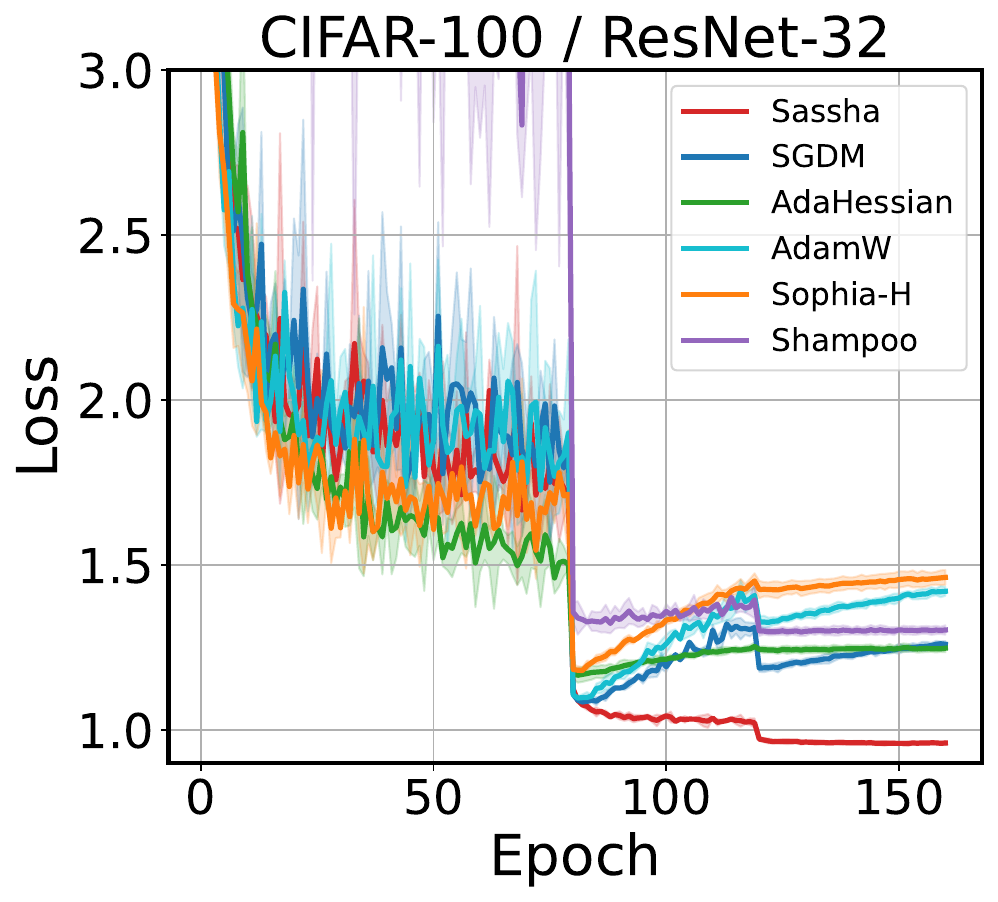}
    \includegraphics[width=0.27\linewidth]{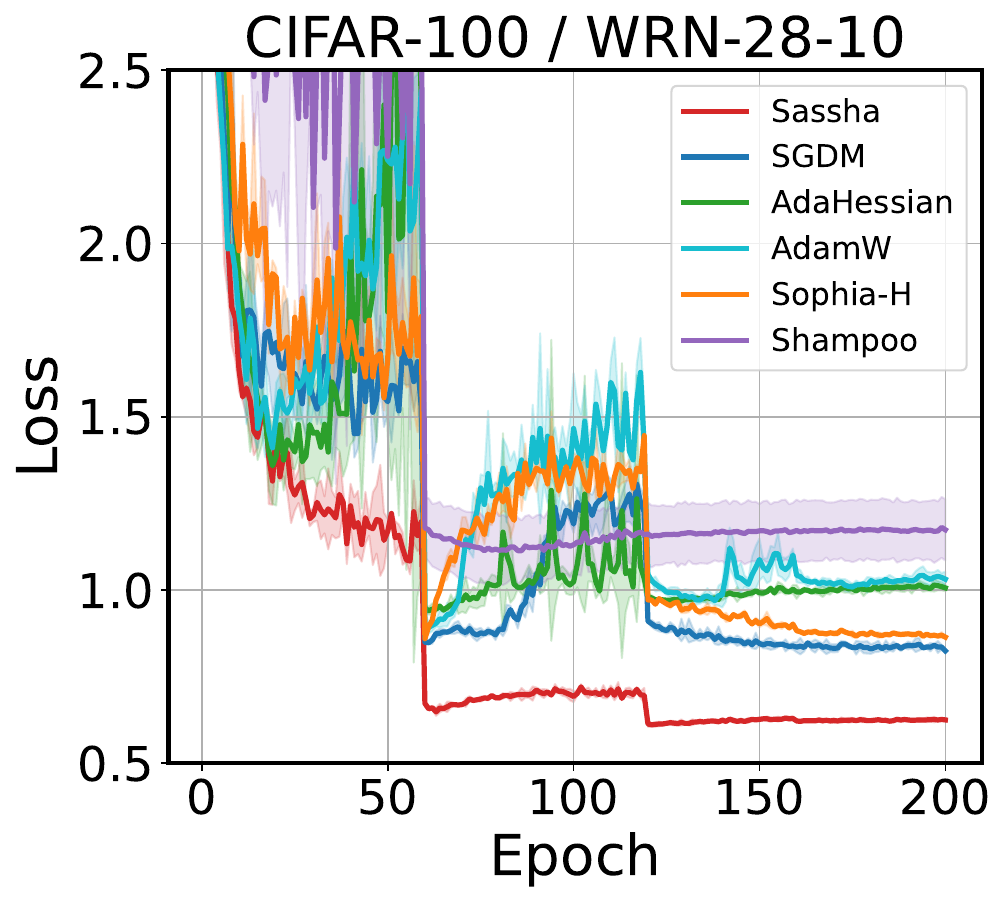}
    \includegraphics[width=0.27\linewidth]{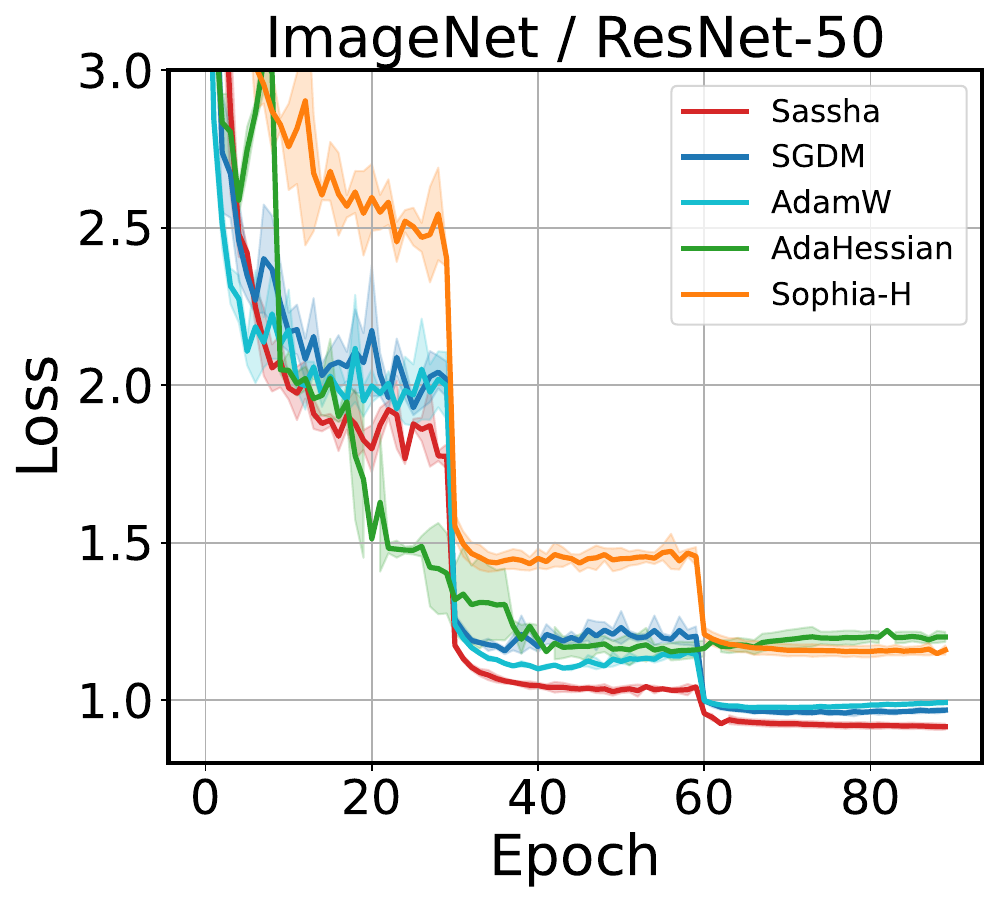}
    \includegraphics[width=0.27\linewidth]{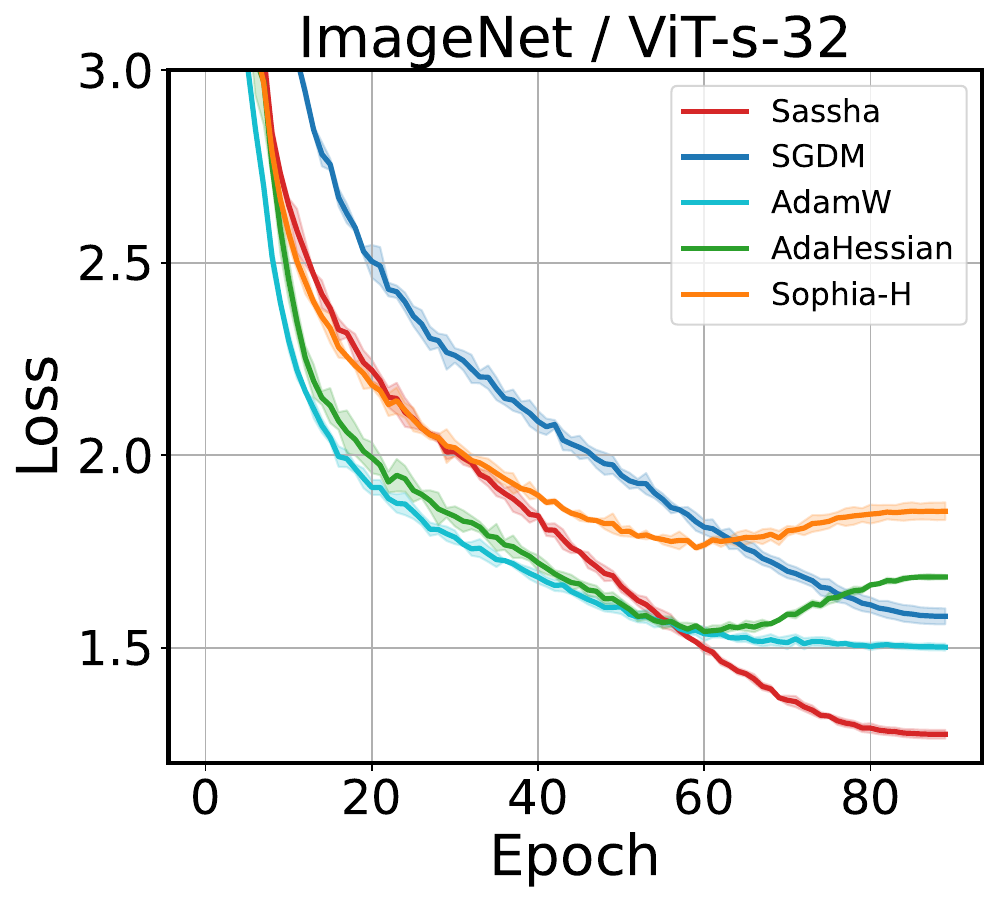}
    \caption{Validation loss curve of \sassha, SGD, AdaHessian, AdamW, and Sophia-H on various image classification models and tasks. \sassha outperforms all first-order and second-order baseline methods.}
    \label{fig:valloss}
\end{figure}

The experimental results in \cref{fig:valloss} demonstrate better generalization capability of \sassha over the related methods. 
Across all datasets and model architectures, our method consistently achieves the lowest validation loss, indicative of its enhanced ability to generalize from training to validation data effectively.
This robust performance of \sassha underscores its potential as a leading optimization method for various deep learning applications, particularly in image classification.

\newpage

\section{Comparison with First-order Baselines with Given More Training Budget than \sassha }\label{app:comp_fo_fair}

We train SGD and AdamW for twice as many epochs as \sassha and compare their final validation accuracies.
The results are presented in \cref{tab:comp_fo_fair}.
Despite this extended training budget, these first-order methods fall short of the performance attained with \sassha, demonstrating their limited effectiveness compared to \sassha. 
We attribute this outcome to \sassha reaching a flatter and better generalizing solution along with stable preconditioning, which together enables consistent outperformance over first-order baselines.

\begin{table}[ht!]
    \centering
    \caption{
    Performance comparison of \sassha against SGD and AdamW with twice the epoch allocation. \sassha achieves better results with significantly fewer epochs.}
    \vskip 0.1in
    
    \resizebox{\linewidth}{!}{
        \begin{tabular}{lcccccc}
        \toprule
         & RN20 - CIFAR-10 & RN32 - CIFAR-10 & RN32 - CIFAR-100 & WRN28 - CIFAR-100 & RN50 - ImageNet & ViT\_s - ImageNet \\
         & \texttt{Acc (epoch)} & \texttt{Acc (epoch)} & \texttt{Acc (epoch)} & \texttt{Acc (epoch)} & \texttt{Acc (epoch)} & \texttt{Acc (epoch)} \\
        \midrule
        SGD        & $ 92.62 $ (320e) & $ 93.43 $ (320e) & $ 69.93 $ (320e) & $ 80.50 $ (400e) & $ 75.90 $ (180e) & $ 63.64 $ (180e) \\
        AdamW      & $ 92.55 $ (320e) & $ 92.97 $ (320e) & $ 69.50 $ (320e) & $ 79.46 $ (400e) & $ 75.57 $ (180e) & $ 66.97 $ (180e) \\
        \midrule
        \rowcolor{green!20} \sassha  & $ \textbf{92.98} $ (160e) & $ \textbf{94.09} $ (160e) & $ \textbf{72.14} $ (160e) & $ \textbf{83.54} $ (200e) & $ \textbf{76.43} $ (90e) & $ \textbf{69.20} $ (90e) \\
        \bottomrule
        \end{tabular}
    }
    \label{tab:comp_fo_fair}
\end{table}

\section{Effectiveness of Stable Hessian Approximations in \sassha } 
\label{app:samsophia}

\begin{table}[ht!]
    \centering
    \caption{
    Results of Sophia-H with sharpness minimization.
    }
    \vskip 0.1in
    \resizebox{0.7\linewidth}{!}{
        \begin{tabular}{lcccc}
        \toprule
         & \multicolumn{2}{c}{CIFAR-10} 
         & \multicolumn{2}{c}{CIFAR-100}  \\
         \cmidrule(l{3pt}r{3pt}){2-3} \cmidrule(l{3pt}r{3pt}){4-5} 
         & \multicolumn{1}{c}{ ResNet-20 } & \multicolumn{1}{c}{ ResNet-32 }   & \multicolumn{1}{c}{ ResNet-32 }   & \multicolumn{1}{c}{ WRN-28-10}    \\
         
         \midrule
           
         SAM   & $ 92.85 _{\textcolor{black!60}{\pm 0.07} } $ 
         & $ 93.89 _{\textcolor{black!60}{\pm 0.13} } $ 
         & $ 71.99 _{\textcolor{black!60}{\pm 0.20} } $ 
         & $ 83.14 _{\textcolor{black!60}{\pm 0.13} } $ \\
              
         Sophia-H (with SAM) $_{\text{}}$ & $ 92.53 _{\textcolor{black!60}{\pm 0.39} } $ 
         & $  93.59 _{\textcolor{black!60}{\pm 0.31} } $ 
         & $ 71.31 _{\textcolor{black!60}{\pm 0.43} } $ 
         & $ 80.15 _{\textcolor{black!60}{\pm 0.35} } $  \\
        
         \midrule
        
         \rowcolor{green!20} \sassha    & $ \textbf{92.98} _{\textcolor{black!60}{\pm 0.05}} $ 
         & $ \textbf{94.09} _{\textcolor{black!60}{\pm 0.24} } $ 
         & $ \textbf{72.14} _{\textcolor{black!60}{\pm 0.16} } $ 
         & $\textbf{83.54} _{\textcolor{black!60}{\pm 0.08}}$ 
         \\
        
        \bottomrule
        \end{tabular}
    }
    \label{tab:im_cls_samsophia}
    
\end{table}

We demonstrate limited benefit from naively combining SAM with existing approximate second-order methods without the carefully designed stabilization strategies of \sassha.  
Precisely, we compare the validation accuracy of \sassha with a simple combination of SAM and Sophia, denoted as Sophia-H (with SAM).
We provide results in \cref{tab:im_cls_samsophia}.

We observe that Sophia-H (with SAM) performs worse than SAM, whereas \sassha outperforms both methods, validating the effectiveness of the design choices made in \sassha.
We attribute this to the reduced compatibility of Sophia-H with SAM compared to \sassha.
First, Sophia clipping destroys the relative scale between individual elements of the Hessian. 
This may be particularly problematic when using SAM, where the overall Hessian values tend to be small.
In such cases, even very small differences between Hessian elements may carry nontrivial curvature information.
However, Sophia clipping abruptly replaces small or negative Hessian values based on a predefined and fixed threshold criterion.
As a result, when the Hessian is generally small due to SAM, informative dimensions may fall below the threshold, removing potentially critical variations and thereby deteriorating the quality of preconditioning.
This situation also raises the sensitivity to hyperparameters like the clipping threshold and makes the optimization process more dependent on careful tuning.
Conversely, the stable Hessian approximation in \sassha, incorporating the absolute function and square rooting, preserves the relative scale among the Hessian entries by smoothly adjusting their magnitudes.

In addition, the use of sophia clipping results in Sophia partially performing signSGD over a subset of parameters \citep{sophia}, which may lead to suboptimal convergence in typical situations \cite{karimireddy2019error}.

\newpage

\section{Additional Language Modeling Experiments} \label{app:gpt2}

\begin{wraptable}{r}{0.35\linewidth}
    \vspace{-3em}
    \centering
    \caption{
    GPT2 pretraining results.
    \sassha achieves similar performance to state-of-the-art methods.}
    \vskip 0.1in
    \small
    \label{tab:gpt2_results}
    \resizebox{\linewidth}{!}{
    \begin{tabular}{lrr}
        \toprule
        Method               & \texttt{Loss} \hspace{0.05em} & \texttt{Perplexity} \\
        \midrule
        AdamW                           & 2.9622   &  19.353 \hspace{1em} \\
        Sophia-G                        & 2.9307   &  18.751 \hspace{1em} \\
        SAM $_{\text{AdamW}}$           & 2.9558   &  19.196 \hspace{1em} \\
        Sophia-G (with SAM)             & 2.9319   &  18.773 \hspace{1em} \\  
        \midrule
        \rowcolor{green!20} \sassha     & 2.9445   &  19.015 \hspace{1em} \\
         \bottomrule
    \end{tabular}
    }
\end{wraptable}

In this section, we provide additional language modeling experiments.
We train GPT2-small for 50k iterations on OpenWebText \citep{pile}, using various optimization methods for comparison.
For AdamW and Sophia-G, we adopt the best hyperparameter configurations reported by \citep{sophia}, and set $\rho = 0.1$ for all SAM-related methods.
Due to limited computational resources, all experiments were run with a single random seed.
The results are presented in \cref{tab:gpt2_results}. 
We observe that \sassha achieves performance comparable to related methods.

\section{Comparison with Advanced SAM Variants} \label{app:samvariants_vs_sassha}

Thus far, our primary focus has centered on validating the effectiveness of \sassha in the context of approximate second-order optimization.
While this remains the principal objective of our study, here we additionally compare \sassha with advanced SAM variants (\ie ASAM \citep{asam}, GSAM \citep{gsam}) to prove that \sassha is a sensible approach.
We also evaluate \gsassha (\sassha with surrogate gap guided sharpness from \citep{gsam}) for fair comparison.
The results are represented in \cref{tab:im_cls_samvariants}.
\begin{table}[ht!]
    \centering
    \caption{\sassha v.s. advanced SAM variants in Image classification.}
    \vskip 0.1in
    
    \resizebox{0.8\linewidth}{!}{
        \begin{tabular}{lcccccc}
        \toprule
         & \multicolumn{2}{c}{CIFAR-10} 
         & \multicolumn{2}{c}{CIFAR-100} 
         & \multicolumn{2}{c}{ImageNet} \\
         \cmidrule(l{3pt}r{3pt}){2-3} \cmidrule(l{3pt}r{3pt}){4-5} \cmidrule(l{3pt}r{3pt}){6-7}
           & \multicolumn{1}{c}{ ResNet-20 } & \multicolumn{1}{c}{ ResNet-32 } & \multicolumn{1}{c}{ ResNet-32 }  & \multicolumn{1}{c}{ WRN-28-10} & \multicolumn{1}{c}{ ResNet-50 } & \multicolumn{1}{c}{ ViT-s-32} \\ \midrule
           
        ASAM       & $ 92.96 _{\textcolor{black!60}{\pm 0.25}} $ 
        & $ 93.85 _{\textcolor{black!60}{\pm 0.15} } $ 
        & $ 72.02 _{\textcolor{black!60}{\pm 0.28} } $ 
        & $ 83.39_{\textcolor{black!60}{\pm 0.06} } $ 
        & $ 76.54 _{\textcolor{black!60}{\pm 0.15} } $ 
        & $ 68.26 _{\textcolor{black!60}{\pm 0.36}}$  \\
        
        GSAM       & $ 92.72 _{\textcolor{black!60}{\pm 0.39} } $ 
        & $ 93.76 _{\textcolor{black!60}{\pm 0.31} } $ 
        & $ 72.10 _{\textcolor{black!60}{\pm 0.43} } $ 
        & $ 83.21 _{\textcolor{black!60}{\pm 0.39} } $ 
        & $ 76.45 _{\textcolor{black!60}{\pm 0.22} } $ 
        & $ 69.60 _{\textcolor{black!60}{\pm 0.16} } $  \\
        
        \midrule
        
        \rowcolor{green!20} \sassha    & $ \textbf{92.98} _{\textcolor{black!60}{\pm 0.05}} $ 
        & $ 94.09 _{\textcolor{black!60}{\pm 0.24} } $ 
        & $ 72.14 _{\textcolor{black!60}{\pm 0.16} } $ 
        & $ 83.54 _{\textcolor{black!60}{\pm 0.08}}$ 
        & $ 76.43 _{\textcolor{black!60}{\pm 0.18} }$ 
        & $ 69.20 _{\textcolor{black!60}{\pm 0.30} }$ \\
        
        \rowcolor{green!20} \gsassha   & $ 92.94 _{\textcolor{black!60}{\pm 0.18}} $   
        & $ \textbf{94.15} _{\textcolor{black!60}{\pm 0.12} } $ 
        & $ \textbf{72.18} _{\textcolor{black!60}{\pm 0.52} } $ 
        & $ \textbf{83.56} _{\textcolor{black!60}{\pm 0.27} } $ 
        & $\textbf{76.66} _{\textcolor{black!60}{\pm 0.23}}$  
        & $\textbf{69.67} _{\textcolor{black!60}{\pm 0.14}}$ \\
        \bottomrule
        \end{tabular}
    }
    \label{tab:im_cls_samvariants}    
\end{table}

We find that \sassha is competitive with these advanced SAM variants.
However, we note clearly that those SAM variants require considerably more hyperparameter tuning to achieve generalization performance comparable to \sassha. For example, GSAM introduces an additional hyperparameter $\alpha$, demanding as much tuning effort as tuning $\rho$. 
Similarly, ASAM, as noted by its authors, typically necessitates exploring a broader $\rho$ range, as its appropriate value is approximately 10 times larger than that of SAM.
In our setup, tuning GSAM and ASAM involved $4.5\times \sim 15.75 \times$ and $3\times \sim 8 \times$ larger search grids compared to \sassha, respectively.
We provide detailed setup and hyperparameter search space below.

\textbf{Setup and Search space.} For ResNet, we use SGD as the base methods for ASAM and GSAM, while for ViT, AdamW with gradient clipping set to 1.0 serves as the base methods. 
For all models, typical cross entropy loss is used (not label-smoothing cross entropy), and the best learning rate and weight decay of the base methods are selected in experiments with ASAM and GSAM.
All algorithms are evaluated with constant $\rho$ (without scheduling). 
For learning rate schedule, we apply multi-step decay with a decay rate of 0.1 for ResNet on CIFAR, and use cosine learning rate decay with 8 warm-up epochs for ViTs.

\begin{table}[ht!]
\renewcommand{\arraystretch}{2}
\centering
\resizebox{0.8\textwidth}{!}{%
    \begin{tabular}{lccc}
    \toprule
    \rowcolor{lgray} 
      
    & ResNet/CIFAR
    & ResNet/ImageNet 
    & ViT/ImageNet \\
    \midrule
    \hspace{1em} $\rho$ \hspace{1em}  
    & $\{0.01, 0.05, 0.1, 0.15, 0.2, 0.25, 0.3, 0.35, 0.4 \} $
    & $\{0.01, 0.05, 0.1, 0.15, 0.2, 0.25, 0.3\}$
    & $\{0.1, 0.2, 0.3, 0.4, 0.5, 0.6 \}$\\
    \midrule
    \hspace{1em} $\alpha$ \hspace{1em}
    & $\{0.01, 0.05, 0.1, 0.15, 0.2, 0.25, 0.3 \} $
    & $ \{0.01, 0.05, 0.1, 0.15, 0.2 \}$
    & $\{0.1, 0.2, 0.3 \}$\\ \bottomrule
    \end{tabular}
    }
    \caption{Hyperparameter search space for GSAM and G-\sassha}
    \label{tab:hyper_gsam}
\end{table}

\newpage

\begin{table}[ht!]
\renewcommand{\arraystretch}{2}
\centering
\resizebox{0.8\textwidth}{!}{%
    \begin{tabular}{lcc}
    \toprule
    \rowcolor{lgray} 
      
    & ResNet/CIFAR
    & ImageNet  \\
    \midrule
    \hspace{1em} $\rho$ \hspace{1em}
    & $\Bigl\{ \substack{0.01, 0.05, 0.1, 0.15, 0.2, 0.25, 0.3, 0.4, 0.5, 0.6, 0.7, 0.8, 0.9, 1, \\ 1.1, 1.2, 1.3, 1.4, 1.5, 1.6, 1.7, 1.8, 1.9, 2, 2.5, 3, 3.5, 4, 4.5, 5, 5.5, 6} \Bigr\} $
    & $\{0.1, 0.15, 0.2, 0.3, 0.4, 0.5, 0.6, 0.7, 0.8, 0.9, 1, 1.5, 2 \}$\\ \bottomrule
    \end{tabular}
    }
    \caption{Hyperparameter search space for ASAM}
    \label{tab:hyper_gsam2}
\end{table}

\section{Additional Label Noise Experiments}
\label{app:add_label_noise}

\begin{table}[ht!]
    \centering
    \caption{Robustness to label noise. Here we measure the validation accuracy under various levels of label noise using ResNet-32 trained on CIFAR-100 and CIFAR-10. \sassha shows much robust performance under label noise.}
    \label{tab:noise_label_sassha}
    \vskip 0.1in
    \resizebox{\linewidth}{!}{%
    \begin{tabular}{lccccccccc}
        \toprule
        & \multicolumn{4}{c}{CIFAR-10} & \multicolumn{4}{c}{CIFAR-100} \\ 
        \cmidrule(l{3pt}r{3pt}){2-5} \cmidrule(l{3pt}r{3pt}){6-9} 
        Noise level & {0\%} & {20\%} & {40\%} & {60\%} & {0\%} & {20\%} & {40\%} & {60\%} \\ 
        \midrule
        SGD                 & 
        $ 92.69 _{\textcolor{black!60}{\pm 0.06}}$     &
        $ 89.91 _{\textcolor{black!60}{\pm 0.87}}$     &
        $ 87.26 _{\textcolor{black!60}{\pm 0.40}}$     &
        $ 82.72 _{\textcolor{black!60}{\pm 1.59}}$     &
        $ 69.32 _{\textcolor{black!60}{\pm 0.19}}$     &
        $ 62.18 _{\textcolor{black!60}{\pm 0.06}}$     & 
        $ 55.78 _{\textcolor{black!60}{\pm 0.55}}$     &  
        $ 45.53 _{\textcolor{black!60}{\pm 0.78}}$     \\
        
        SAM $_{\text{SGD}}$ & 
        $ 93.89 _{\textcolor{black!60}{\pm 0.13}}$     &
        $ 92.27 _{\textcolor{black!60}{\pm 0.14}}$     &
        $ 90.11 _{\textcolor{black!60}{\pm 0.25}}$     &
        $ 85.79 _{\textcolor{black!60}{\pm 0.30}}$     &
        $ 71.99 _{\textcolor{black!60}{\pm 0.20}}$     &
        $ 65.53 _{\textcolor{black!60}{\pm 0.11}}$     & 
        $ 61.20 _{\textcolor{black!60}{\pm 0.17}}$     &
        $ 51.93 _{\textcolor{black!60}{\pm 0.47}}$     \\

        AdaHessian         & 
        $ 92.48 _{\textcolor{black!60}{\pm 0.15}}$     &
        $ 90.11 _{\textcolor{black!60}{\pm 0.01}}$     & 
        $ 86.88 _{\textcolor{black!60}{\pm 0.04}}$     &
        $ 83.25 _{\textcolor{black!60}{\pm 0.01}}$     &
        $ 68.06 _{\textcolor{black!60}{\pm 0.22}}$     &
        $ 63.06 _{\textcolor{black!60}{\pm 0.25}}$     & 
        $ 58.37 _{\textcolor{black!60}{\pm 0.13}}$     & 
        $ 46.02 _{\textcolor{black!60}{\pm 1.96}}$     \\

        Sophia-H           & 
        $ 91.99 _{\textcolor{black!60}{\pm 0.08}}$     &
        $ 89.93 _{\textcolor{black!60}{\pm 0.01}}$     &
        $ 87.30 _{\textcolor{black!60}{\pm 0.51}}$     & 
        $ 82.78 _{\textcolor{black!60}{\pm 1.43}}$     &
        $ 67.76 _{\textcolor{black!60}{\pm 0.37}}$     &
        $ 62.34 _{\textcolor{black!60}{\pm 0.47}}$     & 
        $ 56.54 _{\textcolor{black!60}{\pm 0.28}}$     & 
        $ 45.37 _{\textcolor{black!60}{\pm 0.27}}$     \\

        Shampoo           &  
        $ 90.23 _{\textcolor{black!60}{\pm 0.83}}$     &
        $ 88.14 _{\textcolor{black!60}{\pm 0.29}}$     & 
        $ 85.15 _{\textcolor{black!60}{\pm 0.61}}$     & 
        $ 81.16 _{\textcolor{black!60}{\pm 0.30}}$     &
        $ 64.08 _{\textcolor{black!60}{\pm 0.46}}$     &
        $ 58.85 _{\textcolor{black!60}{\pm 0.66}}$     & 
        $ 53.82 _{\textcolor{black!60}{\pm 0.71}}$     & 
        $ 42.91 _{\textcolor{black!60}{\pm 0.99}}$    \\
        
        \midrule
        
        \rowcolor{green!20} \sassha         & 
        $ \textbf{94.09} _{\textcolor{black!60}{\pm 0.24}}$    &
        $ \textbf{92.49} _{\textcolor{black!60}{\pm 0.11}}$    &
        $ \textbf{90.29} _{\textcolor{black!60}{\pm 0.11}}$    &
        $ \textbf{86.50} _{\textcolor{black!60}{\pm 0.08}}$    &
        $ \textbf{72.14} _{\textcolor{black!60}{\pm 0.16}}$    &
        $ \textbf{66.78} _{\textcolor{black!60}{\pm 0.47}}$    &
        $ \textbf{61.97} _{\textcolor{black!60}{\pm 0.27}}$    &
        $ \textbf{53.98} _{\textcolor{black!60}{\pm 0.57}}$    \\ 
        
        \bottomrule
    \end{tabular}}
\end{table}

\section{\msassha: Efficient Perturbation} \label{app:msassha}

Having explored techniques to reduce the computational cost of second-order methods, here we consider employing techniques to alleviate the additional gradient computation in sharpness-minimization.
Prior works have suggested different ways to reduce this computational overhead including infrequent computations \citep{looksam}, use of sparse perturbations \citep{mi2022make}, or computing with selective weight and data \citep{esam}.
In particular, we employ the approaches of \citet{becker2024momentum}, which uses the normalized negative momentum as the perturbation:
\begin{equation}\label{eq:lsam-perturb}
    \epsilon^\star_t = \rho\frac{m_{t-1}}{\|m_{t-1}\|_2},
\end{equation}
which entirely eliminates the need for additional gradient computation with similar generalization improvement as the original SAM.
We call this low-computation alternative as \msassha and evaluate this across vision, language, and label noise tasks, as we did in the main sections. 
The results are presented in \cref{tab:im_cls_results_msassha,tab:language_msassha,tab:noise_label_msassha}, respectively. 

Despite having a computational cost comparable to first-order methods like SGD and Adam, and significantly lower than approximate second-order methods, \msassha demonstrates superior performance over both first-order and second-order approaches.
In image classification, \msassha proves more effective than the best-performing approximate second-order methods by 2\% on CIFAR-100 with ResNet-32 and by 2.5\% with ResNet-50, while also exceeding AdamW by approximately 1.6\% on ViT.
For language pretraining, it attains a test perplexity that is 22 points lower than the second-best performinig Sophia-H  and outperforms AdamW in nearly all language tasks.
Lastly, \msassha surpasses other methods across all noise levels, proving highly resilient in the presence of extreme label noise.
These results reaffirm the effectiveness and consistency of our well-engineered design choices, which enable the stable integration of efficient sharpness minimization into second-order optimization while retaining its benefits.

\begin{table*}[ht!]
    \vspace{-1em}
    \centering
    \caption{\msassha $\text{v.s.}$ baselines in image classification. \msassha shows superior performance.}
    \vskip 0.1in
    \resizebox{\linewidth}{!}{
        \begin{tabular}{clcccccc}
        \toprule
         & 
         & \multicolumn{2}{c}{CIFAR-10} 
         & \multicolumn{2}{c}{CIFAR-100} 
         & \multicolumn{2}{c}{ImageNet} \\
         \cmidrule(l{3pt}r{3pt}){3-4} \cmidrule(l{3pt}r{3pt}){5-6} \cmidrule(l{3pt}r{3pt}){7-8}
         \multicolumn{1}{c}{ Category }
         & \multicolumn{1}{c}{ Method }
         & \multicolumn{1}{c}{ ResNet-20 } 
         & \multicolumn{1}{c}{ ResNet-32 } 
         & \multicolumn{1}{c}{ ResNet-32 }  
         & \multicolumn{1}{c}{ WRN-28-10} 
         & \multicolumn{1}{c}{ ResNet-50 } 
         & \multicolumn{1}{c}{ ViT-s-32} \\ \midrule

       \multirow{2}{*}{First-order}  & SGD       & $ 92.03 _{ \textcolor{black!60}{\pm 0.32} } $ 
        & $ 92.69 _{\textcolor{black!60}{\pm 0.06} } $ 
        & $ 69.32 _{\textcolor{black!60}{\pm 0.19} } $ 
        & $ 80.06 _{\textcolor{black!60}{\pm 0.15} } $ 
        & $ 75.58 _{\textcolor{black!60}{\pm 0.05} } $ 
        & $62.90 _{\textcolor{black!60}{\pm 0.36}}$  \\

        & AdamW      & $ 92.04 _{\textcolor{black!60}{\pm 0.11} } $ & $ 92.42 _{\textcolor{black!60}{\pm 0.13} } $ & $ 68.78 _{\textcolor{black!60}{\pm 0.22} } $  & $ 79.09 _{\textcolor{black!60}{\pm 0.35} } $ & $ 75.38 _{\textcolor{black!60}{\pm 0.08} } $ & $ 66.46 _{\textcolor{black!60}{\pm 0.15} } $ \\
        
        

        \midrule
        
        \multirow{4.5}{*}{Second-order}                   &
        AdaHessian                                      &
        $ 92.00 _{\textcolor{black!60}{\pm 0.17} } $    &
        $ 92.48 _{\textcolor{black!60}{\pm 0.15} } $    &
        $ 68.06 _{\textcolor{black!60}{\pm 0.22} } $    &
        $ 76.92 _{\textcolor{black!60}{\pm 0.26} } $    &
        $ 73.64 _{\textcolor{black!60}{\pm 0.16} } $    &
        $ 66.42 _{\textcolor{black!60}{\pm 0.23} } $   \\
        
        & Sophia-H   &
        $ 91.81 _{\textcolor{black!60}{\pm 0.27} } $    &
        $ 91.99 _{\textcolor{black!60}{\pm 0.08} } $    &
        $ 67.76 _{\textcolor{black!60}{\pm 0.37} } $    &
        $ 79.35 _{\textcolor{black!60}{\pm 0.24} } $    &
        $ 72.06 _{\textcolor{black!60}{\pm 0.49} } $    &
        $62.44 _{\textcolor{black!60}{\pm 0.36}}$      \\
        
        & Shampoo    &
        $88.55 _ {\textcolor{black!60}{\pm 0.83}}$   &
        $90.23 _{\textcolor{black!60}{\pm 0.24}}$    &
        $64.08 _{\textcolor{black!60}{\pm 0.46}}$    &
        $74.06 _{\textcolor{black!60}{\pm 1.28}}$    &
        $*$                                          &
        $*$  \\
        
        \cmidrule(l{3pt}r{3pt}){2-8}
        
        \rowcolor{green!20} \cellcolor{white} &  
        \msassha   &
        $ \textbf{92.36} _{\textcolor{black!60}{\pm 0.23} } $   
        & $ \textbf{93.18} _{\textcolor{black!60}{\pm 0.30} } $ 
        & $ \textbf{70.93} _{\textcolor{black!60}{\pm 0.21} } $ 
        & $ \textbf{81.53} _{\textcolor{black!60}{\pm 0.27} } $ 
        & $ \textbf{76.00} _{\textcolor{black!60}{\pm 0.04}}$  
        & $ \textbf{68.04} _{\textcolor{black!60}{\pm 0.14}}$ \\
        
        \bottomrule
        \end{tabular}
    }
    \label{tab:im_cls_results_msassha}
\end{table*}

\begin{table}[ht!]
    \centering
    \caption{
    \msassha $\text{v.s.}$ baselines in language tasks. For pretraining, \msassha achieves the lowest perplexity among all methods. For finetuning, \msassha performs better than AdamW and compares competitively with Sophia-H.
    }
    \vskip 0.1in
    \resizebox{\linewidth}{!}{
        \begin{tabular}{lc}
            \toprule
             & \multicolumn{1}{c}{\textbf{Pretrain} / GPT1-mini} \\
             \cmidrule(l{3pt}r{3pt}){2-2}
             & Wikitext-2 \\
             & \texttt{Perplexity}\\
            \midrule
            
            AdamW & $ 175.06 $ \\
            AdaHessian & $ 407.69 $ \\
            Sophia-H & $ 157.60 $ \\
            \midrule \rowcolor{green!20} \msassha &
            $ \textbf{125.01} $ \\
            
            \bottomrule
        \end{tabular}
        \begin{tabular}{|ccccccc}
            \toprule
                         \multicolumn{7}{|c}{\textbf{Finetune} / SqeezeBERT} \\
                         \cmidrule(l{3pt}r{3pt}){1-7}
                         SST-2 &  MRPC & STS-B & QQP & MNLI & QNLI & RTE \\
             \texttt{Acc} &  \texttt{Acc / F1}  & \texttt{S/P corr.} & \texttt{F1 / Acc} & \texttt{mat/m.mat} &  \texttt{Acc} &  \texttt{Acc} \\
            \midrule
            
            $ 90.29 _{\textcolor{black!60}{\pm 0.52}} $ 
            & $ 84.56 _{ \textcolor{black!60}{\pm 0.25} } $ / $ 88.99 _{\textcolor{black!60}{\pm 0.11}} $ 
            & $ 88.34 _{\textcolor{black!60}{\pm 0.15}} $ / $ 88.48 _{\textcolor{black!60}{\pm 0.20}} $ 
            & $ 89.92 _{\textcolor{black!60}{\pm 0.05}} $ / $ 86.58 _{\textcolor{black!60}{\pm 0.11}} $ 
            & $ 81.22 _{\textcolor{black!60}{\pm 0.07}} $ / $ 82.26 _{\textcolor{black!60}{\pm 0.05}} $ 
            & $ 89.93 _{\textcolor{black!60}{\pm 0.14}} $ 
            & $ 68.95 _{\textcolor{black!60}{\pm 0.72}} $  \\
    
    
            $ 89.64 _{\textcolor{black!60}{\pm 0.13}} $ 
            & $ 79.74 _{\textcolor{black!60}{\pm 4.00}} $ / $ 85.26 _{\textcolor{black!60}{\pm 3.50}} $ 
            & $ 86.08 _{\textcolor{black!60}{\pm 4.04}} $ / $ 86.46 _{\textcolor{black!60}{\pm 4.06}} $ 
            & $ 90.37 _{\textcolor{black!60}{\pm 0.05}} $ / $ 87.07 _{\textcolor{black!60}{\pm 0.05}} $ 
            & $ 81.33 _{\textcolor{black!60}{\pm 0.17}} $ / $ 82.08 _{\textcolor{black!60}{\pm 0.02}} $ 
            & $ 89.94 _{\textcolor{black!60}{\pm 0.12}} $ 
            & $ \textbf{71.00} _{\textcolor{black!60}{\pm 1.04}} $ \\
            
            $ \textbf{90.44} _{\textcolor{black!60}{\pm 0.46}} $ 
            & $ 85.78 _{\textcolor{black!60}{\pm 1.07}} $ / $ 89.90 _{\textcolor{black!60}{\pm 0.82}} $ 
            & $ 88.17 _{\textcolor{black!60}{\pm 1.07}} $ / $ \textbf{88.53} _{\textcolor{black!60}{\pm 1.13}} $ 
            & $ 90.70 _{\textcolor{black!60}{\pm 0.04}} $ / $ 87.60 _{\textcolor{black!60}{\pm 0.06}} $ 
            & $ \textbf{81.77} _{\textcolor{black!60}{\pm 0.18}} $ / $ \textbf{82.36} _{\textcolor{black!60}{\pm 0.22}} $ 
            & $ \textbf{90.12}_{\textcolor{black!60}{\pm 0.14}} $ 
            & $ 70.76 _{\textcolor{black!60}{\pm 1.44}} $  \\
            
            \midrule
            \rowcolor{green!20} 
             $ 90.332 _{\pm 0.88} $ 
             & $ \textbf{87.092} _{\pm 1.98} $ / $ \textbf{90.599} _{\pm 1.51} $ 
             & $ \textbf{88.37} _{\pm 0.04} $ / $  88.46 _{\pm 0.07} $ 
             & $ \textbf{90.78} _{\pm 0.05} $ / $ \textbf{87.61} _{\pm 0.07} $ 
             & $ 81.42 _{\pm 0.19} $ / $ 81.94 _{\pm 0.09} $ 
             & $ 89.84 _{\pm 0.22} $ 
             & $ 70.40 _{\pm 0.96} $ \\

            
            \bottomrule
        \end{tabular}
    }
    \label{tab:language_msassha}
\end{table}

\begin{table}[ht!]
    \vspace{-1em}
    \centering
    \caption{Robustness to label noise. Here we measure the validation accuracy under various levels of label noise using ResNet-32 trained on CIFAR-100 and CIFAR-10. \msassha shows much robust performance under label noise.}
    \label{tab:noise_label_msassha}
    \resizebox{\linewidth}{!}{%
    \begin{tabular}{lccccccccc}
        \toprule
        & \multicolumn{4}{c}{CIFAR-10} & \multicolumn{4}{c}{CIFAR-100} \\ 
        \cmidrule(l{3pt}r{3pt}){2-5} \cmidrule(l{3pt}r{3pt}){6-9} 
        Noise level & {0\%} & {20\%} & {40\%} & {60\%} & {0\%} & {20\%} & {40\%} & {60\%} \\ 
        \midrule
        SGD                 & 
        $ 92.69 _{\textcolor{black!60}{\pm 0.06}}$     &
        $ 89.91 _{\textcolor{black!60}{\pm 0.87}}$     &
        $ 87.26 _{\textcolor{black!60}{\pm 0.40}}$     &
        $ 82.72 _{\textcolor{black!60}{\pm 1.59}}$     &
        $ 69.32 _{\textcolor{black!60}{\pm 0.19}}$     &
        $ 62.18 _{\textcolor{black!60}{\pm 0.06}}$     & 
        $ 55.78 _{\textcolor{black!60}{\pm 0.55}}$     &  
        $ 45.53 _{\textcolor{black!60}{\pm 0.78}}$     \\
        
        
        AdaHessian         & 
        $ 92.48 _{\textcolor{black!60}{\pm 0.15}}$     &
        $ 90.11 _{\textcolor{black!60}{\pm 0.01}}$     & 
        $ 86.88 _{\textcolor{black!60}{\pm 0.04}}$     &
        $ 83.25 _{\textcolor{black!60}{\pm 0.01}}$     &
        $ 68.06 _{\textcolor{black!60}{\pm 0.22}}$     &
        $ 63.06 _{\textcolor{black!60}{\pm 0.25}}$     & 
        $ 58.37 _{\textcolor{black!60}{\pm 0.13}}$     & 
        $ 46.02 _{\textcolor{black!60}{\pm 1.96}}$     \\

        Sophia-H           & 
        $ 91.99 _{\textcolor{black!60}{\pm 0.08}}$     &
        $ 89.93 _{\textcolor{black!60}{\pm 0.01}}$     &
        $ 87.30 _{\textcolor{black!60}{\pm 0.51}}$     & 
        $ 82.78 _{\textcolor{black!60}{\pm 1.43}}$     &
        $ 67.76 _{\textcolor{black!60}{\pm 0.37}}$     &
        $ 62.34 _{\textcolor{black!60}{\pm 0.47}}$     & 
        $ 56.54 _{\textcolor{black!60}{\pm 0.28}}$     & 
        $ 45.37 _{\textcolor{black!60}{\pm 0.27}}$     \\

        Shampoo           &  
        $ 90.23 _{\textcolor{black!60}{\pm 0.83}}$     &
        $ 88.14 _{\textcolor{black!60}{\pm 0.29}}$     & 
        $ 85.15 _{\textcolor{black!60}{\pm 0.61}}$     & 
        $ 81.16 _{\textcolor{black!60}{\pm 0.30}}$     &
        $ 64.08 _{\textcolor{black!60}{\pm 0.46}}$     &
        $ 58.85 _{\textcolor{black!60}{\pm 0.66}}$     & 
        $ 53.82 _{\textcolor{black!60}{\pm 0.71}}$     & 
        $ 42.91 _{\textcolor{black!60}{\pm 0.99}}$    \\
        
        \midrule
        
        
        \rowcolor{green!20} \msassha        &  
        $ \textbf{93.18} _{\textcolor{black!60}{\pm 0.23}}$    &
        $ \textbf{91.27} _{\textcolor{black!60}{\pm 0.31}}$    &
        $ \textbf{88.85} _{\textcolor{black!60}{\pm 0.31}}$    &
        $ \textbf{85.17} _{\textcolor{black!60}{\pm 0.24}}$    &
        $ \textbf{70.93} _{\textcolor{black!60}{\pm 0.21}}$    &
        $ \textbf{66.10} _{\textcolor{black!60}{\pm 0.26}}$    &
        $ \textbf{61.13} _{\textcolor{black!60}{\pm 0.28}}$    &
        $ \textbf{52.45} _{\textcolor{black!60}{\pm 0.34}}$    \\    
        \bottomrule
    \end{tabular}}
\end{table}


\end{document}